\title{Truth Set Algebra: A New Way to Prove Undefinability}
\newcommand{\F}{{\sf F}}
\newcommand{\U}{{\sf U}}
\newcommand{\X}{{\sf X}}
\newcommand{\W}{{\sf W}}
\newcommand{\limp}{\to_{\text{\tiny \L}}}
\newcommand{\kimp}{\to_{\text{\tiny K}}}
\renewcommand{\phi}{\varphi}
\renewcommand{\[}{\llbracket}
\renewcommand{\]}{\rrbracket}
\newtheorem{definition}{Definition}
\newtheorem{lemma}{Lemma}
\newtheorem{theorem}{Theorem}
\newenvironment{proof}{\noindent{\sc Proof.}}{\qed}
\newcommand{\qed}{\hfill $\boxtimes\hspace{2mm}$}% \linebreak}
\newenvironment{proof-of-claim}{\noindent{\em Proof of Claim.}}{\qed}
\author{Sophia Knight\\
Department of Computer Science\\
University of Minnesota Duluth\\ 
the United States\\
\\
Pavel Naumov, Qi Shi, and Vigasan Suntharraj\\
Electronics and Computer Science\\
University of Southampton\\
the United Kingdom
}
\begin{document}

\maketitle

\begin{abstract}
The article proposes a new technique for proving the undefinability of logical connectives through each other and illustrates the technique with several examples. Some of the obtained results are new proofs of the existing theorems, others are original to this work.
\end{abstract}

\section{Introduction}

\nocite{m98}

Studying the definability (expressibility) of logical connectives in terms of one another has a long history in logic. Proving the definability of one connective through another is usually done by providing an explicit formula that expresses one connective through others. Once such a formula is found, proving definability is usually a straightforward exercise.  Proving {\em undefinability} is significantly harder and usually requires sophisticated techniques. Different domain-specific techniques have been proposed for various logical systems. Among them, the best-known is the {\em bisimulation} method for modal logics~\cite{vvs06ups,bc18sl,bv07handbook,f22jpl,no23synthese,ny21jair,dn21tocl,ny21aaai,nt21ijcai}. It is not clear how bisimulation can be applied to non-modal logics where completely different methods have been proposed~\cite{m39jsl,w38wm}. In addition, even for modal logics, some proofs of undefinability use non-bisimulation methods~\cite{m02entcs,l95ipl}.

In this article, we propose a new technique for proving the undefinability of logical connectives which is applicable to a wide range of settings. The technique consists in defining the ``truth set'' of a formula and studying the patterns of these truth sets obtainable through the given connectives. The exact definition of ``truth set'' varies depending on the logical system. For example, in the context of definability of Boolean connectives through each other, the truth set is defined as a set of {\em valuations} that satisfy a given formula. In the context of modal logics, the truth set is the set of {\em worlds} of a fixed given Kripke model in which the formula is true. In the context of three-valued logics, the ``truth set'' is a {\em fuzzy} set of valuations.

We illustrate this technique on the examples from Boolean, three-valued, intuitionistic, and temporal logics. We have chosen these specific examples to make the presentation accessible to a broader logical audience: we assume that most logicians are familiar with these logical systems. 

We use the Boolean logic example to introduce the basic idea behind our technique. We are not aware of any published work containing the undefinability result in that example, but it is so simple that we assume that somebody has observed it before. Our temporal logic and intuitionistic logic examples reprove known results using the newly proposed technique. We discuss the related literature after we present these results. Our 3-valued logic results are original to this article.

\section{Classical Propositional Logic}\label{Classical Propositional Logic section}

%https://www.jstor.org/stable/pdf/2268715.pdf?refreqid=excelsior%3A3233e03ba8f1852a99a53b19632d8c86&ab_segments=&origin=&acceptTC=1

%https://www.cambridge.org/core/services/aop-cambridge-core/content/view/14DC471A0896709E8EF75AFD6598FAB2/S0022481200118298a.pdf/the-independence-of-connectives.pdf

% On the independence of the fundamental operations of the algebra of species.
% J Okee - Notre Dame Journal of Formal Logic, 1976 - projecteuclid.org
% https://projecteuclid.org/journalArticle/Download?urlId=10.1305%2Fndjfl%2F1093887724

This section illustrates our technique using a simple undefinability result in propositional logic. In the rest of the article, we assume a fixed nonempty set of propositional variables. Consider language $\Phi_1$  defined by the following grammar:
$$
\phi := p\;|\;\neg\phi\;|\;\phi\wedge\phi\;|\;\phi\vee\phi\;|\;\phi\to\phi,
$$
where $p$ is a propositional variable.
As usual, we assume that constant $\top$ is defined as $p\to p$ for some propositional variable $p$ and constant $\bot$ is defined as $\neg\top$.
There are many well-known definability results in propositional logic:
\begin{align*}
    \phi\wedge \psi &\equiv \neg(\neg \phi\vee\neg \psi),\\
    \phi\wedge \psi &\equiv \neg(\phi\to\neg \psi),\\
    \phi\vee \psi &\equiv \neg(\neg \phi\wedge\neg \psi),\\
    \phi\vee \psi &\equiv \neg\phi\to\psi,\\
    \phi\to\psi &\equiv \neg\phi\vee\psi,\\
    \phi\to\psi &\equiv \neg(\phi\wedge\neg\psi).
\end{align*}
However, it is perhaps less known that disjunction can be defined through implication alone without the negation:
$$\phi\vee\psi\equiv (\phi\to\psi)\to\psi.$$
The last fact and the well-known symmetry between disjunction and conjunction in propositional logic naturally lead to the question of whether conjunction can be defined solely through implication. Perhaps surprisingly, the answer is negative and we prove this as our first example. 

Before formally stating the result, we introduce several auxiliary notions. First, a valuation is an arbitrary assignment of Boolean values to propositional variables. Second, for any formula $\phi\in\Phi_1$, by $\[\phi\]$ we denote the set of all valuations that satisfy formula $\phi$. We refer to set $\[\phi\]$ as the ``truth set'' of formula $\phi\in\Phi_1$. Finally, we define the semantic equivalence of formulae:
\begin{definition}\label{sem eq Boolean}
    Propositional formulae $\phi,\psi\in\Phi_1$ are semantically equivalent if $\[\phi\]=\[\psi\]$. 
\end{definition}
Next is our first undefinability result. 
\begin{theorem}[undefinability]\label{1-may-d}
The formula $p\wedge q$ is not semantically equivalent to any formula in language $\Phi_1$ containing only connectives $\vee$ and $\to$.
\end{theorem}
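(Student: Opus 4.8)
The plan is to pin down one feature of truth sets that is preserved by $\vee$ and $\to$ but fails for $\[p\wedge q\]$. The feature I will use is: \emph{the truth set of every formula built from propositional variables using only the connectives $\vee$ and $\to$ contains the truth set $\[r\]$ of at least one propositional variable $r$.} Granting this, Theorem~\ref{1-may-d} follows in a couple of lines. Suppose some such $\phi$ were semantically equivalent to $p\wedge q$, and pick a variable $r$ with $\[r\]\subseteq\[\phi\]=\[p\wedge q\]$. Since $\[p\wedge q\]\subseteq\[p\]$ and $\[p\wedge q\]\subseteq\[q\]$, we obtain $\[r\]\subseteq\[p\]$ and $\[r\]\subseteq\[q\]$. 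But $\[r\]\subseteq\[p\]$ can hold only if $r=p$: when $r\ne p$, the valuation sending $r$ to $1$ and $p$ to $0$ belongs to $\[r\]$ but not to $\[p\]$. Symmetrically $\[r\]\subseteq\[q\]$ forces $r=q$, contradicting the fact that $p$ and $q$ are distinct variables.

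For the claim itself I would argue by induction on the formula, proving the slightly stronger statement that the witness variable can be named outright. Put $w(p)=p$ for a variable $p$, $w(\phi\vee\psi)=w(\phi)$, and $w(\phi\to\psi)=w(\psi)$, and prove $\[w(\phi)\]\subseteq\[\phi\]$ by induction. The base case is the trivial inclusion $\[p\]\subseteq\[p\]$. For $\phi\vee\psi$, the induction hypothesis for $\phi$ gives $\[w(\phi\vee\psi)\]=\[w(\phi)\]\subseteq\[\phi\]\subseteq\[\phi\]\cup\[\psi\]=\[\phi\vee\psi\]$. For $\phi\to\psi$, the induction hypothesis for $\psi$ gives $\[w(\phi\to\psi)\]=\[w(\psi)\]\subseteq\[\psi\]\subseteq\[\phi\to\psi\]$, the last step because any valuation satisfying the consequent of an implication satisfies the implication.

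The genuine difficulty here is not the proof but the choice of invariant; once the right one is in hand, everything above is routine. Several tempting candidates do not survive, essentially because $\to$ is not monotone: truth sets of $\{\vee,\to\}$-formulas need not be upward closed (look at $\[p\to q\]$), need not be closed under intersection, and need not be closed under the coordinatewise join of valuations. The observation that rescues the argument is that all the non-monotone behaviour of an implication is confined to its antecedent, and the antecedent plays no role in making the implication true once the consequent is true. So if one descends through the consequent of every implication and the left disjunct of every disjunction, one lands on a variable whose truth forces the truth of the whole formula, namely $w(\phi)$ — and that is precisely the content of the lemma.
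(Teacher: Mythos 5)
Your proof is correct, but it follows a genuinely different route from the paper. The paper's argument is the prototype of its ``truth set algebra'' method: it fixes the two variables $p,q$, exhibits the six truth sets $\[p\],\[q\],\[p\vee q\],\[p\to q\],\[q\to p\],\[\top\]$, verifies by an exhaustive $6\times 6$ table that this family is closed under $\to$ (and under $\vee$), concludes by induction that every $\{\vee,\to\}$-formula has its truth set in the family, and observes that $\[p\wedge q\]$ is not in it. You instead isolate a single invariant -- every $\{\vee,\to\}$-formula $\phi$ has a variable $w(\phi)$ with $\[w(\phi)\]\subseteq\[\phi\]$ -- and prove it by a three-line structural induction ($w$ descends through left disjuncts and consequents), then derive the theorem because $\[r\]\subseteq\[p\wedge q\]$ would force $r=p$ and $r=q$ simultaneously. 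Your version buys several things: no case enumeration, no need for the ``without loss of generality only variables $p$ and $q$'' reduction (it handles formulas with arbitrary extra variables directly), and it even covers $\vee$ explicitly, whereas the paper's Lemma~\ref{1-may-b} as stated mentions only $\to$ and tacitly relies on the family also being closed under union. What the paper's heavier computation buys is generality of method: the closed-family-of-truth-sets template is exactly what is reused for the temporal, intuitionistic, and three-valued results, where no clean single invariant like yours is available; your argument is sharper here but more bespoke.
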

Because the formula $p\wedge q$ contains only propositional variables $p$ and $q$, without loss of generality, we can assume the language $\Phi_1$ contains only propositional variables $p$ and $q$. As a first step towards the proof, we introduce a way to visualise the truth set of any formula in language $\Phi_1$ using ``diagrams''. As an example, the diagram for the truth set $\[p\wedge q\]$ is depicted in Figure~\ref{conjunction figure}. 
\begin{figure}[ht]
\begin{center}
\vspace{0mm}
\scalebox{0.6}{\includegraphics{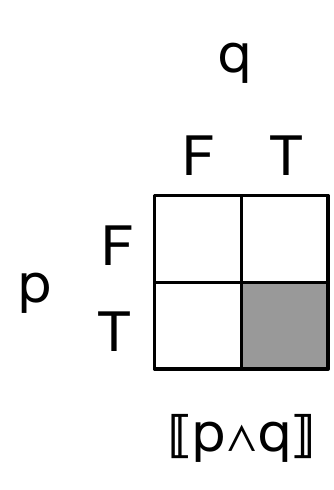}}
%\vspace{-1mm}
%\footnotesize
\caption{Truth set diagram.}\label{conjunction figure}
%\vspace{-3mm}
\end{center}
%\vspace{-3mm}
\end{figure}
In general, a diagram is a $2\times 2$ table whose cells represent valuations (mappings of the set $\{p,q\}$ into Boolean values). 
In the diagram, the cells representing elements of the given truth set are shaded grey. 
In other words, each element of the truth set of formula $\phi$ represents a valuation under which formula $\phi$ is true.
As another example, the diagrams at the top of Figure~\ref{intro figure} depict the truth sets $\[p\]$, $\[q\]$, $\[p\vee q\]$, $\[p\to q\]$, $\[q\to p\]$, and $\[\top\]$.

\begin{figure}[ht]
\begin{center}
\vspace{0mm}
\scalebox{0.53}{\includegraphics{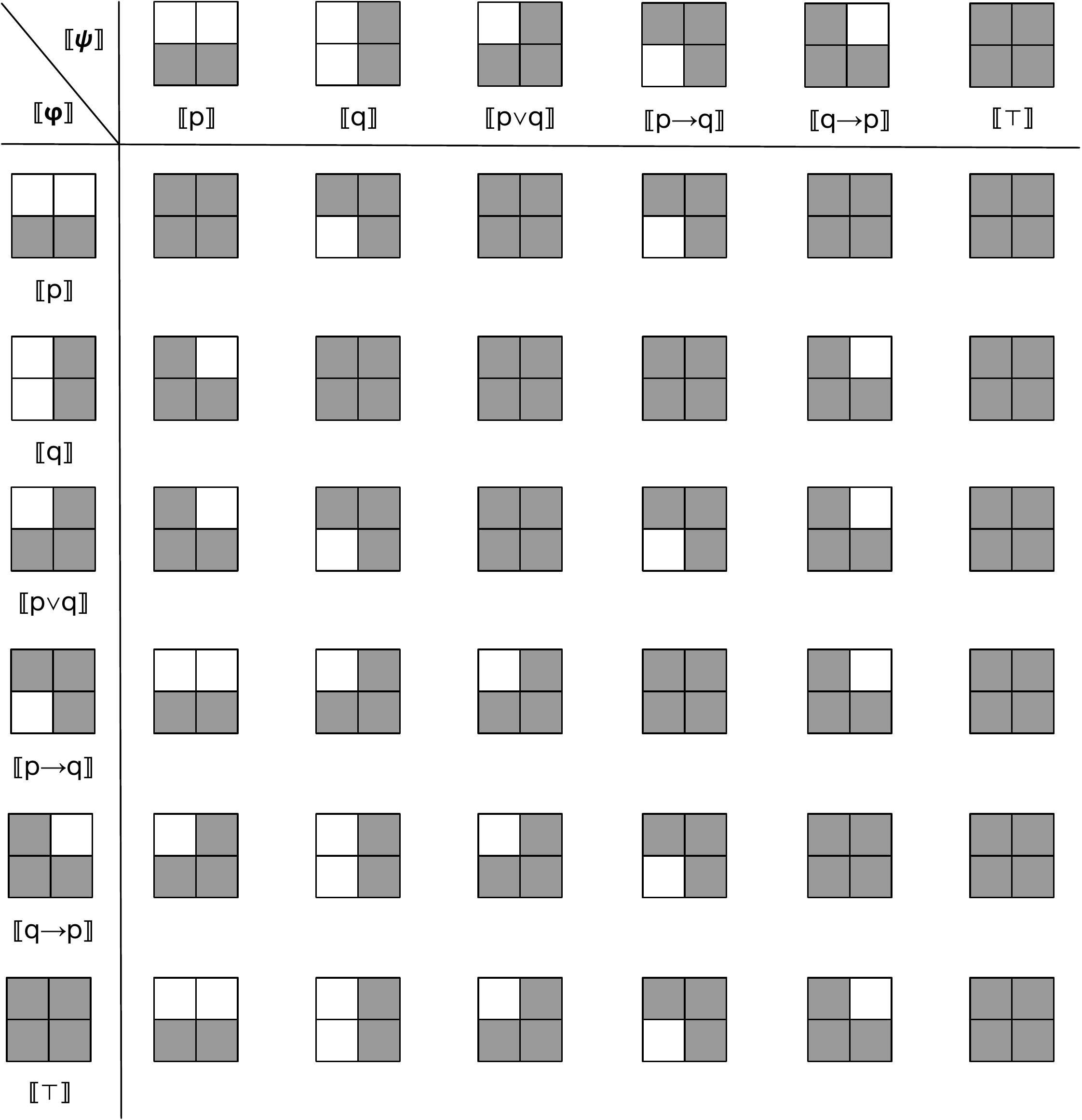}}
%\vspace{-1mm}
%\footnotesize
\caption{Truth set $\[\phi\to \psi\]$ for different combinations of truth sets $\[\phi\]$ and $\[\psi\]$.}\label{intro figure}
%\vspace{-3mm}
\end{center}
%\vspace{-3mm}
\end{figure}

% \begin{definition}
% $X=\{\[p\],\[q\],\[p\vee q\], \[p\to q\], \[q\to p\], \[\top\]\}$
% \end{definition}

% \begin{lemma}
% For any propositional formulae $\phi$ and $\psi$, if $\[\phi\]\in X$ and $\[\psi\]\in X$, then $\[\phi\vee\psi\]\in X$.
% \end{lemma}

The next lemma is the key step in our technique.
\begin{lemma}\label{1-may-a}
$\[\phi\to\psi\]\in \{\[p\],\[q\],\[p\vee q\], \[p\to q\], \[q\to p\], \[\top\]\}$ for any formulae $\phi,\psi\in\Phi_1$ such that $\[\phi\],\[\psi\]\in \{\[p\],\[q\],\[p\vee q\], \[p\to q\], \[q\to p\], \[\top\]\}$.
\end{lemma}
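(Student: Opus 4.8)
The plan is to reduce the lemma to a purely set-theoretic closure statement about a six-element family of subsets of the (four-element) set $\mathcal V$ of all valuations of $\{p,q\}$. First I would record the standard fact that the truth set of an implication is completely determined by the truth sets of its two components: a valuation $v$ satisfies $\phi\to\psi$ exactly when $v\notin\[\phi\]$ or $v\in\[\psi\]$, so $\[\phi\to\psi\]=(\mathcal V\setminus\[\phi\])\cup\[\psi\]$. In particular the statement of the lemma is well posed (the left-hand side only depends on $\[\phi\]$ and $\[\psi\]$), and it amounts to saying that the family $\mathcal S=\{\[p\],\[q\],\[p\vee q\],\[p\to q\],\[q\to p\],\[\top\]\}$ is closed under the operation $(A,B)\mapsto(\mathcal V\setminus A)\cup B$.

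The next step is to find a uniform description of membership in $\mathcal S$. Denoting a valuation by the pair of Boolean values it assigns to $p$ and $q$, I would observe --- this is precisely the information displayed in the six diagrams at the top of Figure~\ref{intro figure} --- that a set $X\subseteq\mathcal V$ belongs to $\mathcal S$ if and only if $(1,1)\in X$ and $X$ contains at least one of $(0,1),(1,0)$. A one-line count ($3\cdot 2=6$) shows that exactly six subsets of $\mathcal V$ meet this condition, so it characterises $\mathcal S$ exactly; the two subsets of $\mathcal V$ containing $(1,1)$ that fail it are $\{(1,1)\}=\[p\wedge q\]$ and $\{(0,0),(1,1)\}=\[p\leftrightarrow q\]$.

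With this characterisation in hand the closure claim is immediate. Suppose $\[\phi\],\[\psi\]\in\mathcal S$ and set $Y=(\mathcal V\setminus\[\phi\])\cup\[\psi\]$. Since $\[\psi\]\in\mathcal S$ we have $(1,1)\in\[\psi\]\subseteq Y$. For the second clause: if $\[\phi\]$ omits at least one of $(0,1),(1,0)$, then that valuation lies in $\mathcal V\setminus\[\phi\]\subseteq Y$; and if $\[\phi\]$ contains both of them, then --- since $\[\psi\]\in\mathcal S$ already contains one of $(0,1),(1,0)$ --- that valuation lies in $\[\psi\]\subseteq Y$. Either way $Y$ satisfies the characterising condition, hence $Y\in\mathcal S$, which is what we need.

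I do not expect a genuine obstacle here. The only points that need care are checking that the proposed two-clause condition singles out exactly the six sets of $\mathcal S$ (equivalently, reading off the six small diagrams correctly) and the preliminary remark that $\[\phi\to\psi\]$ depends only on $\[\phi\]$ and $\[\psi\]$. A more pedestrian alternative would be to tabulate $(\mathcal V\setminus A)\cup B$ for all $36$ ordered pairs $(A,B)$ drawn from $\mathcal S$ --- essentially the content of Figure~\ref{intro figure} --- but the conceptual argument above avoids that case analysis.
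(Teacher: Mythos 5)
Your proof is correct, and it takes a genuinely different route from the paper. The paper proves this lemma by pure enumeration: it exhibits the truth set $\[\phi\to\psi\]$ for each of the $6\times 6=36$ ordered pairs of possible truth sets $\[\phi\]$, $\[\psi\]$, and the whole content of the proof is the table in Figure~\ref{intro figure}. You instead isolate a uniform membership criterion --- $X$ belongs to the family $\{\[p\],\[q\],\[p\vee q\],\[p\to q\],\[q\to p\],\[\top\]\}$ iff $X$ contains the valuation $(1,1)$ and at least one of $(0,1),(1,0)$ --- verify by a count that this criterion picks out exactly those six sets, and then show in three lines that the criterion is preserved by the operation $(A,B)\mapsto(\mathcal V\setminus A)\cup B$, which is $\[\phi\to\psi\]$ by the standard semantics of implication. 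All the steps check out: the characterisation is correct (the two excluded $(1,1)$-containing sets are indeed $\{(1,1)\}$ and $\{(0,0),(1,1)\}$), the count $3\cdot2=6$ is right, and the case split on whether $\[\phi\]$ contains both of $(0,1),(1,0)$ is exhaustive and closes the argument. What each approach buys: the paper's table is mechanical and doubles as the expository device for the whole technique (the rest of the section refers to specific cells of Figure~\ref{intro figure}, e.g.\ that $\[\phi\]=\[p\to q\]$, $\[\psi\]=\[q\]$ yields $\[p\vee q\]$), whereas your argument is shorter, explains \emph{why} the family is closed rather than just verifying it, is less error-prone than reading off 36 diagrams, and would scale better (the same style of reasoning handles more variables or larger families where exhaustive tabulation becomes unwieldy). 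The only cosmetic point is your mention of $p\leftrightarrow q$, a connective not in $\Phi_1$; it is used purely as a name for the set $\{(0,0),(1,1)\}$ and plays no role in the argument, so nothing hinges on it.
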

The lemma is proven by considering $6\times 6=36$ different cases corresponding to different combinations of possible values of sets $\[\phi\]$ and $\[\psi\]$. We show all these cases in Figure~\ref{intro figure}. For example, if $\[\phi\]=\[p\to q\]$ and  $\[\psi\]=\[q\]$, then $\[\phi\to\psi\]=\[p\vee q\]$. We show this in Figure~\ref{intro figure} by placing the diagram of the set  $\[p\vee q\]$ in the cell located at the intersection of the row labelled with the diagram $\[p\to q\]$ and the column labelled with the diagram $\[q\]$.

\begin{lemma}\label{1-may-b}
$\[\phi\]\in \{\[p\],\[q\],\[p\vee q\], \[p\to q\], \[q\to p\], \[\top\]\}$ for any formula $\phi\in\Phi_1$ that uses only connective $\to$.
\end{lemma}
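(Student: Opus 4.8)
The plan is to prove Lemma~\ref{1-may-b} by a straightforward structural induction on the formula $\phi$, with Lemma~\ref{1-may-a} supplying the entire inductive step. Recall the reduction noted just before the diagrams: since $p\wedge q$ mentions only the variables $p$ and $q$, we may assume the language $\Phi_1$ contains only these two variables. Combined with the hypothesis that $\phi$ uses only the connective $\to$, this means the only atomic formulae available are $p$ and $q$, and the only way to build a compound formula is by a single application of $\to$.

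For the base case, $\phi$ is a propositional variable, hence $\phi$ is $p$ or $q$, so $\[\phi\]$ is $\[p\]$ or $\[q\]$, and in either case $\[\phi\]\in\{\[p\],\[q\],\[p\vee q\],\[p\to q\],\[q\to p\],\[\top\]\}$. For the inductive step, because $\to$ is the only connective, $\phi$ must have the form $\psi_1\to\psi_2$ where $\psi_1,\psi_2\in\Phi_1$ again use only $\to$ and only the variables $p,q$. The induction hypothesis gives $\[\psi_1\],\[\psi_2\]\in\{\[p\],\[q\],\[p\vee q\],\[p\to q\],\[q\to p\],\[\top\]\}$, so Lemma~\ref{1-may-a} applies directly and yields $\[\phi\]=\[\psi_1\to\psi_2\]\in\{\[p\],\[q\],\[p\vee q\],\[p\to q\],\[q\to p\],\[\top\]\}$, which closes the induction. (Note that the constant $\top$, being the abbreviation $p\to p$, is covered by the inductive step rather than the base case, so no extra case is needed.)

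I do not expect a genuine obstacle here: the combinatorial core of the argument — the closure of the six-element family of truth sets under $\to$ — has already been verified in the $36$-case analysis behind Lemma~\ref{1-may-a}, and the present lemma merely propagates that closure up the formation tree of $\phi$. The only point deserving a sentence of care is the legitimacy of the ``only $p$ and $q$'' reduction, but that was already justified before the statement. Finally, this lemma is exactly what is needed to finish Theorem~\ref{1-may-d}: since the diagram of $\[p\wedge q\]$ in Figure~\ref{conjunction figure} is visibly not among the six diagrams labelling the rows and columns of Figure~\ref{intro figure}, no $\{\vee,\to\}$-formula — and in particular no $\{\to\}$-formula after one also checks the analogous closure for $\vee$ — can be semantically equivalent to $p\wedge q$.
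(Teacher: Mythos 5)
Your proof is correct and is essentially the paper's own argument: a structural induction whose base case notes $\[p\],\[q\]$ lie in the six-element family and whose inductive step is exactly Lemma~\ref{1-may-a}. The extra remarks (the two-variable reduction and the observation that $\top=p\to p$ needs no separate case) are fine but add nothing beyond the paper's proof.
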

\begin{proof}
The lemma is proven by induction on the structural complexity of formula $\phi$. The base case is true because truth sets $\[p\]$ and $\[q\]$ belong to the family of truth sets $\{\[p\],\[q\],\[p\vee q\], \[p\to q\], \[q\to p\], \[\top\]\}$. The induction step follows from Lemma~\ref{1-may-a}.   
\end{proof}

\begin{lemma}\label{1-may-c}
$\[p\wedge q\]\notin \{\[p\],\[q\],\[p\vee q\], \[p\to q\], \[q\to p\], \[\top\]\}$.
\end{lemma}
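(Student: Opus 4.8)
The plan is to argue by a simple cardinality count. Since we have restricted attention to the language built from the two propositional variables $p$ and $q$, a valuation is determined by the pair of Boolean values it assigns to $p$ and $q$, so there are exactly four valuations in total. First I would observe that $\[p\wedge q\]$ consists of precisely the valuations that make both $p$ and $q$ true, and there is exactly one such valuation; hence $\[p\wedge q\]$ is a singleton.

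Next I would compute, or read off from the diagrams at the top of Figure~\ref{intro figure}, the cardinalities of the six truth sets in the family $\{\[p\],\[q\],\[p\vee q\], \[p\to q\], \[q\to p\], \[\top\]\}$. The sets $\[p\]$ and $\[q\]$ each contain two valuations; the sets $\[p\vee q\]$, $\[p\to q\]$, and $\[q\to p\]$ each contain three valuations; and $\[\top\]$ contains all four valuations. In particular, every member of this family has at least two elements.

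Since $\[p\wedge q\]$ has exactly one element and each set in the family has at least two, the set $\[p\wedge q\]$ cannot coincide with any of them, which is the claim. There is essentially no obstacle here: the only thing to be careful about is that the count of valuations relies on the earlier reduction to the language over $\{p,q\}$, so I would make that dependence explicit rather than counting over an arbitrary variable set.
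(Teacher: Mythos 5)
Your proof is correct and amounts to the same finite verification the paper performs by inspecting Figure~\ref{conjunction figure} against the top row of Figure~\ref{intro figure}: over the four valuations of $\{p,q\}$, the set $\[p\wedge q\]$ is a singleton, while $\[p\]$, $\[q\]$, $\[p\vee q\]$, $\[p\to q\]$, $\[q\to p\]$, and $\[\top\]$ have $2,2,3,3,3,4$ elements respectively. Recording the comparison as a cardinality count is a harmless, slightly cleaner figure-free way of doing the same inspection, and your caveat about restricting to the variables $p$ and $q$ matches the reduction the paper makes just before the proof of Theorem~\ref{1-may-d}.
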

\begin{proof}
See Figure~\ref{conjunction figure} and the top row in Figure~\ref{intro figure}.    
\end{proof}
The statement of Theorem~\ref{1-may-d} follows from Lemma~\ref{1-may-b}, Lemma~\ref{1-may-c}, and Definition~\ref{sem eq Boolean}.

\section{Temporal Logic}

In this section, we show how the truth set algebra technique can be used to prove the undefinability of one modality through another. To do this, we use several modalities from linear temporal logic. We assume that time is discrete, starts at moment 0, and runs ad infinitum. We denote the set of nonnegative integers by $\mathbb{N}$. In the context of temporal logic, a valuation is any function $\pi$ that maps propositional variables into subsets of $\mathbb{N}$. 

The language $\Phi_2$ of temporal logic is defined by the following grammar:
$$
\phi := p\;|\;\neg\phi\;|\;\phi\vee\phi\;|\;\F\phi\;|\;\X\phi\;|\;\phi\U\phi\;|\;\phi\W\phi,
$$
where $p$ is either of the two propositional variables.
We read $\F$ as ``at some point in the future'', $\X$ as ``at the next moment'', $\U$ as ``until'', and $\W$ as ``weak until''. The formal semantics of these modalities is defined below.

\begin{definition}\label{sat temporal}
For any fixed valuation $\pi$, any integer $n\in\mathbb{N}$, and any formula $\phi\in\Phi_2$, the satisfaction relation $n\Vdash \phi$ is defined recursively as follows:
\begin{enumerate}
    \item $n\Vdash p$ if $n\in\pi(p)$,
    \item $n\Vdash \neg\phi$ if $n\nVdash\phi$,
    \item $n\Vdash \phi\vee\psi$ if either $n\Vdash\phi$ or $n\Vdash\psi$,
    \item $n\Vdash \F\phi$ if there is $m\ge n$ such that $m\Vdash\phi$,
    \item $n\Vdash\X\phi$ if $n+1\Vdash\phi$,
    \item $n\Vdash \phi\U\psi$ when there is $m\ge n$ such that $m\Vdash\psi$ and for each $i$, if $n\le i<m$, then $i\Vdash\phi$,
    \item $n\Vdash \phi\W\psi$, when for each $m\ge n$ such that $m\nVdash \phi$, there is $m'\ge n$ such that $m'\Vdash\psi$ and for each $i$, if $n\le i<m'$, then $i\Vdash\phi$.
\end{enumerate}
\end{definition}
Note that item~4 of the above definition contains inequality $m\le n$ rather than $m<n$. Thus, informally, in our system ``the future'' includes the current moment. We believe that this is a common approach in temporal logic, but this choice is not significant for our results.

\begin{definition}\label{truth set in temporal context}
In the context of temporal logic, 
for any given valuation $\pi$, let the truth set $\[\phi\]$ of a formula $\phi\in\Phi_2$  be the set $\{n\in\mathbb{N}\;|\;n\Vdash\phi\}$. 
\end{definition}

\begin{definition}\label{semantically equivalent in temporal context}
In the context of temporal logic, formulae $\phi,\psi\in\Phi_2$ are semantically equivalent if $\[\phi\]=\[\psi\]$ for each valuation $\pi$. 
\end{definition}

\subsection{Undefinability of $\U$ and $\W$ through $\F$}\label{U and W via F}

In this subsection, we use the truth set algebra method to show that both versions of ``until'' modalities, regular $\U$ and weak $\W$, are not definable through modality $\F$ and Boolean connectives. Without loss of generality, we assume that our language contains only propositional variables $p$ and $q$.
To start the proof, consider valuation $\pi$ defined as follows:
\begin{align*}
    \pi(p)=&\{n\ge 0\;|\; n\equiv 1\pmod 2\},\\
    \pi(q)=&\{n\ge 0\;|\; n\equiv 0\pmod 4\}.
\end{align*}
We visualise the truth sets of temporal formulae by drawing a one-way infinite linear sequence of cells and shading grey the cells whose position index belongs to the truth set (the left-most position corresponds to moment 0). The linear sequences in Figure~\ref{temporal figure} labelled with $\[p\]$ and $\[q\]$ visualise the corresponding truth sets. It is easy to verify that the other sequences also visualise the truth sets with which they are labelled.

\begin{figure}[ht]
\begin{center}
\vspace{0mm}
\scalebox{0.6}{\includegraphics{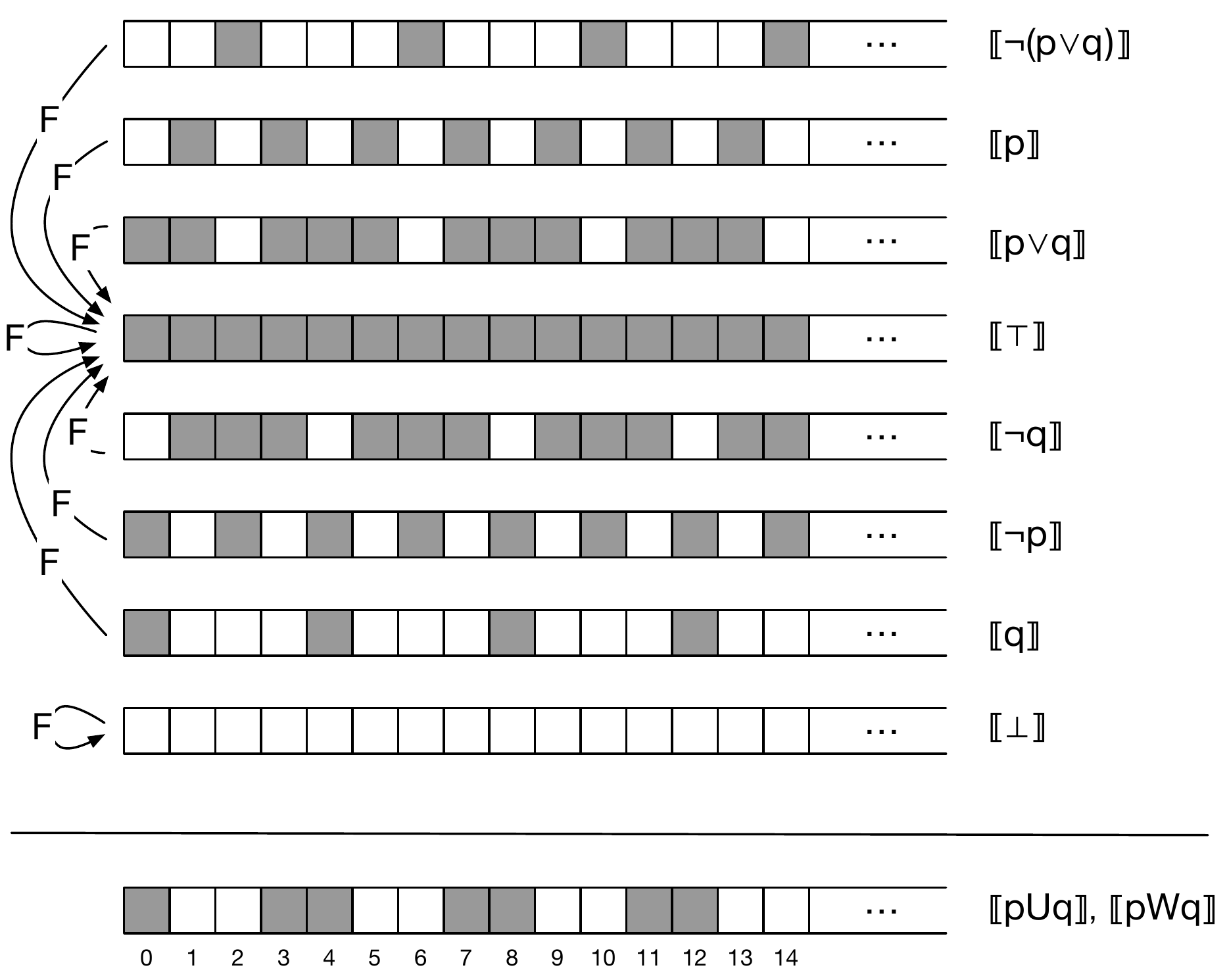}}
% \scalebox{0.53}{\includegraphics{}}
%\vspace{-1mm}
%\footnotesize
\caption{Visualisation of nine truth sets.}\label{temporal figure}
%\vspace{-3mm}
\end{center}
%\vspace{-3mm}
\end{figure}

The next lemma shows that the set of eight truth sets depicted {\em above the horizontal bar} in Figure~\ref{temporal figure} is closed with respect to modality $\F$.

\begin{lemma}\label{temporal induction step}
$\[\F\phi\]\in\{\[\top\],\[\bot\]\}$ for any temporal formula $\phi\in \Phi_2$ such that
$\[\phi\]\in\{\[\neg(p\vee q)\],\[p\],\[p\vee q\],\[\top\],\[\neg q\],\[\neg p\],\[q\],\[\bot\]\}$.
\end{lemma}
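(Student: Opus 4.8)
The plan is to prove this by a direct case analysis on the eight possible values of $\[\phi\]$, exactly in the spirit of Lemma~\ref{1-may-a}. For each of the eight truth sets in the list $\{\[\neg(p\vee q)\],\[p\],\[p\vee q\],\[\top\],\[\neg q\],\[\neg p\],\[q\],\[\bot\]\}$, I would compute $\[\F\phi\]$ directly from item~4 of Definition~\ref{sat temporal}: a moment $n$ belongs to $\[\F\phi\]$ exactly when some moment $m\ge n$ belongs to $\[\phi\]$. The key observation that makes all eight cases collapse into the two-element set $\{\[\top\],\[\bot\]\}$ is the following: for the chosen valuation $\pi$, seven of the eight listed truth sets are \emph{infinite} (they contain arbitrarily large moments), and the remaining one, $\[\bot\]$, is empty.

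First I would record this dichotomy explicitly. If $\[\phi\]$ is an infinite subset of $\mathbb{N}$, then for every $n\in\mathbb{N}$ there is some $m\ge n$ with $m\in\[\phi\]$, so $n\Vdash\F\phi$ for every $n$, i.e. $\[\F\phi\]=\mathbb{N}=\[\top\]$. If instead $\[\phi\]=\varnothing=\[\bot\]$, then there is no $m$ at all with $m\Vdash\phi$, so $n\nVdash\F\phi$ for every $n$, i.e. $\[\F\phi\]=\varnothing=\[\bot\]$. So the whole lemma reduces to checking that each of the eight sets on the list is either infinite or empty under $\pi$.

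Next I would verify the eight membership facts. Using the definitions $\pi(p)=\{n\mid n\equiv 1\pmod 2\}$ and $\pi(q)=\{n\mid n\equiv 0\pmod 4\}$: $\[p\]$ is the odd numbers (infinite); $\[q\]$ is the multiples of $4$ (infinite); $\[\neg p\]$ is the even numbers (infinite); $\[\neg q\]$ is the non-multiples-of-$4$ (infinite); $\[p\vee q\]$ contains all odds hence infinite; $\[\neg(p\vee q)\]$ is the set of even non-multiples-of-$4$, i.e. $\{2,6,10,\dots\}$ (infinite); $\[\top\]=\mathbb{N}$ (infinite); and $\[\bot\]=\varnothing$. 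These are exactly the seven shaded rows above the bar together with $\[\bot\]$, and Figure~\ref{temporal figure} already displays them. Combining with the dichotomy above, $\[\F\phi\]=\[\top\]$ in the first seven cases and $\[\F\phi\]=\[\bot\]$ in the last, so in all cases $\[\F\phi\]\in\{\[\top\],\[\bot\]\}$.

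I do not expect a genuine obstacle here; the only thing that requires a moment's care is making sure the valuation $\pi$ has been chosen so that none of the eight truth sets is finite-but-nonempty (which would produce a $\[\F\phi\]$ outside $\{\[\top\],\[\bot\]\}$, namely a proper initial-segment-complement), and this is precisely why $\pi$ was defined with purely periodic $\pi(p)$ and $\pi(q)$. The ``hard part,'' such as it is, is simply bookkeeping: confirming that the eight sets listed are the only ones that can arise as $\[\phi\]$ in the larger induction (that is handled by the companion closure lemma for the other connectives, analogous to Lemma~\ref{1-may-a}), so that this lemma indeed supplies the induction step for $\F$.
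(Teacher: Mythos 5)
Your proof is correct and follows essentially the same route as the paper: the paper's own argument also splits into the case where $\[\phi\]$ is one of the seven infinite sets (so $n\Vdash\F\phi$ for every $n$ and $\[\F\phi\]=\[\top\]$) and the case $\[\phi\]=\[\bot\]$ (so $\[\F\phi\]=\[\bot\]$). Your explicit verification that each of the seven sets is infinite under $\pi$ is just a more detailed rendering of what the paper delegates to Figure~\ref{temporal figure}.
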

\begin{proof}
If $\[\phi\]\in\{\[\neg(p\vee q)\],\[p\],\[p\vee q\],\[\top\],\[\neg q\],\[\neg p\],\[q\]\}$, then statement $n\Vdash \phi$ holds for infinitely many values of $n$, see Figure~\ref{temporal figure}. Thus,  $n\Vdash \F\phi$  for each natural number $n$ by item~4 of Definition~\ref{sat temporal}. Therefore, $\[\F\phi\]=\[\top\]$ by Definition~\ref{truth set in temporal context}.

If $\[\phi\]=\[\bot\]$, then $n\nVdash\phi$ for each integer $n\ge 0$. Hence, $n\nVdash \F\phi$ for each $n$ by item~4 of Definition~\ref{sat temporal}. Therefore, $\[\F\phi\]=\[\bot\]$ by Definition~\ref{truth set in temporal context}.
\end{proof}

\begin{lemma}
$\[\phi\]\in \{\[\neg(p\vee q)\],\[p\],\[p\vee q\],\[\top\],\[\neg q\],\[\neg p\],\[q\],\[\bot\]\}$ for any temporal formula $\phi\in\Phi_2$ that does not contain modalities $\X$, $\U$, and $\W$.    
\end{lemma}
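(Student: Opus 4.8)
The plan is to prove this by induction on the structural complexity of the formula $\phi$, mirroring the structure of Lemma~\ref{1-may-b} from the Boolean case. The claim restricts attention to formulae built only from the propositional variables $p,q$, the connectives $\neg$ and $\vee$, and the modality $\F$. For the base case, $\phi$ is a propositional variable, and indeed $\[p\],\[q\]$ are both in the listed family of eight truth sets (they appear above the bar in Figure~\ref{temporal figure}). For the inductive step there are three cases, according to whether the outermost symbol is $\F$, $\neg$, or $\vee$.

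The $\F$ case is already handled: by the induction hypothesis $\[\psi\]$ lies in the eight-element family, so Lemma~\ref{temporal induction step} gives $\[\F\psi\]\in\{\[\top\],\[\bot\]\}$, and both $\[\top\]$ and $\[\bot\]$ are among the eight. For the $\neg$ case, I would observe that the eight-element family is closed under complementation in $\mathbb{N}$: the pairs $(\[\neg(p\vee q)\],\[p\vee q\])$, $(\[p\],\[\neg p\])$, $(\[q\],\[\neg q\])$, and $(\[\top\],\[\bot\])$ are mutually complementary, which can be read off directly from Figure~\ref{temporal figure}; since $\[\neg\psi\]=\mathbb{N}\setminus\[\psi\]$ by item~2 of Definition~\ref{sat temporal}, the inductive hypothesis closes this case. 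For the $\vee$ case, $\[\psi\vee\chi\]=\[\psi\]\cup\[\chi\]$ by item~3 of Definition~\ref{sat temporal}, so I must verify that the family is closed under unions.

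The main obstacle is the $\vee$ case: it requires checking that the union of any two of the eight sets is again one of the eight, which is a $8\times 8$ table of cases (reducible to $36$ by commutativity of $\cup$). I expect the cleanest way to present this is to note that the eight sets, ordered by inclusion, form a Boolean-algebra-like structure generated by the three ``atoms'' visible in the figure — the even-but-not-divisible-by-$4$ positions, the odd positions, and the divisible-by-$4$ positions — together with $\emptyset$; every one of the eight truth sets is a union of some of these three atoms, hence the family is exactly the closure of the atoms under union and is manifestly union-closed. (Concretely: $\[\neg(p\vee q)\]$ is the single atom of even-non-multiples-of-$4$, $\[q\]$ is the atom of multiples of $4$, $\[p\]$ is the atom of odd positions, $\[p\vee q\]$, $\[\neg p\]$, $\[\neg q\]$ are the three two-atom unions, $\[\top\]$ is the union of all three, and $\[\bot\]=\emptyset$.) Once union-closure and complementation-closure are established, the induction goes through immediately, and the proof is complete.
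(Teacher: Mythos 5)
Your proof is correct and follows essentially the same route as the paper: structural induction with the same base case, the same use of Lemma~\ref{temporal induction step} for the $\F$ case, and closure of the eight-set family under complement and union for the Boolean cases. Your observation that the family is exactly the set of unions of the three partition ``atoms'' (odd positions, multiples of $4$, and positions $\equiv 2 \pmod 4$) is a tidier justification of the closure properties than the paper's direct appeal to Figure~\ref{temporal figure}, but it is the same argument in substance.
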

\begin{proof}
We prove the lemma by induction on the structural complexity of formula $\phi$. For the base case, note that 
$$\[p\],\[q\]\in \{\[\neg(p\vee q)\],\[p\],\[p\vee q\],\[\top\],\[\neg q\],\[\neg p\],\[q\],\[\bot\]\}.$$

Suppose formula $\phi$ has the form $\neg\psi$. By item~2 of Definition~\ref{sat temporal} and Definition~\ref{truth set in temporal context}, the truth set $\[\neg\psi\]$ is the complement of the truth set $\[\psi\]$. Note that the complement of each set in the family $\{\[\neg(p\vee q)\],\[p\],\[p\vee q\],\[\top\],\[\neg q\],\[\neg p\],\[q\],\[\bot\]\}$ also belongs to the same family. This can be observed in Figure~\ref{temporal figure}. For example, the complement of the set $\[\neg (p\vee q)\]$ is the set $\[p\vee q\]$. Therefore, set $\[\neg\psi\]$ belongs to the family $\{\[\neg(p\vee q)\],\[p\],\[p\vee q\],\[\top\],\[\neg q\],\[\neg p\],\[q\],\[\bot\]\}$ by the induction hypothesis. 

Assume that formula $\phi$ has the form $\psi_1\vee \psi_2$. By item~3 of Definition~\ref{sat temporal} and Definition~\ref{truth set in temporal context}, the truth set $\[\psi_1\vee \psi_2\]$ is the union of the truth sets $\[\psi_1\]$ and $\[\psi_2\]$. Note that the family $\{\[\neg(p\vee q)\],\[p\],\[p\vee q\],\[\top\],\[\neg q\],\[\neg p\],\[q\],\[\bot\]\}$ is closed with respect to union. This can also be observed in Figure~\ref{temporal figure}. For example, the union of the sets $\[\neg (p\vee q)\]$ and $\[p\]$ is the set $\[\neg q\]$. Therefore, set $\[\psi_1\vee \psi_2\]$ belongs to the family $\{\[\neg(p\vee q)\],\[p\],\[p\vee q\],\[\top\],\[\neg q\],\[\neg p\],\[q\],\[\bot\]\}$ by the induction hypothesis.  

If formula $\phi$ has the form $\F\psi$, then the statement of the lemma follows from Lemma~\ref{temporal induction step} and the induction hypothesis.
\end{proof}

\begin{lemma}
$\[p\U q\], \[p\W q\]\notin\{\[\neg(p\vee q)\],\[p\],\[p\vee q\],\[\top\],\[\neg q\],\[\neg p\],\[q\],\[\bot\]\}$.    
\end{lemma}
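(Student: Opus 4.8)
The plan is to compute the truth set $\[p\U q\]$ explicitly for the valuation $\pi$ fixed above and to show it differs from all eight sets listed in the lemma; then to reduce $\[p\W q\]$ to the same computation. First I would unfold item~6 of Definition~\ref{sat temporal}: $n\Vdash p\U q$ holds iff there is $m\ge n$ with $m\equiv 0\pmod 4$ such that every $i$ with $n\le i<m$ is odd. The crucial observation is that $\pi(p)$ — the odd numbers — contains no two consecutive integers, so the ``until window'' $\{i : n\le i<m\}$ has at most one element. Hence the only options are $m=n$, forcing $n\equiv 0\pmod 4$, or $m=n+1$ with $n$ odd, forcing $n\equiv 3\pmod 4$. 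A one-line case check on $n\bmod 4$ then yields $\[p\U q\]=\{n\in\mathbb{N} : n\equiv 0 \text{ or } 3 \pmod 4\}$, which is exactly the ninth diagram drawn below the bar in Figure~\ref{temporal figure}.

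For $\W$ I would use the standard reduction to $\U$ directly from item~7: the clause following ``for each $m\ge n$ such that $m\nVdash\phi$'' does not mention $m$, so the whole statement is vacuously true when $\phi$ holds at every moment from $n$ on, and otherwise it says precisely $n\Vdash\phi\U\psi$. Thus $n\Vdash p\W q$ iff $n\Vdash p\U q$ or $p$ holds at every moment $\ge n$. Since $\pi(p)$ misses every even number, the latter disjunct fails for every $n$, so $\[p\W q\]=\[p\U q\]$ in this model and it suffices to treat $\[p\U q\]$.

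It remains to separate $\[p\U q\]$ from the family $\{\[\neg(p\vee q)\],\[p\],\[p\vee q\],\[\top\],\[\neg q\],\[\neg p\],\[q\],\[\bot\]\}$. I would read the relevant cells off Figure~\ref{temporal figure} and argue as follows: $0,3\in\[p\U q\]$ while $1,2\notin\[p\U q\]$. The condition ``$0$ is in the set but $1$ is not'' already eliminates $\[\top\]$, $\[\bot\]$, $\[p\]$, $\[\neg q\]$, $\[p\vee q\]$, and $\[\neg(p\vee q)\]$, leaving only $\[q\]$ (the multiples of $4$) and $\[\neg p\]$ (the even numbers); but neither of these contains $3$, whereas $3\in\[p\U q\]$. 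Hence $\[p\U q\]$, and with it $\[p\W q\]$, lies outside the family.

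The routine-but-error-prone part is the case analysis producing $\[p\U q\]$: one must not overlook the $m=n$ case (``the future includes the present'', as the text notes right after Definition~\ref{sat temporal}), which is exactly what places every multiple of $4$ in the set, and one must handle the strict bound $i<m$ correctly so that the single admissible window at positions $\equiv 3\pmod 4$ is not missed. Everything after that is a direct comparison of periodic patterns, visible in Figure~\ref{temporal figure}.
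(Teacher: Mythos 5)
Your proposal is correct and takes essentially the same route as the paper: the paper's proof likewise notes that $\[p\U q\]=\[p\W q\]$ for this valuation, identifies this common truth set (the pattern below the bar in Figure~\ref{temporal figure}), and observes that it differs from the eight sets above the bar. You simply spell out explicitly the computation of $\[p\U q\]$, the reduction of $\W$ to $\U$, and the element-by-element separation that the paper delegates to the figure.
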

\begin{proof}
The truth sets $\[p\U q\]$ and $\[p\W q\]$ are equal and they are visualised below the horizontal bar in Figure~\ref{temporal figure}. The correctness of the visualisation follows from item~6 and item~7 of Definition~\ref{sat temporal}. Observe that these sets are different from the sets $\[\neg(p\vee q)\]$, $\[p\]$, $\[p\vee q\]$, $\[\top\]$, $\[\neg q\]$, $\[\neg p\]$, $\[q\]$, and $\[\bot\]$ visualised above the horizontal bar on the same diagram.
\end{proof}

The next result follows from Definition~\ref{semantically equivalent in temporal context} and the two lemmas above. A similar result for brunching time logic is shown in~\cite{m02entcs} using a different technique. Other undefinability results for a temporal logic are given in~\cite{l95ipl}.

\begin{theorem}[undefinability]
Neither the formula $p\U q$ nor the formula $p\W q$ is semantically equivalent to a formula in language $\Phi_2$ that does not contain modalities $\X$, $\U$, and $\W$.
\end{theorem}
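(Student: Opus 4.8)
The plan is to argue by contradiction, using the specific valuation $\pi$ fixed at the start of this subsection together with the last three lemmas. Suppose some formula $\chi\in\Phi_2$ containing none of the modalities $\X$, $\U$, $\W$ is semantically equivalent to $p\U q$; the argument for $p\W q$ will be word-for-word the same. By Definition~\ref{semantically equivalent in temporal context}, semantic equivalence means $\[p\U q\]=\[\chi\]$ for \emph{every} valuation, so in particular this equality holds for the valuation $\pi$ fixed above.

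Now I would invoke the second of the last three lemmas: since $\chi$ contains none of $\X$, $\U$, $\W$, its truth set under $\pi$ lies in the eight-element family $\{\[\neg(p\vee q)\],\[p\],\[p\vee q\],\[\top\],\[\neg q\],\[\neg p\],\[q\],\[\bot\]\}$ shown above the horizontal bar in Figure~\ref{temporal figure}. Combining this with the equality from the previous step yields $\[p\U q\]\in\{\[\neg(p\vee q)\],\[p\],\[p\vee q\],\[\top\],\[\neg q\],\[\neg p\],\[q\],\[\bot\]\}$. But the third of those lemmas asserts precisely the negation of this, namely $\[p\U q\]\notin\{\[\neg(p\vee q)\],\[p\],\[p\vee q\],\[\top\],\[\neg q\],\[\neg p\],\[q\],\[\bot\]\}$, which is a contradiction. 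Hence no such $\chi$ exists, and repeating the argument with $p\W q$ in place of $p\U q$ completes the proof. The appeal to Definition~\ref{semantically equivalent in temporal context} is what lets a single bad valuation defeat equivalence.

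Since all three lemmas are already in hand, the theorem itself requires only this short bookkeeping. If I had to point to where the real work sits, it is entirely inside the structural-induction lemma above and its base case, Lemma~\ref{temporal induction step}: the substantive content is the case-by-case verification, read off Figure~\ref{temporal figure}, that the chosen eight-element family is genuinely closed under complement, union, and $\F$ for this particular $\pi$, and that $\[p\U q\]$ and $\[p\W q\]$ fall outside it. The one real design choice is $\pi$ itself — taking $p$ true at every odd moment and $q$ true at every fourth moment keeps all the relevant truth sets periodic and pairwise distinct, so that $\[p\U q\]$ exhibits a periodicity pattern matched by none of the eight; a less careful choice of $\pi$ could merge some of these sets and destroy the separation the argument depends on.
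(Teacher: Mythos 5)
Your argument is correct and is exactly the paper's intended proof: the theorem is stated there as an immediate consequence of the two preceding lemmas together with Definition~\ref{semantically equivalent in temporal context}, which is precisely the contradiction you spell out for the fixed valuation $\pi$. Your added remarks about where the real work lies (the closure lemmas for the eight-set family and the choice of $\pi$) accurately reflect the structure of the paper's argument.
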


\subsection{Undefinability of $\F$ through $\X$}

In this subsection, we use a modified version of the truth set algebra method to show that modality $\F$ is not definable through modality $\X$ and Boolean connectives. Without loss of generality, in this subsection, we assume that our language contains only propositional variable $p$.

In Subsection~\ref{U and W via F}, we have shown that a certain pattern can \textit{never} be reached by applying only modality $\F$ and Boolean connectives. Here we show that a certain pattern cannot be reached \textit{in a fixed number of steps} and use this observation to prove the undefinability.

We state and prove the undefinability result as Theorem~\ref{undefinability F via X theorem} at the end of this subsection. Throughout this subsection, until the statement of that theorem, we assume that $T\ge 1$ is an arbitrary fixed positive integer. We specify the value of $T$ in the proof of Theorem~\ref{undefinability F via X theorem}. Consider a valuations $\pi$ defined as follows:
\begin{equation} \label{13-03-2022 valuation}
   \pi(p)=\{T\}.  
\end{equation}
We visualise the truth sets in the same way as we did in the previous subsection. In Figure \ref{temporal figure infinite}, the top linear sequence visualises the truth set $\[p\]$.

For each integer $t$ such that $1\leq t\leq T$, we consider families of sets $\alpha_t$ and $\beta_t$ defined as
\begin{align*}
\alpha_t=&\big\{X\;|\;X\subseteq\{t,\dots,T\}\big\},\\  
\beta_t=&\big\{\{0,\dots,t-1\}\cup X\cup\{T+1,\dots\}\;|\;X\subseteq\{t,\dots,T\}\big\}. 
\end{align*}
In other words, $\alpha_t$ is the powerset of the set $\{t,\dots,T\}$ and $\beta_t$ is the set of the complements of sets in $\alpha_t$ with respect to $\mathbb{N}$. We visualise families $\alpha_t$ and $\beta_t$ in the middle of Figure \ref{temporal figure infinite}.  The asterisk $*$ is used as the ``wildcard'' to mark the integers that \textit{may but do not have to} belong to a set in the corresponding family. 
It is easily seen that for any integer $t\geq2$,
\begin{equation}\label{18-may-a}
    \alpha_{t}\subsetneq\alpha_{t-1} \;\;\;\;\mbox{ and }\;\;\;\ \beta_{t}\subsetneq\beta_{t-1}.
\end{equation}

\begin{figure}[ht]
\begin{center}
\vspace{0mm}
\scalebox{0.6}{\includegraphics{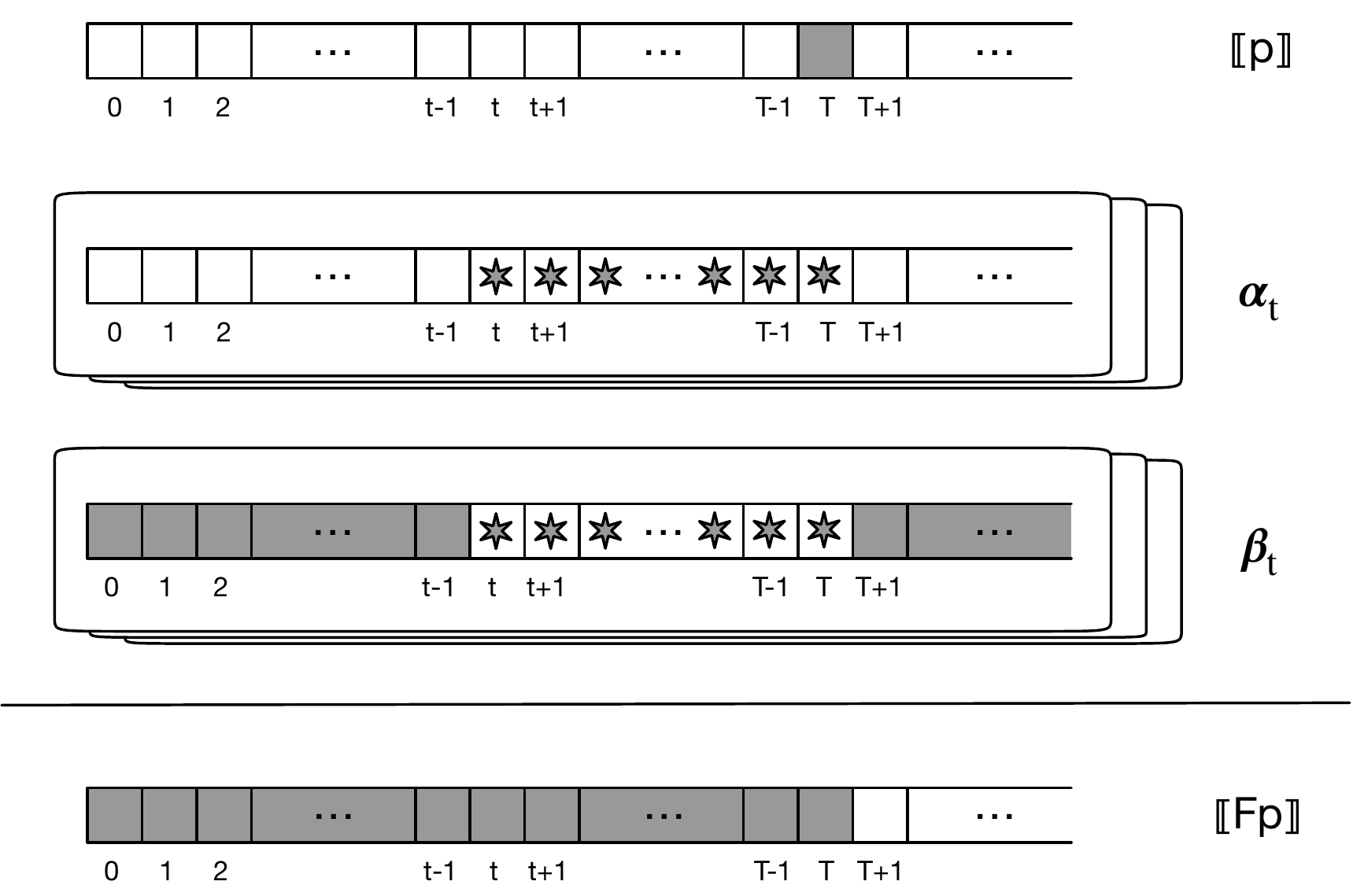}}
%\vspace{-1mm}
%\footnotesize
\caption{Visualisation of the truth sets for valuation $\pi^T$.}\label{temporal figure infinite}
%\vspace{-3mm}
\end{center}
%\vspace{-3mm}
\end{figure}

\begin{lemma} \label{lm:close disjunction negation}
For any formulae $\phi,\psi\in\Phi_2$ and any $t\geq1$, if $\[\phi\],\[\psi\]\in\alpha_{t}\cup\beta_{t}$, then $\[\phi\vee\psi\],\[\neg\phi\]\in\alpha_t\cup\beta_t$.
\end{lemma}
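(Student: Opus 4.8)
The plan is to verify the two closure claims directly from the definitions of $\alpha_t$ and $\beta_t$, treating disjunction (union) and negation (complement with respect to $\mathbb{N}$) separately. First I would record the structural facts that make this work: every set in $\alpha_t$ has the form $X$ with $X \subseteq \{t,\dots,T\}$, every set in $\beta_t$ has the form $\{0,\dots,t-1\} \cup X \cup \{T+1,\dots\}$ with $X \subseteq \{t,\dots,T\}$, and by construction $\beta_t$ is exactly $\{\mathbb{N} \setminus A \mid A \in \alpha_t\}$. In particular $\alpha_t$ is closed under finite unions and intersections (it is a powerset), and $\beta_t$ inherits a dual closure. The key auxiliary observation is that for $A \in \alpha_t$ and $B \in \beta_t$ we have $A \subseteq \{t,\dots,T\}$ while $B \supseteq \{0,\dots,t-1\} \cup \{T+1,\dots\}$, so $A \cup B$ again has the ``$\beta_t$ shape'' (its restriction to $\{t,\dots,T\}$ is some subset, and it contains all of $\{0,\dots,t-1\}$ and $\{T+1,\dots\}$), hence $A \cup B \in \beta_t$.

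For negation, I would argue by the two cases on $\[\phi\] \in \alpha_t$ versus $\[\phi\] \in \beta_t$. Since $\[\neg\phi\] = \mathbb{N} \setminus \[\phi\]$ by item~2 of Definition~\ref{sat temporal} and Definition~\ref{truth set in temporal context}, and since $\beta_t$ is by definition the family of complements of members of $\alpha_t$, complementation maps $\alpha_t$ into $\beta_t$ and $\beta_t$ into $\alpha_t$; either way $\[\neg\phi\] \in \alpha_t \cup \beta_t$. For disjunction, I would do the three cases for the unordered pair $\{\[\phi\],\[\psi\]\}$: both in $\alpha_t$ (union stays in $\alpha_t$ since $\alpha_t$ is a powerset), both in $\beta_t$ (the union contains $\{0,\dots,t-1\}\cup\{T+1,\dots\}$ and its part inside $\{t,\dots,T\}$ is a union of two subsets of $\{t,\dots,T\}$, hence still a subset, so it lands in $\beta_t$), and one in each (the mixed case handled by the auxiliary observation above, giving a member of $\beta_t$). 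In all cases $\[\phi\vee\psi\] \in \alpha_t \cup \beta_t$.

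I expect no serious obstacle here; the only thing requiring a little care is the mixed disjunction case, where one must check that adding a subset of $\{t,\dots,T\}$ to a set that already contains $\{0,\dots,t-1\}$ and all of $\{T+1,\dots\}$ cannot escape the family $\beta_t$ — this is immediate once one writes both sets in their canonical form. It may also be worth noting explicitly that $t$ is fixed throughout, so we never need the inclusions \eqref{18-may-a}; this lemma is a purely ``static'' closure statement about a single pair $\alpha_t \cup \beta_t$, and the nesting \eqref{18-may-a} only becomes relevant later when counting how many applications of $\X$ are needed.
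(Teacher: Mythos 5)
Your proof is correct and follows essentially the same route as the paper: the paper simply observes (via Figure~\ref{temporal figure infinite}) that $\alpha_t\cup\beta_t$ is closed under union and complement and then invokes items~2 and~3 of Definition~\ref{sat temporal}, exactly the two facts you verify. Your write-up merely replaces the appeal to the picture with the explicit canonical-form case analysis (including the mixed $\alpha_t$/$\beta_t$ union landing in $\beta_t$), which is a fine, more detailed rendering of the same argument.
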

\begin{proof}
Observe from Figure~\ref{temporal figure infinite} that the family of sets $\alpha_t\cup\beta_t$ is closed with respect to union and complement. Then, the statement of the lemma follows from item~2 and item~3 of Definition~\ref{sat temporal}.
\end{proof}

\begin{lemma} \label{lm:close modality X}
For any formulae $\phi,\psi\in\Phi_2$ and any $t\geq1$, if $\[\phi\]\in\alpha_{t}\cup\beta_{t}$, then $\[\X\phi\]\in\alpha_{t-1}\cup\beta_{t-1}$.

\end{lemma}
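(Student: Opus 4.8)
The plan is to unfold the semantics of $\X$, view it as a ``shift'' operation on truth sets, and then check that this operation carries the family $\alpha_t$ into $\alpha_{t-1}$ and the family $\beta_t$ into $\beta_{t-1}$. By item~5 of Definition~\ref{sat temporal} and Definition~\ref{truth set in temporal context}, we have $\[\X\phi\]=\{n\in\mathbb{N}\mid n+1\in\[\phi\]\}$; informally, $\[\X\phi\]$ is obtained from $\[\phi\]$ by discarding the status of position $0$ and moving every other position one step to the left. Writing $S^{-}=\{m\in\mathbb{N}\mid m+1\in S\}$ for a set $S\subseteq\mathbb{N}$, the goal reduces to the two inclusions $\{X^{-}\mid X\in\alpha_t\}\subseteq\alpha_{t-1}$ and $\{X^{-}\mid X\in\beta_t\}\subseteq\beta_{t-1}$, after which the lemma follows by splitting on whether $\[\phi\]\in\alpha_t$ or $\[\phi\]\in\beta_t$.

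First I would handle the $\alpha$ case. If $\[\phi\]=X$ with $X\subseteq\{t,\dots,T\}$, then every element of $X^{-}$ has the form $x-1$ with $t\le x\le T$, hence lies in $\{t-1,\dots,T-1\}\subseteq\{t-1,\dots,T\}$ and is nonnegative since $x\ge t\ge 1$; so $\[\X\phi\]=X^{-}\in\alpha_{t-1}$. Next the $\beta$ case. If $\[\phi\]=\{0,\dots,t-1\}\cup X\cup\{T+1,\dots\}$ with $X\subseteq\{t,\dots,T\}$, then the shift sends $\{0,\dots,t-1\}$ to $\{0,\dots,t-2\}$, sends $X$ into $\{t-1,\dots,T-1\}$ as before, and sends $\{T+1,T+2,\dots\}$ to $\{T,T+1,\dots\}$; collecting these, $\[\X\phi\]=\{0,\dots,t-2\}\cup Z\cup\{T+1,T+2,\dots\}$ with $Z=X^{-}\cup\{T\}\subseteq\{t-1,\dots,T\}$, which is precisely membership in $\beta_{t-1}$.

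The only point that needs a second look is the behaviour of position $T$ in the $\beta$ case: for $\beta_t$ it is a wildcard, but after the shift it inherits the (``in'') status of position $T+1$ and so becomes forced; this is still compatible with $\beta_{t-1}$, where $T$ is again a wildcard and hence permitted to be ``in''. A second minor bookkeeping item is that when $t=1$ the conclusion refers to $\alpha_0$ and $\beta_0$; these are read off the same defining expressions with $\{0,\dots,-1\}=\varnothing$, and the computation above is unaffected. I do not anticipate a genuine obstacle here: the whole statement can equally well just be observed from Figure~\ref{temporal figure infinite}, since applying $\X$ visibly slides the pattern, wildcards included, one cell to the left, exactly in the style of the proof of Lemma~\ref{lm:close disjunction negation}.
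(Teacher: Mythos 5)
Your proof is correct and follows essentially the same route as the paper's: unfold the semantics of $\X$ as the shift $\[\X\phi\]=\{n\mid n+1\in\[\phi\]\}$ and check separately that the shift maps $\alpha_t$ into $\alpha_{t-1}$ and $\beta_t$ into $\beta_{t-1}$. You merely spell out the case analysis (including the boundary behaviour at positions $T$ and $t=1$) that the paper's terser proof leaves to the reader.
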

\begin{proof}
By item 5 of Definition \ref{sat temporal}, $\[\X\phi\]=\{i\,|\,i\in\mathbb{N},i+1\in\[\phi\]\}$. Thus, if $\[\phi\]\in\alpha_t$, then $\[\X\phi\]\in\alpha_{t-1}$; if $\[\phi\]\in\beta_t$, then $\[\X\phi\]\in\beta_{t-1}$.
Therefore, when $\[\phi\]\in\alpha_{t}\cup\beta_{t}$, $\[\X\phi\]\in\alpha_{t-1}\cup\beta_{t-1}$.
\end{proof}

\begin{lemma} \label{lm:close form}
For any integer $k\le T$ and any formula $\phi\in\Phi_2$ that contains only modality $\X$ and Boolean connectives, if formula $\phi$ contains at most $k$ occurrences of modality $\X$, then $\[\phi\]\in\alpha_{T-k}\cup\beta_{T-k}$.
\end{lemma}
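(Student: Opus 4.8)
The plan is to prove this by induction on the structural complexity of the formula $\phi$, using the number of $\X$-occurrences as the key parameter. Specifically, I would show the statement by induction on the structure of $\phi$, tracking simultaneously the number $k$ of occurrences of $\X$ in $\phi$, and prove that $\[\phi\]\in\alpha_{T-k}\cup\beta_{T-k}$ whenever $k\le T$.

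First I would handle the base case where $\phi$ is the propositional variable $p$. Here there are zero occurrences of $\X$, so $k=0$, and I need $\[p\]\in\alpha_T\cup\beta_T$. By equation~\eqref{13-03-2022 valuation}, $\[p\]=\{T\}$, which is a subset of $\{T,\dots,T\}=\{T\}$, hence $\[p\]\in\alpha_T$. This matches the top sequence in Figure~\ref{temporal figure infinite}.

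Next I would treat the inductive cases. If $\phi$ has the form $\neg\psi$ or $\psi_1\vee\psi_2$, then the number of $\X$-occurrences in $\phi$ equals that in $\psi$ (respectively, the sum for $\psi_1$ and $\psi_2$). Say $\phi$ has at most $k$ occurrences of $\X$; then each of $\psi$, $\psi_1$, $\psi_2$ has at most $k$ occurrences, so by the induction hypothesis their truth sets lie in $\alpha_{T-k}\cup\beta_{T-k}$ (using that $\alpha_{t}\cup\beta_{t}$ grows as $t$ decreases, by~\eqref{18-may-a}, to align the indices if one subformula has strictly fewer $\X$'s). Then Lemma~\ref{lm:close disjunction negation} gives $\[\phi\]\in\alpha_{T-k}\cup\beta_{T-k}$. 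If $\phi$ has the form $\X\psi$ with at most $k$ occurrences of $\X$, then $\psi$ has at most $k-1$ occurrences, and since $k\le T$ we have $k-1\le T$, so the induction hypothesis yields $\[\psi\]\in\alpha_{T-(k-1)}\cup\beta_{T-(k-1)}=\alpha_{T-k+1}\cup\beta_{T-k+1}$. Applying Lemma~\ref{lm:close modality X} with $t=T-k+1\ge1$ gives $\[\X\psi\]\in\alpha_{T-k}\cup\beta_{T-k}$, as required.

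The main subtlety — not really an obstacle, but the point requiring care — is the bookkeeping between "exactly $k$" and "at most $k$" occurrences of $\X$ together with the monotonicity~\eqref{18-may-a}: when a subformula uses fewer $\X$'s than allowed, its truth set a priori sits in a \emph{smaller} family $\alpha_{t'}\cup\beta_{t'}$ with $t'>T-k$, and one must invoke the inclusion $\alpha_{t'}\cup\beta_{t'}\subseteq\alpha_{T-k}\cup\beta_{T-k}$ to push it into the target family before combining via Lemma~\ref{lm:close disjunction negation}. One also needs the hypothesis $k\le T$ to ensure all indices appearing (in particular $T-k$ and $T-k+1$) stay in the range $\{1,\dots,T\}$ where $\alpha_t$ and $\beta_t$ are defined. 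Once these index conditions are checked, the induction closes routinely.
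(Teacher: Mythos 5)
Your proof is correct and follows essentially the same route as the paper: structural induction on $\phi$, with the base case settled by $\[p\]=\{T\}\in\alpha_T\subseteq\alpha_{T-k}$ via~\eqref{18-may-a}, the Boolean cases closed by Lemma~\ref{lm:close disjunction negation}, and the $\X\psi$ case closed by applying Lemma~\ref{lm:close modality X} to the at-most-$(k-1)$-occurrence subformula. Your extra care with the index bookkeeping (monotonicity of the families and the role of $k\le T$) is exactly the detail the paper leaves implicit, so nothing further is needed.
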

\begin{proof}
We prove the statement of this lemma by structural induction on formula $\phi$. If $\phi$ is a propositional variable $p$, then 
$$\[\phi\]=\[p\]=\pi(p)=\{T\}\in\alpha_{T}\subseteq\alpha_{T-k}$$
by item~1 of Definition~\ref{sat temporal}, statement~\eqref{13-03-2022 valuation}, and statement~\eqref{18-may-a}.

If formula $\phi$ is a disjunction or a negation, then the statement of this lemma follows from the induction hypothesis by Lemma \ref{lm:close disjunction negation}.

If formula $\phi$ has the form $\X \psi$, then formula $\psi$ contains at most $k-1$ occurrences of modality $\X$. The statement of this lemma follows from Lemma \ref{lm:close modality X}.
\end{proof}

\begin{lemma} \label{lm:modality F unreachable pattern}
If $T\geq1$, then $0\in\[\F p\]$ and $T+1\notin\[\F p\]$.
\end{lemma}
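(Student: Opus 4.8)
The plan is to unpack the definitions of $\[\F p\]$ and the valuation $\pi$ from~\eqref{13-03-2022 valuation}, and then check the two membership claims directly. Recall that $\pi(p) = \{T\}$, so by item~1 of Definition~\ref{sat temporal}, we have $n \Vdash p$ if and only if $n = T$.

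For the first claim, $0 \in \[\F p\]$, I would use item~4 of Definition~\ref{sat temporal}: $0 \Vdash \F p$ holds precisely when there is some $m \ge 0$ with $m \Vdash p$. Since $T \ge 1 \ge 0$ and $T \Vdash p$, taking $m = T$ witnesses this. Hence $0 \Vdash \F p$, and so $0 \in \[\F p\]$ by Definition~\ref{truth set in temporal context}.

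For the second claim, $T+1 \notin \[\F p\]$, I would argue that $T+1 \Vdash \F p$ would require some $m \ge T+1$ with $m \Vdash p$, i.e.\ with $m = T$. But $m \ge T+1$ and $m = T$ are incompatible, so no such $m$ exists. Therefore $T+1 \nVdash \F p$, which gives $T+1 \notin \[\F p\]$ by Definition~\ref{truth set in temporal context}.

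There is essentially no obstacle here; the lemma is a direct computation from the semantics, included presumably to isolate the ``target pattern'' that the $\X$-only formulae cannot reach in $T$ steps (cf.\ Lemma~\ref{lm:close form}, which confines such formulae to $\alpha_{T-k} \cup \beta_{T-k}$, none of whose members both contain $0$ and omit $T+1$). The only point worth a moment's care is that item~4 uses $m \ge n$ rather than $m > n$, as the remark after Definition~\ref{sat temporal} emphasizes; but since $T \ne 0$ and $T \ne T+1$, the inclusive-versus-strict distinction does not actually affect either half of the argument.
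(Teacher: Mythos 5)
Your proof is correct and follows essentially the same route as the paper, which simply computes $\[\F p\]=\{0,\dots,T\}$ from $\pi(p)=\{T\}$ and item~4 of Definition~\ref{sat temporal} and reads off both claims. Your version just checks the two memberships directly rather than stating the full truth set, which is an equally valid presentation of the same argument.
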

\begin{proof}
Since $\[p\]=\pi(p)=\{T\}$, by item 4 of Definition \ref{sat temporal}. Then, $\[\F p\]=\{0,\dots,T\}$, see the bottom linear sequence in Figure \ref{temporal figure infinite}. Therefore, $0\in\[\F p\]$ and $T+1\notin\[\F p\]$.
\end{proof}

The next theorem shows that modality $\F$ is not definable through modality $\X$ and Boolean connectives.

\begin{theorem}[undefinability]\label{undefinability F via X theorem}
The formula $\F p$ is not semantically equivalent to any formula in language $\Phi_2$ that does not contain modalities $\F$, $\U$, and $\W$.
\end{theorem}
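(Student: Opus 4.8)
The plan is to argue by contradiction, exploiting the fact, made precise by Lemmas~\ref{lm:close form} and~\ref{lm:modality F unreachable pattern}, that a formula built from $\X$ and Boolean connectives can only ``see'' a window of moments whose width is bounded by the number of $\X$'s it contains, whereas $\F p$ separates moment $0$ from moment $T+1$ no matter how large $T$ is chosen. So, suppose towards a contradiction that $\F p$ is semantically equivalent to some formula $\phi\in\Phi_2$ containing neither $\F$, $\U$, nor $\W$. By the convention of this subsection we may assume $\phi$ uses only the propositional variable $p$, hence $\phi$ is built from $p$, the Boolean connectives, and $\X$. Let $k$ be the number of occurrences of $\X$ in $\phi$.

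At this point I would fix the value of $T$ promised earlier in the subsection: take $T=k+1$, so that $T\ge 1$ and $k\le T$, and instantiate the valuation $\pi$ by~\eqref{13-03-2022 valuation}, that is, $\pi(p)=\{T\}$. Since $\phi$ has at most $k$ occurrences of $\X$ and $k\le T$, Lemma~\ref{lm:close form} gives $\[\phi\]\in\alpha_{T-k}\cup\beta_{T-k}=\alpha_1\cup\beta_1$. On the other hand, because $T\ge 1$, Lemma~\ref{lm:modality F unreachable pattern} yields $0\in\[\F p\]$ and $T+1\notin\[\F p\]$.

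The final step is to observe that no member of $\alpha_1\cup\beta_1$ can behave this way. By definition, every set in $\alpha_1$ is a subset of $\{1,\dots,T\}$ and therefore omits $0$, while every set in $\beta_1$ contains $\{T+1,T+2,\dots\}$ and therefore contains $T+1$. Hence $\[\F p\]\notin\alpha_1\cup\beta_1$. But semantic equivalence of $\F p$ and $\phi$ forces $\[\F p\]=\[\phi\]\in\alpha_1\cup\beta_1$ for the valuation $\pi$, which is a contradiction. Therefore no such $\phi$ exists, which is the statement of the theorem.

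I expect the only delicate point to be the bookkeeping with indices: one must pick $T$ \emph{strictly} larger than the number of occurrences of $\X$ (here $T=k+1$), so that after the $T-k\ge 1$ ``shrinking'' steps of~\eqref{18-may-a} the truth set of $\phi$ is confined to $\alpha_1\cup\beta_1$, a family that crucially contains no set distinguishing $0$ from $T+1$ the way $\[\F p\]$ does. All the substantive work has already been carried out in the closure Lemmas~\ref{lm:close disjunction negation}, \ref{lm:close modality X}, and~\ref{lm:close form}, so the theorem itself amounts to assembling those lemmas with this choice of $T$.
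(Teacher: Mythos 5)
Your proof is correct and follows essentially the same route as the paper's: assume a defining formula $\phi$ with $k$ occurrences of $\X$, set $T=k+1$, apply Lemma~\ref{lm:close form} to place $\[\phi\]$ in $\alpha_1\cup\beta_1$, and use Lemma~\ref{lm:modality F unreachable pattern} to show $\[\F p\]$ lies outside that family. Your explicit check that sets in $\alpha_1$ omit $0$ while sets in $\beta_1$ contain $T+1$ is a welcome spelling-out of a step the paper leaves implicit, but the argument is the same.
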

\begin{proof}
Assume 
%the opposite. Then, by Definition \ref{semantically equivalent in temporal context}, 
there is a formula $\phi\in\Phi_2$ that contains only modality $\X$ and Boolean connectives which is semantically equivalent to $\F p$. Suppose $k$ to be the number of occurrences of modality $\X$ in formula $\phi$. Let $T=k+1$.
%Consider the valuation $\pi^{k+1}$ as given in equation \eqref{13-03-2022 valuation}. 
Then, $\[\phi\]\in\alpha_{k+1-k}\cup\beta_{k+1-k}=\alpha_1\cup\beta_1$ by Lemma \ref{lm:close form}.
However, $\[\F p\]\notin\alpha_1\cup\beta_1$ by Lemma \ref{lm:modality F unreachable pattern}. Therefore, $\[\F p\]\neq\[\phi\]$, which contradicts   the assumption that formulae $\F p$ and $\phi$ are semantically equivalent by Definition~\ref{semantically equivalent in temporal context}.
\end{proof}

\section{Intuitionistic Logic}

In this section, we illustrate the truth set algebra method by proving the mutual undefinability of connectives in Heyting~\cite{h30} calculus for intuitionistic logic. These results\footnote{McKinsey~\cite{m39jsl} and Wajsberg~\cite{w38wm} talk about definability in terms of {\em provable} equivalence not {\em semantical} equivalence that we use in this article. The provable equivalence is equal to semantical equivalence due to the completeness theorem for intuitionistic logic proven by Kripke~\cite{k65slfm} in 1965.} were independently obtained by McKinsey~\cite{m39jsl} and Wajsberg~\cite{w38wm} in 1939.
Note that there were no Kripke semantics~\cite{k65slfm} for intuitionistic logic at the time~\cite{m39jsl,w38wm} were written. Our proof of definability uses Kripke models and, thus, is also significantly different from the original proofs in~\cite{m39jsl,w38wm}.

We start by recalling the standard Kripke semantics for intuitionistic logic~\cite{m22sep}. As usual, by ``partial order'' we mean a reflexive, transitive, and antisymmetric binary relation.

\begin{definition}\label{intuitionistic Kripke model}
An intuitionistic Kripke model is a tuple $(W,\preceq, \pi)$, where
\begin{enumerate}
    \item $W$ is a (possibly empty) set of ``worlds'',
    \item $\preceq$ is a partial order  on set $W$,
    \item for each propositional variable $p$, valuation  $\pi(p)\subseteq W$ is a set of worlds such that for any worlds $w,u\in W$, if $w\in \pi(p)$ and $w\preceq u$, then $u\in \pi(p)$.
\end{enumerate}
\end{definition}

In this section, we use the same language $\Phi_1$ as defined in Section~\ref{Classical Propositional Logic section}.

\begin{definition}\label{sat intuitionistic}
For any world $w\in W$ of a Kripke model $(W,\preceq, \pi)$ and any formula $\phi\in \Phi_1$, the satisfaction relation $w\Vdash\phi$ is defined as follows:
\begin{enumerate}
    \item $w\Vdash p$, if $w\in\pi(p)$,
    \item $w\Vdash \neg\phi$, if there is no world $u\in W$ such that $w\preceq u$ and $u\Vdash\phi$,
    \item $w\Vdash \phi\wedge \psi$, if $w\Vdash \phi$ and  $w\Vdash \psi$,
    \item $w\Vdash \phi\vee \psi$, if either $w\Vdash \phi$ or  $w\Vdash \psi$,
    \item $w\Vdash \phi\to \psi$, when for each world $u\in W$ if $w\preceq u$ and $u\Vdash \phi$, then $u\Vdash \psi$.
\end{enumerate}
\end{definition}
Note that item 3 of Definition~\ref{intuitionistic Kripke model} and items 2 and 5 of Definition~\ref{sat intuitionistic} capture the intuitionistic nature of this semantics.

\begin{definition}\label{truth set intuitionistic}
For any given intuitionistic  Kripke model $(W,\preceq, \pi)$, the truth set $\[\phi\]$ of an arbitrary formula $\phi\in\Phi_1$ is the set $\{w\in W\;|\;w\Vdash\phi\}$.    
\end{definition}

\begin{definition}
In the context of intuitionistic logic, formulae $\phi,\psi\in\Phi_1$ are semantically equivalent if $\[\phi\]=\[\psi\]$ for each intuitionistic Kripke model. 
\end{definition}

\subsection{Undefinability of $\to$ through $\neg$, $\wedge$, and $\vee$}

In this subsection, we use our truth set algebra method to prove that implication $\to$ is not definable in intuitionistic logic through negation $\neg$, conjunction $\wedge$, and disjunction $\vee$. Without loss of generality, assume that language $\Phi_1$ contains only propositional variables $p$ and $q$. Let us consider the Kripke model whose Hasse diagram is depicted in the upper-right corner of Figure~\ref{or figure}. It contains five worlds, $w$, $u$, $v$, $s$, and $t$.  The partial order $\preceq$ on these worlds is given by the diagram. For example, $w\preceq v$ because the diagram contains an upward path from $w$ to $v$.
We assume that $\pi(p)=\{v,s,t\}$ and $\pi(q)=\{v,t\}$.

\begin{figure}%[ht]
\begin{center}
\vspace{0mm}
\scalebox{0.6}{\includegraphics{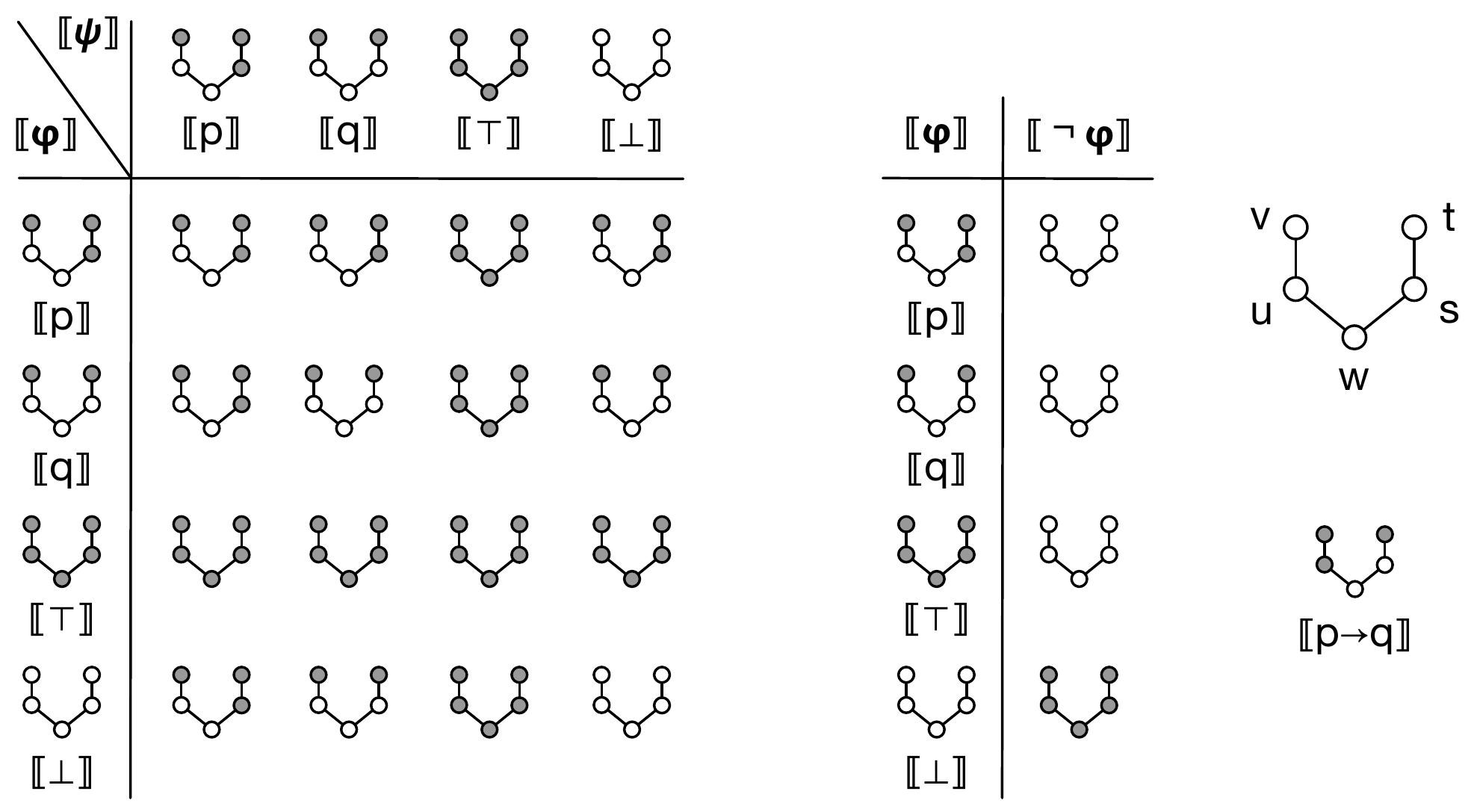}}
%\vspace{-1mm}
%\footnotesize
\caption{Truth set $\[\phi\vee \psi\]$ for different combinations of truth sets $\[\phi\]$ and $\[\psi\]$ (left). Truth set $\[\neg\phi\]$ for different truth sets $\[\phi\]$ (centre). Hasse diagram for a Kripke model and the truth set $\[p\to q\]$ (right).}
\label{or figure}
%\vspace{-3mm}
\end{center}
%\vspace{-3mm}
\end{figure}

Recall that we define constant $\top$ as $p\to p$ and constant $\bot$ as $\neg\top$. We visualise the truth set of a formula in language $\Phi_1$ by shading the worlds that belong to the set. For example, the rows and the columns in the left-most table in Figure~\ref{or figure} are labelled by the diagrams visualising the truth sets $\[p\]$, $\[q\]$, $\[\top\]$, and $\[\bot\]$. 

\begin{lemma}\label{7-feb-a}
For any formulae $\phi,\psi\in\Phi_1$, if 
$\[\phi\],\[\psi\]\in \{\[p\],\[q\],\[\top\],\[\bot\]\}$, then 
$\[\phi\vee\psi\],\[\phi\wedge\psi\], \[\neg\phi\]\in \{\[p\],\[q\],\[\top\],\[\bot\]\}$.
\end{lemma}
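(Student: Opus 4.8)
The plan is to verify the closure claim by a direct case analysis, exactly in the style of Lemma~\ref{1-may-a} for classical logic, but now with the smaller four-element family $\{\[p\],\[q\],\[\top\],\[\bot\]\}$ inside the fixed five-world Kripke model $(W,\preceq,\pi)$. First I would compute the four truth sets explicitly from the valuation $\pi(p)=\{v,s,t\}$, $\pi(q)=\{v,t\}$ and Definition~\ref{sat intuitionistic}: $\[\top\]=W=\{w,u,v,s,t\}$, $\[\bot\]=\varnothing$, $\[p\]=\{v,s,t\}$, and $\[q\]=\{v,t\}$ (these are upward-closed, as they must be, and match the diagrams in Figure~\ref{or figure}). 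The point of fixing this particular model is that these four sets form a small sublattice, and the lemma is the assertion that it is closed under the three set operations induced by $\vee$, $\wedge$, and $\neg$.

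Next I would handle the three connectives in turn. For disjunction: by item~4 of Definition~\ref{sat intuitionistic} and Definition~\ref{truth set intuitionistic}, $\[\phi\vee\psi\]=\[\phi\]\cup\[\psi\]$, so I need the $4\times4$ table of unions; since $\[p\]\supseteq\[q\]$ and both contain $\[\bot\]=\varnothing$ and are contained in $\[\top\]=W$, every union is one of $\[\bot\],\[q\],\[p\],\[\top\]$, i.e.\ stays in the family — this is exactly the left-hand table of Figure~\ref{or figure}. For conjunction: by item~3, $\[\phi\wedge\psi\]=\[\phi\]\cap\[\psi\]$, and by the same chain $\varnothing\subseteq\[q\]\subseteq\[p\]\subseteq W$ every intersection again lands in the family. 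For negation: by item~2, $w\in\[\neg\phi\]$ iff no $u\succeq w$ lies in $\[\phi\]$, i.e.\ $\[\neg\phi\]$ is the largest upward-closed set disjoint from $\[\phi\]$; computing this for the four cases gives $\[\neg\top\]=\varnothing=\[\bot\]$, $\[\neg\bot\]=W=\[\top\]$, $\[\neg p\]=\varnothing=\[\bot\]$ (since the minimal world $w$ sees $v\in\pi(p)$, and indeed every world sees some world in $\{v,s,t\}$), and likewise $\[\neg q\]=\varnothing=\[\bot\]$ — this is the centre table of Figure~\ref{or figure}. In every case the result is back in $\{\[p\],\[q\],\[\top\],\[\bot\]\}$, so the induction-style closure holds.

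Having done the computations, I would simply assemble the conclusion: for any $\phi,\psi\in\Phi_1$ with $\[\phi\],\[\psi\]\in\{\[p\],\[q\],\[\top\],\[\bot\]\}$, the three sets $\[\phi\vee\psi\]$, $\[\phi\wedge\psi\]$, $\[\neg\phi\]$ each coincide with one of the four listed sets by the tables just described (and displayed in Figure~\ref{or figure}), which is what the lemma asserts.

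The only real obstacle is the negation case: one must be careful that $\[\neg\phi\]$ is computed using the order $\preceq$ and not naively as a complement, and in particular one must check that the minimal world $w$ of the model can see a world in $\pi(p)$ and a world in $\pi(q)$, so that $\[\neg p\]$ and $\[\neg q\]$ both collapse to $\varnothing=\[\bot\]$ rather than producing some new set like $\{w\}$ or $\{u\}$. This is precisely why the Kripke model was chosen so that $v$ (which satisfies both $p$ and $q$) sits above $w$: it forces all the negations into the family. Everything else is the routine filling-in of two $4\times4$ tables and one $4$-entry column, which I would present compactly by pointing at Figure~\ref{or figure} rather than writing out all entries.
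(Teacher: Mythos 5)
Your proof is correct and takes essentially the same route as the paper's: identify $\[\phi\vee\psi\]$ with $\[\phi\]\cup\[\psi\]$ and $\[\phi\wedge\psi\]$ with $\[\phi\]\cap\[\psi\]$ (closure being immediate since $\varnothing\subseteq\[q\]\subseteq\[p\]\subseteq W$ form a chain), and check negation directly in the fixed model, which is exactly what the tables of Figure~\ref{or figure} record. Your explicit values for the negations ($\[\neg p\]=\[\neg q\]=\[\neg\top\]=\[\bot\]$, $\[\neg\bot\]=\[\top\]$) are the ones forced by the model's order and match the paper's middle table, so nothing is missing.
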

\begin{proof}
Let us first prove that $\[\phi\vee\psi\]\in \{\[p\],\[q\],\[\top\],\[\bot\]\}$ if 
$\[\phi\],\[\psi\]\in \{\[p\],\[q\],\[\top\],\[\bot\]\}$. We do this in the left table depicted in the left of Figure~\ref{or figure}. The proof consists of explicitly constructing the truth set $\[\phi\vee\psi\]$ for each possible combination of sets  $\[\phi\]$ and $\[\psi\]$. 

Alternatively, one can also see, by Definition~\ref{sat intuitionistic} and Definition~\ref{truth set intuitionistic}, that $\[\phi\vee\psi\]=\[\phi\]\cup \[\psi\]$. Then, $\[\phi\vee\psi\]\in \{\[p\],\[q\],\[\top\],\[\bot\]\}$ because the family of truth sets $\{\[p\],\[q\],\[\top\],\[\bot\]\}$ is closed with respect to union.

The proof for the truth set $\[\phi\wedge \psi\]$ is similar: either by building a table or observing that $\[\phi\wedge\psi\]=\[\phi\]\cap \[\psi\]$ and that the family of truth sets $\{\[p\],\[q\],\[\top\],\[\bot\]\}$ is closed with respect to intersection.

Finally, for the truth set $\[\neg\phi\]$, see the middle table in Figure~\ref{or figure}. It shows the truth set $\[\neg\phi\]$ for each formula $\phi$ such that $\[\phi\]\in \{\[p\],\[q\],\[\top\],\[\bot\]\}$. The validity of this table can be verified using item~2 of Definition~\ref{sat intuitionistic}.
\end{proof}

\begin{lemma}\label{9-mar-a}
$\[\phi\]\in \{\[p\],\[q\],\[\top\],\[\bot\]\}$ for any formula $\phi\in\Phi_1$ that does not use implication.
\end{lemma}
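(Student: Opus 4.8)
The plan is to argue by induction on the structural complexity of $\phi$, in exactly the same shape as the proof of Lemma~\ref{1-may-b} in the Boolean section. Since we have assumed that the language contains only the propositional variables $p$ and $q$, the base case consists of observing that $\[p\]$ and $\[q\]$ both belong to the family $\{\[p\],\[q\],\[\top\],\[\bot\]\}$, which is immediate.

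For the induction step, observe that a formula $\phi\in\Phi_1$ which does not use implication and is not a propositional variable must have one of the three forms $\neg\psi$, $\psi_1\wedge\psi_2$, or $\psi_1\vee\psi_2$; the case $\phi=\psi_1\to\psi_2$ cannot occur by the hypothesis of the lemma. In each of these three cases the immediate subformula or subformulae also contain no implication, so the induction hypothesis applies to them and places their truth sets in $\{\[p\],\[q\],\[\top\],\[\bot\]\}$. Then Lemma~\ref{7-feb-a} immediately yields $\[\phi\]\in\{\[p\],\[q\],\[\top\],\[\bot\]\}$, which completes the step.

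I do not anticipate a genuine obstacle here: essentially all of the content is already packaged into Lemma~\ref{7-feb-a}, which certifies that this four-element family of truth sets is closed under negation, conjunction, and disjunction in the fixed Kripke model. The only mildly delicate point is the bookkeeping around the defined constants $\top$ and $\bot$: since each of them abbreviates a formula built from $\to$ and $\neg$, a formula that ``does not use implication'' never literally contains $\top$ or $\bot$, so these symbols occur only as names for the two extreme truth sets on the right-hand side of the claimed membership, exactly as Lemma~\ref{7-feb-a} already treats them.
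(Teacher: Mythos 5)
Your proof is correct and matches the paper's argument exactly: induction on structural complexity, with the base case handled by noting $\[p\],\[q\]$ lie in the family and the induction step delegated to Lemma~\ref{7-feb-a}. The extra remark about $\top$ and $\bot$ being mere abbreviations is a harmless clarification and does not change the argument.
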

\begin{proof}
We prove the statement of the lemma by induction on the structural complexity of formula $\phi$. In the base case, the statement of the lemma is true because the truth sets $\[p\]$ and $\[q\]$ are elements of the family   $\{\[p\],\[q\],\[\top\],\[\bot\]\}$.  

In the induction case, the statement of the lemma follows from Lemma~\ref{7-feb-a} and the induction hypothesis.
\end{proof}

\begin{lemma}
$\[p\to q\]\notin \{\[p\],\[q\],\[\top\],\[\bot\]\}$.    
\end{lemma}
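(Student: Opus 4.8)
The plan is to compute the truth set $\[p\to q\]$ explicitly in the fixed five-world Kripke model and then check, one family member at a time, that it equals none of $\[p\]$, $\[q\]$, $\[\top\]$, and $\[\bot\]$. First I would handle the two constants. By item~5 of Definition~\ref{sat intuitionistic} the formula $p\to p$ is satisfied at every world, so $\[\top\]=W$; hence $\[\bot\]=\[\neg\top\]=\emptyset$ by item~2. It therefore suffices to show that $\[p\to q\]$ is a nonempty proper subset of $W$ and that it differs from $\[p\]=\pi(p)=\{v,s,t\}$ and from $\[q\]=\pi(q)=\{v,t\}$.

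The key step is the following reading of item~5 of Definition~\ref{sat intuitionistic}: a world $x$ satisfies $p\to q$ if and only if there is no world $y\succeq x$ with $y\Vdash p$ and $y\nVdash q$. Inspecting the valuation, the only world at which $p$ is true and $q$ is false is $s$, since $\pi(p)\setminus\pi(q)=\{v,s,t\}\setminus\{v,t\}=\{s\}$. Consequently $\[p\to q\]=\{x\in W\mid s\not\succeq x\}$, which, read off the Hasse diagram in Figure~\ref{or figure}, is the set $\{u,v,t\}$ displayed on the right of that figure.

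It then remains to compare $\{u,v,t\}$ with the four target sets. It is nonempty, so it is not $\[\bot\]=\emptyset$; it omits $s$, so it is not $\[\top\]=W$. It contains $u$, whereas $u\notin\pi(p)=\[p\]$, so $\[p\to q\]\neq\[p\]$ (equivalently, $s\in\[p\]\setminus\[p\to q\]$). Finally $u\notin\{v,t\}=\[q\]$, so $\[p\to q\]\neq\[q\]$. This completes the argument and, together with Lemma~\ref{9-mar-a}, it yields the desired undefinability of $\to$ through $\neg$, $\wedge$, and $\vee$.

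I do not expect a genuine obstacle: once the model is fixed, everything reduces to a finite inspection. The one point needing care is that $\[p\to q\]$, unlike $\[\phi\vee\psi\]$, $\[\phi\wedge\psi\]$, and $\[\neg\phi\]$, depends on the order $\preceq$ and not merely on the valuation --- with the same $\pi$ but, say, $u\preceq s$ one would instead get $\[p\to q\]=\[q\]$. So the real content is hidden in the choice of the model, specifically in $u$ and $s$ being $\preceq$-incomparable; the computation itself is routine.
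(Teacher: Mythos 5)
Your proposal is correct and takes essentially the same approach as the paper: the paper's proof simply reads the truth set $\[p\to q\]=\{u,v,t\}$ off the right of Figure~\ref{or figure} (justified by item~5 of Definition~\ref{sat intuitionistic}) and observes that it differs from $\[p\]$, $\[q\]$, $\[\top\]$, and $\[\bot\]$, which is exactly the finite verification you spell out explicitly. Your computation correctly isolates the decisive facts --- $s$ is the unique world satisfying $p$ but not $q$, so $\[p\to q\]=\{x\in W \mid x\not\preceq s\}$, and $u$ and $s$ are $\preceq$-incomparable in the model of the figure --- so there is no gap.
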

\begin{proof}
We visualise the truth set   $\[p\to q\]$ on the right of Figure~\ref{or figure}.  The validity of this visualisation can be verified using item~5 of Definition~\ref{sat intuitionistic}.
\end{proof}

The next theorem follows from the two lemmas above.

\begin{theorem}[undefinability]
Formula $p\to q$ is not semantically equivalent to any formula in language $\Phi_1$ that does not use implication.    
\end{theorem}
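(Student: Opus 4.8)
The plan is to mimic exactly the three-lemma schema already used twice in the excerpt: identify a finite family $\mathcal{F}$ of truth sets (computed in the fixed five-world Kripke model of Figure~\ref{or figure}) that is closed under the connectives $\neg$, $\wedge$, $\vee$, contains $\[p\]$ and $\[q\]$, but does not contain $\[p\to q\]$. The paper has already handed us $\mathcal{F}=\{\[p\],\[q\],\[\top\],\[\bot\]\}$ and the three supporting lemmas, so the theorem itself requires only assembling them. First I would suppose, for contradiction, that there is some formula $\phi\in\Phi_1$ not using implication with $\[\phi\]=\[p\to q\]$ for every intuitionistic Kripke model, and in particular for the model pictured in Figure~\ref{or figure}. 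By Lemma~\ref{9-mar-a} applied to that model, $\[\phi\]\in\{\[p\],\[q\],\[\top\],\[\bot\]\}$. But the preceding lemma states $\[p\to q\]\notin\{\[p\],\[q\],\[\top\],\[\bot\]\}$ in this model, so $\[\phi\]\neq\[p\to q\]$ here, contradicting semantic equivalence. Hence no such $\phi$ exists.

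The only genuine content beyond citing earlier results is checking that the specific five-world model does its job, and that verification is itself already discharged: Lemma~\ref{7-feb-a} establishes closure of $\mathcal{F}$ under $\vee$, $\wedge$, $\neg$ (via the two tables in Figure~\ref{or figure} or, more cleanly, via $\[\phi\vee\psi\]=\[\phi\]\cup\[\psi\]$, $\[\phi\wedge\psi\]=\[\phi\]\cap\[\psi\]$ together with an explicit check of the four negation values), Lemma~\ref{9-mar-a} runs the trivial structural induction off that closure, and the separating lemma computes $\[p\to q\]$ using item~5 of Definition~\ref{sat intuitionistic}. With the valuation $\pi(p)=\{v,s,t\}$, $\pi(q)=\{v,t\}$ one has $\[p\]=\{v,s,t\}$, $\[q\]=\{v,t\}$, $\[\top\]=\{w,u,v,s,t\}$, $\[\bot\]=\emptyset$, whereas $\[p\to q\]$ omits exactly the worlds from which $p$ is reachable at a point where $q$ fails — giving a set distinct from all four — which is precisely what the Hasse diagram on the right of Figure~\ref{or figure} displays.

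I do not expect a real obstacle here, since every ingredient is already in place; if anything, the delicate point worth double-checking is the monotonicity/persistence condition (item~3 of Definition~\ref{intuitionistic Kripke model}): the chosen $\pi(p)$ and $\pi(q)$ must be upward closed in $\preceq$, and the family $\mathcal{F}$ must consist of upward-closed sets, so that the tables in Figure~\ref{or figure} really do land inside $\mathcal{F}$ rather than merely inside the powerset of $W$. Once that is confirmed for the pictured model, the theorem follows immediately. Concretely: the statement of the theorem follows from Lemma~\ref{9-mar-a} and the lemma asserting $\[p\to q\]\notin\{\[p\],\[q\],\[\top\],\[\bot\]\}$, exactly as Theorem~\ref{1-may-d} followed from Lemma~\ref{1-may-b} and Lemma~\ref{1-may-c} in the Boolean case.
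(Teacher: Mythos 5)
Your proposal is correct and follows exactly the paper's own route: the paper's proof is precisely the assembly you describe, combining Lemma~\ref{9-mar-a} (closure of $\{\[p\],\[q\],\[\top\],\[\bot\]\}$ under the implication-free connectives in the fixed five-world model, via Lemma~\ref{7-feb-a}) with the lemma stating $\[p\to q\]\notin\{\[p\],\[q\],\[\top\],\[\bot\]\}$, and instantiating the supposed equivalent formula at that one model. Your additional checks (the explicit truth sets and the persistence of $\pi(p)$, $\pi(q)$) are consistent with Figure~\ref{or figure} and do not alter the argument.
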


\subsection{Undefinability of $\neg$ through $\wedge$, $\vee$, and $\to$} 

In this subsection, we show that, in intuitionistic logic, negation is not definable through conjunction, disjunction, and implication. Because negation is a unary connective, in this section, without loss of generality, we assume that language $\Phi_1$ contains a single propositional variable $p$.

The proof follows the same pattern as the one in the previous subsection, but it uses a simpler Kripke model. In this case, the Hasse diagram of the model is a tree consisting of a root node and two child nodes: the left child and the right child. Set $\pi(p)$ contains only the left child node. In Figure~\ref{not intuit figure}, we show the truth sets $\[p\]$, $\[\top\]$, and $\[\neg p\]$ for this model. 

\begin{figure}%[ht]
\begin{center}
\vspace{0mm}
\scalebox{0.53}{\includegraphics{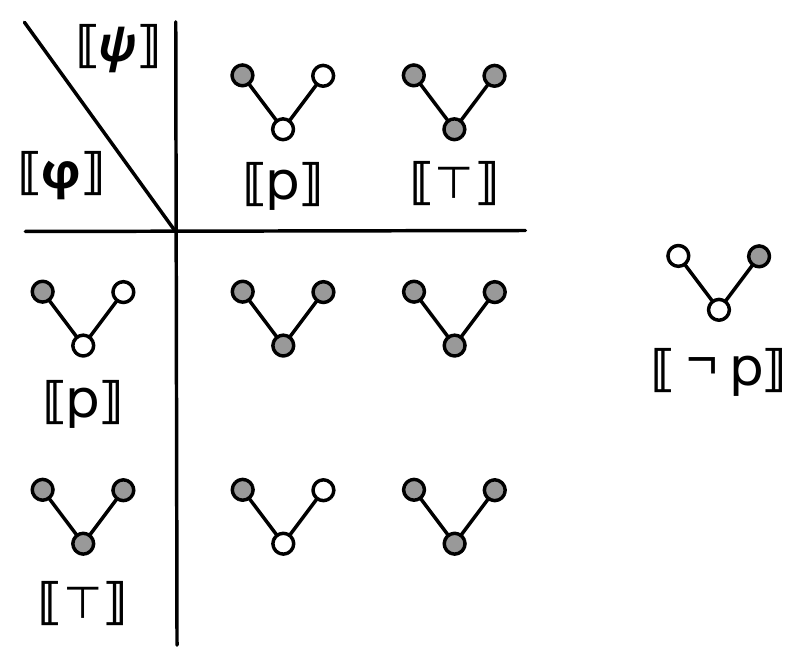}}
%\vspace{-1mm}
%\footnotesize
\caption{Truth set $\[\phi\to \psi\]$ for different combinations of truth sets $\[\phi\]$ and $\[\psi\]$ (left). Truth set $\[\neg p\]$ (right).}\label{not intuit figure}
%\vspace{-3mm}
\end{center}
%\vspace{-3mm}
\end{figure}

\begin{lemma}\label{9-mar-b}
For any two formulae $\phi,\psi\in\Phi_1$, if 
$\[\phi\],\[\psi\]\in \{\[p\],\[\top\]\}$, then 
$\[\phi\vee\psi\],\[\phi\wedge\psi\], \[\phi\to\psi\]\in \{\[p\],\[\top\]\}$.
\end{lemma}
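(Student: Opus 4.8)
The plan is to verify directly that the two-element family $\{\[p\],\[\top\]\}$ is closed under all three binary connectives in the specified Kripke model, which is the tree with root $r$ and two children: the left child $\ell$ (with $\pi(p)=\{\ell\}$) and the right child $m$. Since persistence forces $\[p\]=\{\ell\}$ and $\[\top\]=\{r,\ell,m\}=W$, there are really only two sets in play, so each of $\vee$, $\wedge$, $\to$ needs to be checked on at most $2\times 2=4$ combinations of arguments (fewer for the symmetric connectives). I would present this exactly as in Lemma~\ref{7-feb-a}: either by exhibiting the $2\times2$ table shown on the left of Figure~\ref{not intuit figure}, or by the cleaner set-theoretic observation below.

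First, for disjunction and conjunction I would invoke Definition~\ref{sat intuitionistic} and Definition~\ref{truth set intuitionistic} to note that $\[\phi\vee\psi\]=\[\phi\]\cup\[\psi\]$ and $\[\phi\wedge\psi\]=\[\phi\]\cap\[\psi\]$. Then closure is immediate: $\{\ell\}\cup\{\ell\}=\{\ell\}$, $\{\ell\}\cup W=W$, $W\cup W=W$, and dually for intersection — so both operations keep us inside $\{\{\ell\},W\}=\{\[p\],\[\top\]\}$. This disposes of two of the three cases with essentially no computation.

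The implication case is the one that actually requires the structure of the model, so that is where I would focus. Using item~5 of Definition~\ref{sat intuitionistic}, $w\Vdash\phi\to\psi$ iff for every $u\succeq w$ with $u\Vdash\phi$ we have $u\Vdash\psi$. I would compute $\[\phi\to\psi\]$ for each of the four combinations of $\[\phi\],\[\psi\]\in\{\{\ell\},W\}$. The three cases where $\[\psi\]=W$ or $\[\phi\]=\{\ell\}$ give $\[\phi\to\psi\]=W$ or (in $\[p\to p\]$) $W$ as well — the only one needing care is $\[\top\to p\]$, i.e. $\phi$ true everywhere and $\psi$ true only at $\ell$: the root $r$ has the successor $m\notin\pi(p)$, so $r\not\Vdash\top\to p$, and likewise $m\not\Vdash\top\to p$, while $\ell\Vdash\top\to p$ since $\ell$ is maximal and $\ell\in\pi(p)$; hence $\[\top\to p\]=\{\ell\}=\[p\]$, which is still in the family. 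The main (mild) obstacle is simply making sure this last subcase is handled correctly — that the upward-closure in the implication clause is evaluated at the right node — and then concluding that all outputs land in $\{\[p\],\[\top\]\}$, which completes the proof.
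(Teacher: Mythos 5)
Your proof is correct and follows essentially the same route as the paper: it handles $\vee$ and $\wedge$ by the union/intersection observation, and handles $\to$ by checking the four combinations of $\[\phi\],\[\psi\]\in\{\[p\],\[\top\]\}$ against item~5 of Definition~\ref{sat intuitionistic} — which is exactly the table the paper displays on the left of Figure~\ref{not intuit figure}, with your explicit computation of $\[\top\to p\]=\[p\]$ being the one nontrivial entry. Nothing is missing.
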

\begin{proof}
Suppose that $\[\phi\],\[\psi\]\in\{\[p\],\[\top\]\}$. Then,  $\[\phi\]\cup\[\psi\],\[\phi\]\cap\[\psi\] \in\{\[p\],\[\top\]\}$, see visualisation of the truth sets $\[p\]$ and $\[\top\]$ in Figure~\ref{not intuit figure}. Hence, $\[\phi\vee\psi\],\[\phi\wedge\psi\] \in\{\[p\],\[\top\]\}$ by items~3 and 4 of Definition~\ref{sat intuitionistic} and Definition~\ref{truth set intuitionistic}.

On the left of Figure~\ref{not intuit figure}, we visualise the truth set $\[\phi\to\psi\]$ as a function of the truth sets $\[\phi\]$ and $\[\psi\]$. The validity of this table can be verified using item~5 of Definition~\ref{sat intuitionistic} and Definition~\ref{truth set intuitionistic}.
\end{proof}

The proof of the next lemma is similar to the proof of Lemma~\ref{9-mar-a}, but instead of Lemma~\ref{7-feb-a} it uses Lemma~\ref{9-mar-b}.
\begin{lemma}
$\[\phi\]\in \{\[p\],\[\top\]\}$ for any formula $\phi\in\Phi_1$ that does not use negation.
\end{lemma}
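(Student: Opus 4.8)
The plan is to mirror the proof of Lemma~\ref{9-mar-a} almost verbatim, substituting Lemma~\ref{9-mar-b} for Lemma~\ref{7-feb-a} and the two-element family $\{\[p\],\[\top\]\}$ for the four-element family there. So the argument will be a structural induction on the formula $\phi\in\Phi_1$.

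For the base case, recall that in this subsection the language $\Phi_1$ is assumed to contain a single propositional variable $p$, so the only atomic formula is $p$ itself; and $\[p\]\in\{\[p\],\[\top\]\}$ holds trivially.

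For the induction step, a formula $\phi$ that does not use negation must have one of the forms $\psi_1\vee\psi_2$, $\psi_1\wedge\psi_2$, or $\psi_1\to\psi_2$. In particular the form $\neg\psi$ is excluded by hypothesis, and the constant $\top$, which abbreviates $p\to p$, is subsumed under the implication case, while $\bot$ cannot occur since it is defined using $\neg$. The immediate subformulae $\psi_1$ and $\psi_2$ likewise do not use negation, so the induction hypothesis gives $\[\psi_1\],\[\psi_2\]\in\{\[p\],\[\top\]\}$. Applying Lemma~\ref{9-mar-b} then yields $\[\phi\]\in\{\[p\],\[\top\]\}$, which closes the induction.

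I expect essentially no obstacle here: all the genuine content — the closure of $\{\[p\],\[\top\]\}$ under $\vee$, $\wedge$, and $\to$ in the chosen root-with-two-children Kripke model — has already been handled in Lemma~\ref{9-mar-b}. The only point that deserves a moment's care is the bookkeeping about which connectives are admissible, namely that forbidding $\neg$ also forbids $\bot$, whereas $\top$ remains available because it is merely shorthand for $p\to p$.
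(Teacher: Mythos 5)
Your proof is correct and follows exactly the route the paper intends: the paper itself only remarks that the proof mirrors that of Lemma~\ref{9-mar-a}, replacing Lemma~\ref{7-feb-a} with Lemma~\ref{9-mar-b}, which is precisely your structural induction. Your extra bookkeeping about $\top$ (available as $p\to p$) and $\bot$ (excluded because it is defined via $\neg$) is a harmless and accurate clarification.
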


\begin{lemma}
$\[\neg p\]\notin \{\[p\],\[\top\]\}$.    
\end{lemma}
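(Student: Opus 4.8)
The plan is to compute $\[\neg p\]$ directly in the three-world Kripke model fixed in this subsection and check that it equals neither $\[p\]$ nor $\[\top\]$. Denote the root by $r$, the left child by $u$, and the right child by $v$, so that $r\preceq u$, $r\preceq v$, the order is reflexive, and $u,v$ are $\preceq$-incomparable; recall that $\pi(p)=\{u\}$.

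First I would record the two easy truth sets: $\[p\]=\pi(p)=\{u\}$ by item~1 of Definition~\ref{sat intuitionistic} and Definition~\ref{truth set intuitionistic}, and $\[\top\]=W=\{r,u,v\}$, since $\top$, being $p\to p$, is satisfied at every world of every model.

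Next I would evaluate $\neg p$ world by world via item~2 of Definition~\ref{sat intuitionistic}: a world $w$ satisfies $\neg p$ exactly when no $\preceq$-successor of $w$ belongs to $\pi(p)=\{u\}$. At $r$ the successor $u$ itself lies in $\pi(p)$, and at $u$ likewise (here one uses reflexivity of $\preceq$), so $r\nVdash\neg p$ and $u\nVdash\neg p$; the only successor of $v$ is $v$, which does not lie in $\pi(p)$, so $v\Vdash\neg p$. Hence $\[\neg p\]=\{v\}$, which is precisely the visualisation claimed in Figure~\ref{not intuit figure}.

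Finally I would observe that $\{v\}\ne\{u\}$ because $u\ne v$, and $\{v\}\ne\{r,u,v\}$ because the two sets have different cardinalities; therefore $\[\neg p\]\notin\{\[p\],\[\top\]\}$, as required. No step here is genuinely difficult; the only point that deserves care is the intuitionistic reading of negation at the root, where one must remember that $\preceq$ is reflexive, so a world is checked against its own valuation of $p$ and not merely against its strict successors — it is exactly this that forces $r\nVdash\neg p$ and thereby separates $\[\neg p\]$ from $\[\top\]$.
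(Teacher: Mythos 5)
Your computation is correct and is exactly the verification the paper delegates to Figure~\ref{not intuit figure}: you evaluate $\neg p$ at each of the three worlds via item~2 of Definition~\ref{sat intuitionistic}, obtain $\[\neg p\]=\{v\}$, and compare it with $\[p\]=\{u\}$ and $\[\top\]=W$. This is the same approach as the paper's proof, just written out explicitly instead of read off the diagram.
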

\begin{proof}
We visualise the truth set   $\[\neg p\]$ on the right of Figure~\ref{not intuit figure}.  The validity of this visualisation can be verified using item~2 of Definition~\ref{sat intuitionistic}.
\end{proof}

The next theorem follows from the two lemmas above.

\begin{theorem}[undefinability]\label{neg undef intuit}
Formula $\neg p$ is not semantically equivalent to any formula in language $\Phi_1$ that does not use negation.    
\end{theorem}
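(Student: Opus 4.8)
The plan is to prove the theorem by contradiction, reusing the last two lemmas in exactly the same way that Theorem~\ref{1-may-d} was derived from Lemma~\ref{1-may-b} and Lemma~\ref{1-may-c}. Suppose, towards a contradiction, that there is a formula $\psi\in\Phi_1$ that does not use negation and that is semantically equivalent to $\neg p$. First I would fix the particular intuitionistic Kripke model described above: the tree consisting of a root together with a left child and a right child, with $\pi(p)$ equal to the singleton containing the left child. Applying the penultimate lemma of this subsection to that model yields $\[\psi\]\in\{\[p\],\[\top\]\}$.

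Next I would unfold the definition of semantic equivalence in the intuitionistic setting: since $\psi$ and $\neg p$ are semantically equivalent, their truth sets coincide in \emph{every} intuitionistic Kripke model, and in particular in the fixed model above. Hence $\[\neg p\]=\[\psi\]\in\{\[p\],\[\top\]\}$ in that model. This contradicts the last lemma, which asserts $\[\neg p\]\notin\{\[p\],\[\top\]\}$ for precisely this model. The contradiction shows that no negation-free formula $\psi$ can be semantically equivalent to $\neg p$, which is the statement of the theorem.

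I do not expect this assembly to present any real difficulty — the substance of the argument lives entirely in the two lemmas it cites. The genuinely delicate point, already handled in the proof of Lemma~\ref{9-mar-b}, is the closure of the two-element family $\{\[p\],\[\top\]\}$ under implication: unlike $\vee$ and $\wedge$, intuitionistic $\to$ is not a pointwise Boolean operation on truth sets, so one cannot argue merely by closure of a sub-Boolean-algebra and must instead verify the implication table in Figure~\ref{not intuit figure} directly from item~5 of Definition~\ref{sat intuitionistic}. A minor point worth recording is that the negation-free fragment of $\Phi_1$ gives no access to $\bot$ (which abbreviates $\neg\top$), so the closure family here need contain only $\[p\]$ and $\[\top\]$, rather than the four sets used in the previous subsection, and the single propositional variable is what makes a two-world-pattern family suffice.
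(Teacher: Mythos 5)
Your proposal is correct and follows exactly the paper's intended argument: the theorem is obtained by combining the closure lemma ($\[\psi\]\in\{\[p\],\[\top\]\}$ for negation-free $\psi$ in the fixed three-world model) with the lemma $\[\neg p\]\notin\{\[p\],\[\top\]\}$, via Definition of semantic equivalence applied to that one model. Your side remarks (implication handled by the explicit table rather than pointwise closure, and $\bot$ being unavailable without negation) are accurate and consistent with the paper's treatment.
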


\subsection{Undefinability of $\vee$ through $\neg$, $\wedge$, and $\to$}

\begin{figure}%[ht]
\begin{center}
\vspace{0mm}
\scalebox{0.53}{\includegraphics{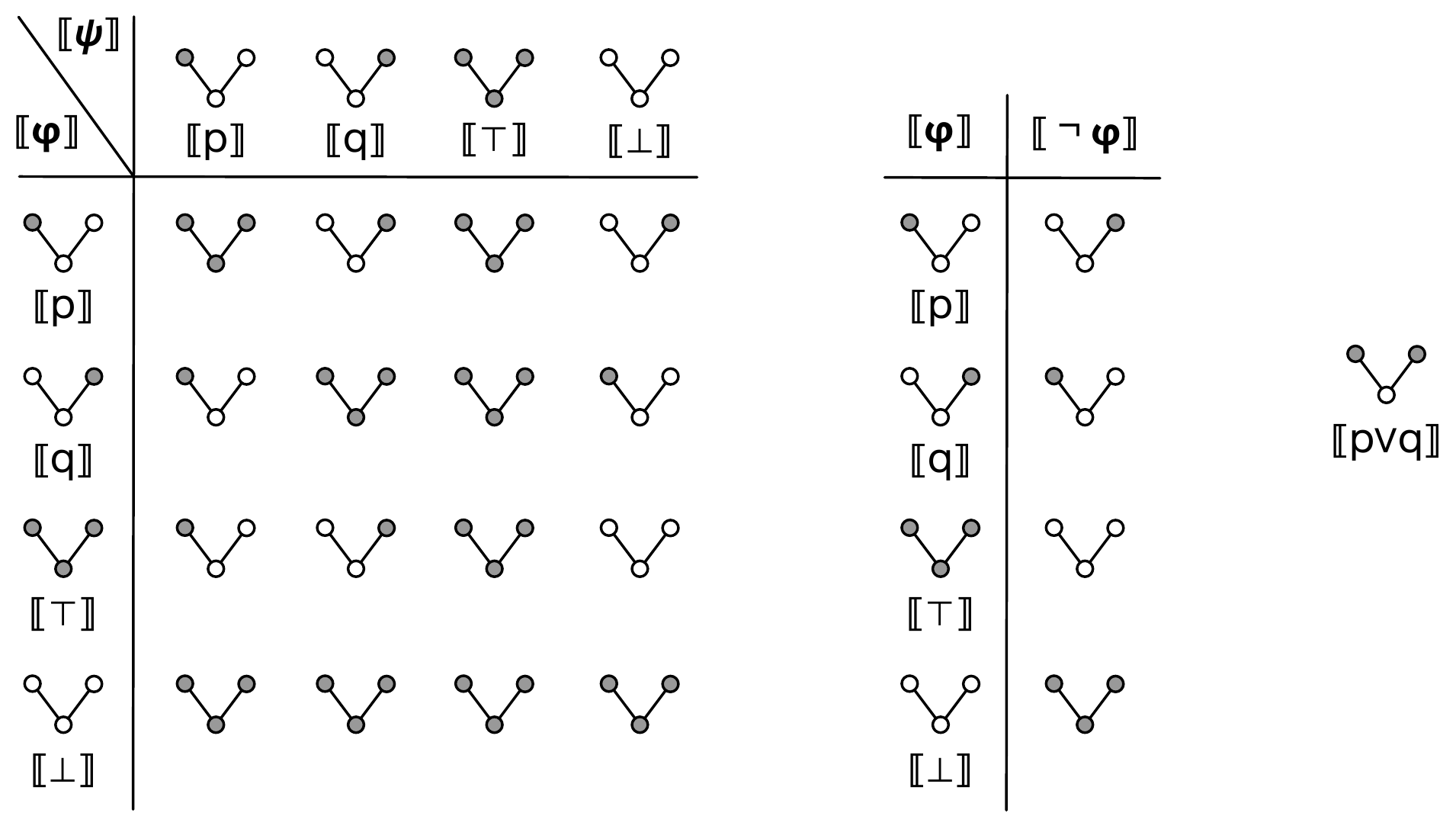}}
%\vspace{-1mm}
%\footnotesize
\caption{Truth set $\[\phi\to \psi\]$ for different combinations of truth sets $\[\phi\]$ and $\[\psi\]$ (left). Truth set $\[\neg\phi\]$ for different truth sets $\[\phi\]$ (centre). Truth set $\[p\vee q\]$ (right).}\label{or intuit figure}
%\vspace{-3mm}
\end{center}
%\vspace{-3mm}
\end{figure}

The proof of the next theorem is similar to the proof of Theorem~\ref{neg undef intuit} except that it uses Figure~\ref{or intuit figure} instead of Figure~\ref{not intuit figure}.

\begin{theorem}[undefinability]\label{or undef intuit}
Formula $p\vee q$ is not semantically equivalent to any formula in language $\Phi_1$ that does not use disjunction.    
\end{theorem}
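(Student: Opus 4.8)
The plan is to follow the same three-lemma template used for Theorem~\ref{neg undef intuit}, now over a two-variable Kripke model. First I would fix the model $M=(W,\preceq,\pi)$ whose Hasse diagram is V-shaped: a root $r$ with two incomparable worlds $a,b$ lying above it, with $\pi(p)=\{a\}$ and $\pi(q)=\{b\}$. The persistence condition (item~3 of Definition~\ref{intuitionistic Kripke model}) holds trivially because $a$ and $b$ are maximal. A direct computation from Definition~\ref{sat intuitionistic} and Definition~\ref{truth set intuitionistic} then gives $\[p\]=\{a\}$, $\[q\]=\{b\}$, $\[\top\]=W=\{r,a,b\}$, and $\[\bot\]=\emptyset$; call this four-element family $\mathcal F$.

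Second, I would prove the key closure lemma (the analogue of Lemma~\ref{9-mar-b}, but with closure under $\neg,\wedge,\to$ rather than $\vee,\wedge,\to$): for all $\phi,\psi\in\Phi_1$, if $\[\phi\],\[\psi\]\in\mathcal F$, then $\[\neg\phi\],\[\phi\wedge\psi\],\[\phi\to\psi\]\in\mathcal F$. For conjunction this is immediate, since $\[\phi\wedge\psi\]=\[\phi\]\cap\[\psi\]$ and $\mathcal F$ is closed under intersection. For negation and implication I would display the $4$-entry table for $\[\neg\phi\]$ and the $4\times4$ table for $\[\phi\to\psi\]$ as functions of $\[\phi\]$ (and $\[\psi\]$) --- these are exactly the tables drawn in Figure~\ref{or intuit figure} --- and check, using items~2 and~5 of Definition~\ref{sat intuitionistic}, that every cell lies in $\mathcal F$; the crucial observation is that neither operation ever yields the set $\{a,b\}$. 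Granting this lemma, a routine structural induction (base case: $\[p\],\[q\]\in\mathcal F$; induction step: the lemma) shows that $\[\phi\]\in\mathcal F$ for every formula $\phi\in\Phi_1$ that does not use $\vee$.

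Third, I would observe that $\[p\vee q\]=\[p\]\cup\[q\]=\{a,b\}$, which is distinct from each of $\{a\}$, $\{b\}$, $\{r,a,b\}$, and $\emptyset$, so $\[p\vee q\]\notin\mathcal F$. Combining the previous two facts with the definition of semantic equivalence for intuitionistic logic --- which quantifies over all intuitionistic Kripke models, so exhibiting the single model $M$ on which the two truth sets differ suffices --- yields the theorem.

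The only genuine work is the closure lemma for implication: verifying that the Heyting implication of $M$ sends every pair of members of $\mathcal F$ back into $\mathcal F$, and in particular never produces $\{a,b\}$. Intuitively, $\{a,b\}$ cannot arise as $\[\phi\to\psi\]$ because $\[\phi\to\psi\]$ is an up-set, and whenever such an up-set contains both maximal worlds $a$ and $b$ it is forced to contain $r$ as well (if $r\nVdash\phi\to\psi$, the failure is witnessed at some world $\succeq r$ satisfying $\phi$ but not $\psi$, and by persistence the only candidates are $a$ or $b$). Since $\mathcal F$ is finite, however, this amounts to a bounded case check rather than a real obstacle.
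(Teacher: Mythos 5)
Your proposal is correct and takes essentially the same route as the paper: the proof of Theorem~\ref{or undef intuit} there is just the three-lemma template of Theorem~\ref{neg undef intuit} rerun on the rooted model with two incomparable maximal worlds, with the four-set family $\{\[p\],\[q\],\[\top\],\[\bot\]\}$ shown closed under $\neg$, $\wedge$, and $\to$ by the tables of Figure~\ref{or intuit figure}, while $\[p\vee q\]=\{a,b\}$ lies outside that family. One small caveat: your closing heuristic that $a,b\in\[\phi\to\psi\]$ forces $r\in\[\phi\to\psi\]$ overlooks the possibility that the failing witness at $r$ is $r$ itself (which happens exactly when $\[\psi\]=\{a,b\}$, e.g.\ $\[\top\to(p\vee q)\]=\{a,b\}$); this case is excluded inside your closure lemma because $\[\psi\]\in\mathcal F$, and the finite table check you actually rely on is unaffected.
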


\begin{figure}%[ht]
\begin{center}
\vspace{0mm}
\scalebox{0.53}{\includegraphics{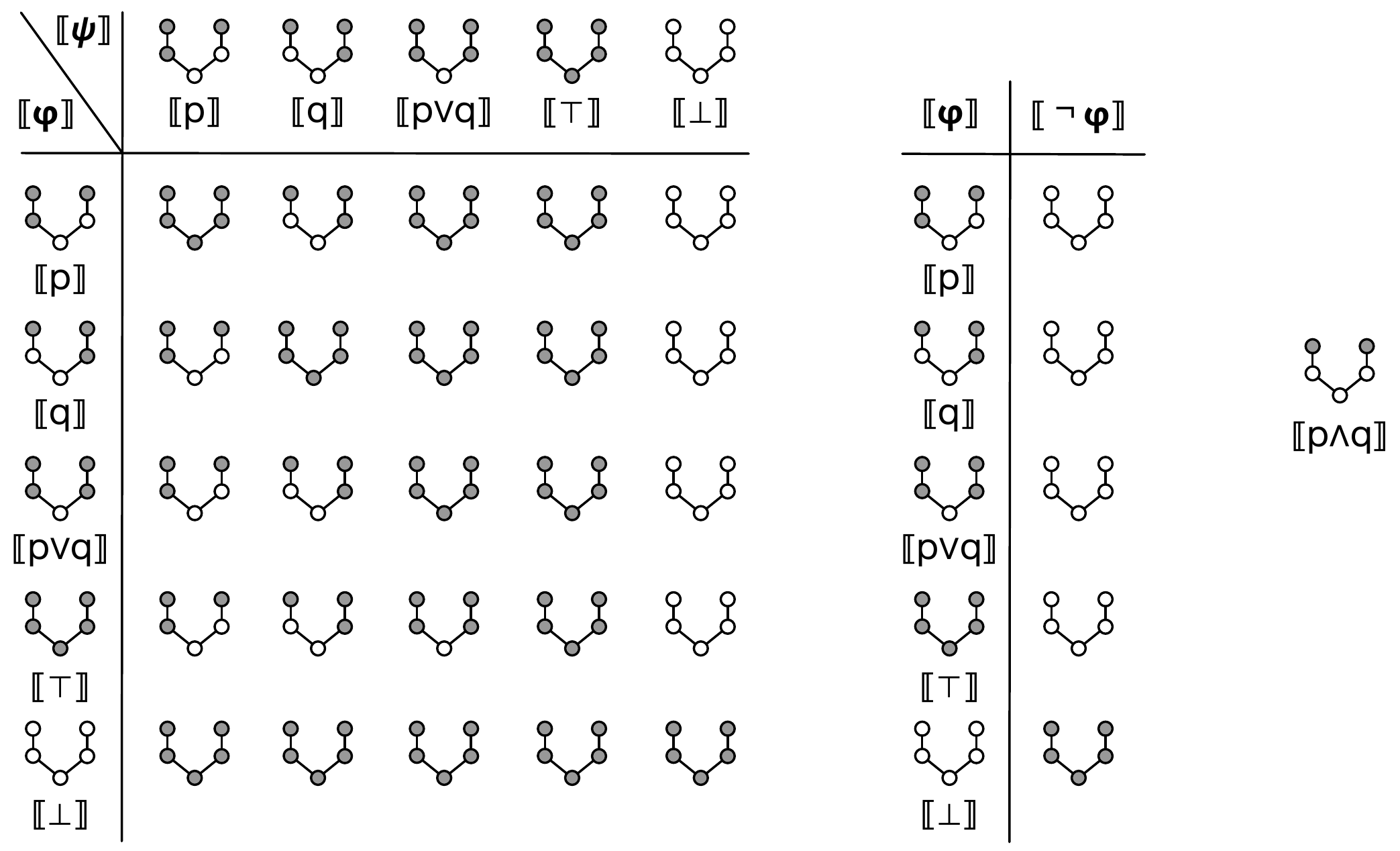}}
%\vspace{-1mm}
%\footnotesize
\caption{Truth set $\[\phi\to \psi\]$ for different combinations of truth sets $\[\phi\]$ and $\[\psi\]$ (left). Truth set $\[\neg\phi\]$ for different truth sets $\[\phi\]$ (centre). Truth set $\[p\wedge q\]$ (right).}\label{and intuit figure}
%\vspace{-3mm}
\end{center}
%\vspace{-3mm}
\end{figure}

\subsection{Undefinability of $\wedge$ through $\neg$, $\vee$, $\to$}

The proof of the next theorem is similar to the proof of Theorem~\ref{neg undef intuit} except that it uses Figure~\ref{and intuit figure} instead of Figure~\ref{not intuit figure}.

%The following definability result can be proven very similarly to Theorem~\ref{neg undef intuit} using Figure~\ref{and intuit figure} instead of Figure~\ref{not intuit figure}.

\begin{theorem}[undefinability]\label{and undef intuit}
Formula $p\wedge q$ is not semantically equivalent to any formula in language $\Phi_1$ that does not use conjunction.    
\end{theorem}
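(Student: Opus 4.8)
The plan is to mirror exactly the structure used in the previous three subsections, which all reduce to the template established in the proof of Theorem~\ref{neg undef intuit}. First I would fix the Kripke model whose Hasse diagram and valuation are implicitly described by Figure~\ref{and intuit figure}: since $p\wedge q$ involves two propositional variables, the model must distinguish $\[p\]$, $\[q\]$, $\[\top\]$, $\[\bot\]$, and whatever additional truth sets are forced by closure. By inspecting Figure~\ref{and intuit figure}, I would read off the finite family $\mathcal{F}$ of truth sets appearing as row/column labels in the left (implication) table and in the centre (negation) table — call it $\mathcal{F}=\{\[p\],\[q\],\[\top\],\[\bot\],\dots\}$ — and verify it contains $\[p\]$ and $\[q\]$ but not $\[p\wedge q\]$ (the latter being shown on the right of the figure).

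The three key steps, in order, would be: (1) a closure lemma, analogous to Lemma~\ref{9-mar-b}, stating that if $\[\phi\],\[\psi\]\in\mathcal{F}$ then $\[\neg\phi\],\[\phi\vee\psi\],\[\phi\to\psi\]\in\mathcal{F}$; this is proved by the three tables in Figure~\ref{and intuit figure}, with the union and complement cases following from $\[\phi\vee\psi\]=\[\phi\]\cup\[\psi\]$ and item~2 of Definition~\ref{sat intuitionistic}, and the implication case verified cell-by-cell using item~5 of Definition~\ref{sat intuitionistic}. (2) An induction on structural complexity showing $\[\phi\]\in\mathcal{F}$ for every $\phi\in\Phi_1$ not using conjunction, with the base case handled by $\[p\],\[q\]\in\mathcal{F}$ and the inductive step by the closure lemma — this is the analogue of Lemma~\ref{9-mar-a}. (3) The observation $\[p\wedge q\]\notin\mathcal{F}$, read directly from Figure~\ref{and intuit figure}. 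Combining (2) and (3) with the definition of semantic equivalence in the intuitionistic context yields the theorem.

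The main obstacle, and really the only nontrivial content, is choosing the Kripke model correctly so that $\mathcal{F}$ is genuinely closed under $\neg$, $\vee$, and $\to$ while still separating $\[p\wedge q\]$ from everything in $\mathcal{F}$. In the earlier subsections the authors found that the simple tree (root with left and right child, $\pi(p)$ the left child) works for $\neg$ and $\vee$; for $\wedge$ one expects a similarly small model, but one must check that applying $\to$ does not generate the conjunction pattern. Concretely, the risk is that $\[p\to q\]$ or some iterated implication lands outside the intended family or, worse, coincides with $\[p\wedge q\]$; the figure is asserted to rule this out, so the "proof" amounts to trusting the tabulation. Everything else — the induction, the reduction to the definition of semantic equivalence — is routine and identical to the pattern already spelled out for Theorem~\ref{neg undef intuit}.
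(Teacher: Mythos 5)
Your proposal follows exactly the paper's intended argument: the paper's proof of Theorem~\ref{and undef intuit} is literally the template of Theorem~\ref{neg undef intuit} (closure lemma for the family of truth sets under $\neg$, $\vee$, $\to$ via the tables, structural induction, then the observation that $\[p\wedge q\]$ lies outside the family), with Figure~\ref{and intuit figure} supplying the Kripke model and the tabulated verification. The only part you leave unspecified --- the concrete model and family $\mathcal{F}$ --- is exactly the part the paper itself delegates to that figure, and you correctly identify it as the sole nontrivial content.
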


\section{Three-Valued Logic}

\begin{figure}[ht]
\begin{center}
\vspace{0mm}
\scalebox{0.6}{\includegraphics{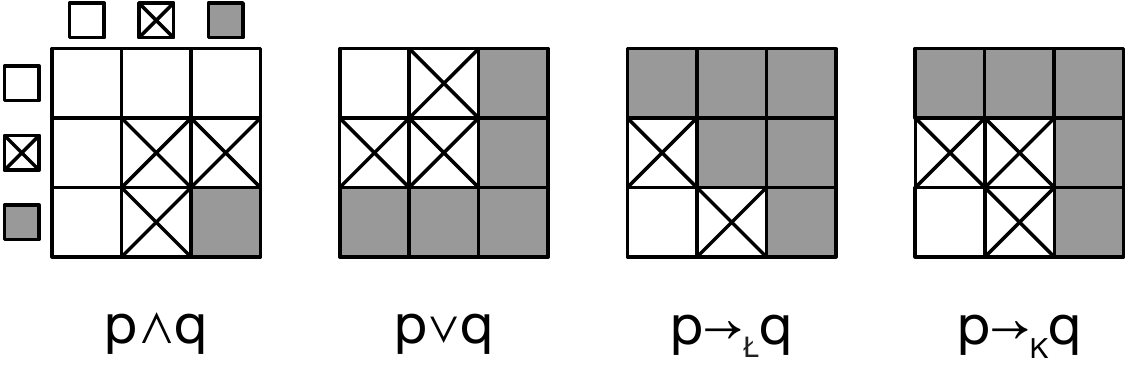}}
%\vspace{-1mm}
%\footnotesize
\caption{Truth tables for binary connectives in 3-valued logic.}\label{values welcome figure}
%\vspace{-3mm}
\end{center}
%\vspace{-3mm}
\end{figure}
In this section, we apply our technique to investigate the definability of logical connectives in 3-valued logic. This logic contains three truth values: 0, $\frac{1}{2}$, and $1$, often referred to as ``false'', ``unknown'', and ``true'', respectively. The meanings of propositional connectives $\wedge$, $\vee$, and $\neg$ in 3-valued logic are a straightforward generalisation of their meanings in Boolean logic: 
$p\wedge q=\min\{p,q\}$,
$p\vee q=\max\{p,q\}$,
and $\neg p=1- p$.
Thus, for example, if the value of $p$ is ``unknown'', then the value of the expression $p\vee \neg p$ is also ``unknown''. In this article, we visualise values ``false'', ``unknown'', and ``true'' as  a white square, a diagonally crossed square, and a grey square, respectively. The first two diagrams in Figure~\ref{values welcome figure} show truth tables for connectives $\wedge$ and $\vee$. For example, in the left-most diagram, the crossed cell in the middle of the last row represents the fact that if $p=1$ (the third row) and $q=\frac{1}{2}$ (the second column), then $p\wedge q=\frac{1}{2}$.

Defining the meaning of implication in 3-valued logic is less straightforward. Two such definitions are suggested: one by \L ukasiewicz~\cite[p.213]{ll32sl} and the other by Kleene~\cite{k38jsl}. We denote their implications by $\limp$ and $\kimp$, respectively. The truth tables for these implications are shown in the two right-most diagrams in Figure~\ref{values welcome figure}. %The \L ukasiewicz's definition guarantees that $p\limp q$ always has value ``true''. At the same time, Kleene's implication is definable through negation and disjunction the same way as in Boolean logic: $p\kimp q$ is equivalent to $\neg p \vee q$.
 In this section, we study interdefinability of 3-valued connectives $\neg$, $\wedge$, $\vee$, $\kimp$, and $\limp$. 

 By $\Phi_3$ we denote the language defined by the following grammar:
 $$
\phi:=p\;|\;\neg\phi\;|\;\phi\wedge\phi\;|\;\phi\vee\phi\;|\;\phi\limp\phi\;|\;\phi\kimp\phi,
 $$
 where $p$ is a propositional variable. Because each of the connectives has at most two arguments, for the purposes of proving undefinability, it suffices to assume that there are only two propositional variables, $p$ and $q$.

\subsection{Fuzzy Truth Sets}
 
To apply the truth set algebra technique in the setting of 3-valued logic, we need to make one small modification to this technique. Namely, instead of regular truth sets, we consider {\em fuzzy truth sets} of formulae. In our case, a fuzzy set can have only three degrees of membership: an element can belong, half-belong, or not belong to a fuzzy set. 
 
We consider operations union, intersection, and complement on fuzzy sets. We define the degree of membership in a {\em union} of two fuzzy sets as the {\em maximum} of the degrees of membership in the two original fuzzy sets. For example, suppose fuzzy set $X$ contains an apple and half-contains a banana. In addition, let fuzzy set $Y$ half-contain a banana and contain a carrot. In that case, the union of fuzzy sets $X$ and $Y$ contains an apple, a carrot, and half-contains a banana.

Similarly, we define the degree of membership in an {\em intersection} of two fuzzy sets as the {\em minimum} of the degrees of membership in the two original fuzzy sets. In our example, the intersection of fuzzy sets $X$ and $Y$ half-contains a banana and nothing else. 

Finally, consider any regular (not fuzzy) set $U$ and any fuzzy set $S$ of elements from set $U$. We define a complement of the fuzzy set $S$ with respect to the universe $U$. The degree of the membership of an element in the complement is $1-d$, where $d$ is the degree of membership of the same element in the fuzzy set $S$. In our example, assuming that the universe consists of an apple, a banana, and a carrot, the complement of the fuzzy set $X$ is the fuzzy set $Y$.

Recall our assumption that language $\Phi_3$ contains only propositional variables $p$ and $q$. For any formula $\phi\in\Phi_3$ and any values $b_1,b_2\in \{0,\frac{1}{2},1\}$, by $\phi[b_1,b_2]$ we denote the value of the formula $\phi$ when $p$ has value $b_1$ and $q$ has value $b_2$.
We are now ready to define a fuzzy truth set.
\begin{definition}\label{fuzzy truth set definition}
For any formula $\phi\in\Phi_3$, the fuzzy truth set 
$\[\phi\]$ is a fuzzy set of all pairs $(b_1,b_2)\in \{0,\frac{1}{2},1\}^2$ such that 
\begin{enumerate}
    \item $(b_1,b_2)$ belongs to the fuzzy set $\[\phi\]$ if $\phi[b_1,b_2]=1$,
    \item $(b_1,b_2)$ half-belongs to the fuzzy set $\[\phi\]$ if $\phi[b_1,b_2]=\frac{1}{2}$.
\end{enumerate}   
\end{definition}
We visualise the fuzzy truth set $\[\phi\]$ of an arbitrary formula $\phi$ as a $3\times 3$ table. A cell $(b_1,b_2)$ is coloured white if the pair $(b_1,b_2)$ does not belong to $\[\phi\]$, it is crossed if the pair $(b_1,b_2)$ half-belongs to $\[\phi\]$, and it is coloured grey if the pair $(b_1,b_2)$ belongs to $\[\phi\]$. For example, the four diagrams in Figure~\ref{values welcome figure} visualise the fuzzy truth sets $\[p\wedge q\]$, $\[p\vee q\]$, $\[p\limp q\]$, and $\[p\kimp q\]$.

\begin{definition}\label{25-feb-c}
In the context of 3-valued logic, formulae $\phi,\psi\in\Phi_3$ are semantically equivalent if $\[\phi\]=\[\psi\]$.
\end{definition}

Next, we state and prove a very simple undefinability result about 3-valued logic that does not require the truth set algebra technique. 
\begin{theorem}\label{negation undefinability theorem}
Formula $\neg p$ is not semantically equivalent to any formula containing only connectives $\wedge$, $\vee$, $\kimp$, and $\limp$.
\end{theorem}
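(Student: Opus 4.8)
The plan is to find a property shared by all the connectives $\wedge$, $\vee$, $\kimp$, and $\limp$ (and by propositional variables) that is preserved under composition but fails for $\neg p$. Inspecting the truth tables in Figure~\ref{values welcome figure}, the natural candidate is \emph{monotonicity in the truth ordering} $0 < \frac{1}{2} < 1$: each of these connectives is order-preserving in each argument, whereas $\neg$ is order-reversing. Concretely, I would call a formula $\phi\in\Phi_3$ (in the two variables $p,q$) monotone if $b_1\le b_1'$ and $b_2\le b_2'$ imply $\phi[b_1,b_2]\le\phi[b_1',b_2']$.

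First I would record the base fact that the two projections $p$ and $q$ are monotone. Next I would prove the induction step: if $\phi,\psi$ are monotone then so are $\phi\wedge\psi=\min\{\phi,\psi\}$, $\phi\vee\psi=\max\{\phi,\psi\}$, $\phi\limp\psi$, and $\phi\kimp\psi$. For $\wedge$ and $\vee$ this is immediate since $\min$ and $\max$ are monotone in both arguments. For the two implications one checks directly from the $3\times3$ truth tables in Figure~\ref{values welcome figure} that each is nonincreasing is \emph{not} what we want --- rather, one verifies that holding one argument fixed and increasing the other never decreases the output (both $\limp$ and $\kimp$ are antitone in the antecedent, so the right framing is: replacing the \emph{whole} formula-value pair by a pointwise-larger one keeps the value no smaller; since the antecedent subformula's value can go \emph{down} while staying within a larger valuation, one must be slightly careful here). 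I would therefore phrase the invariant directly in terms of valuations of the variables: if $v,v'$ are valuations with $v(p)\le v'(p)$ and $v(q)\le v'(q)$, then $\phi^{v}\le\phi^{v'}$; this is clearly closed under $\wedge,\vee$, and a one-line case check on the tables shows it is closed under $\limp$ and $\kimp$ as well. By induction every $\phi\in\Phi_3$ built only from $\wedge,\vee,\kimp,\limp$ has this monotonicity property, i.e.\ its fuzzy truth set $\[\phi\]$ is ``upward closed'' with respect to the componentwise order on $\{0,\frac12,1\}^2$ in the appropriate graded sense.

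Finally I would observe that $\neg p$ fails this property: taking $b_1=0$ and $b_1'=1$ gives $(\neg p)[0,b_2]=1 > 0 = (\neg p)[1,b_2]$, so $\[\neg p\]$ is not monotone. Hence by Definition~\ref{25-feb-c}, $\neg p$ cannot be semantically equivalent to any monotone formula, and in particular not to any formula over $\{\wedge,\vee,\kimp,\limp\}$.

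The only delicate point --- the ``main obstacle'' --- is getting the induction step for $\limp$ and $\kimp$ stated correctly, since those connectives are antitone in their first argument, so one cannot naively say ``a monotone function of monotone functions is monotone.'' The clean fix, as indicated above, is to state the invariant as monotonicity of the formula's value as a function of the underlying valuation of $p$ and $q$ (not as a structural closure property of individual subformulae): increasing $v$ may increase or decrease the value of the antecedent subformula, but the claim to check is simply that, table by table, $\phi^{v}\le\phi^{v'}$ whenever $v\le v'$ pointwise, and for a compound $\alpha\limp\beta$ this follows from $\alpha^{v}\le\alpha^{v'}$ and $\beta^{v}\le\beta^{v'}$ together with a finite verification that $a\le a'$ and $b\le b'$ imply $(a\limp b)\le(a'\limp b')$ --- and likewise for $\kimp$. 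That finite verification is routine from Figure~\ref{values welcome figure}.
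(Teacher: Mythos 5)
There is a genuine gap, and it sits exactly at the point you flagged as the ``only delicate point.'' The finite verification you defer to --- that $a\le a'$ and $b\le b'$ imply $(a\limp b)\le(a'\limp b')$, and likewise for $\kimp$ --- is false. Take $a=0$, $a'=1$, $b=b'=0$: then $0\limp 0=1$ but $1\limp 0=0$, and similarly $0\kimp 0=1$ while $1\kimp 0=0$. Both implications are antitone in their first argument, so reformulating the invariant ``as a function of the valuation'' does not rescue it: the formula $p\limp q$ itself already violates valuation-monotonicity (at $v=(0,0)$ its value is $1$, at $v'=(1,0)$ it is $0$). Indeed the invariant cannot be true in any form, since $p\kimp q$ is semantically equivalent to $\neg p\vee q$, so non-monotone formulae are definable from the allowed connectives; monotonicity simply does not separate $\neg p$ from the clone generated by $\{\wedge,\vee,\kimp,\limp\}$.

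The repair is to weaken the invariant to something the implications do preserve, and the weakest useful choice is preservation of the single valuation in which every variable has value $1$: since $1\wedge 1=1\vee 1=1\limp 1=1\kimp 1=1$, a trivial induction shows every formula over $\{\wedge,\vee,\kimp,\limp\}$ evaluates to $1$ when $p$ and $q$ are both $1$, whereas $\neg p$ evaluates to $0$ there. This one-point argument is exactly the paper's proof of Theorem~\ref{negation undefinability theorem}; your overall strategy (find a composition-closed property of the allowed connectives that $\neg$ violates) is the right one, but the property you chose is not closed under these connectives.
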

\begin{proof}
Observe that if all propositional variables are assigned value $1$, then the value of any formula that contains only connectives $\wedge$, $\vee$, $\kimp$, and $\limp$ is $1$, see Figure~\ref{values welcome figure}. At the same time, the value of $\neg p$ is $0$.   
\end{proof}

\subsection{Expressive Power of Kleene's Implication}

In this subsection, we illustrate how the truth set algebra method can be used to prove undefinability results in 3-valued logic. Namely, we show a relatively simple observation that neither of the other connectives can be defined through Kleene's implication.

\begin{figure}%[ht]
\begin{center}
\vspace{0mm}
\scalebox{0.6}{\includegraphics{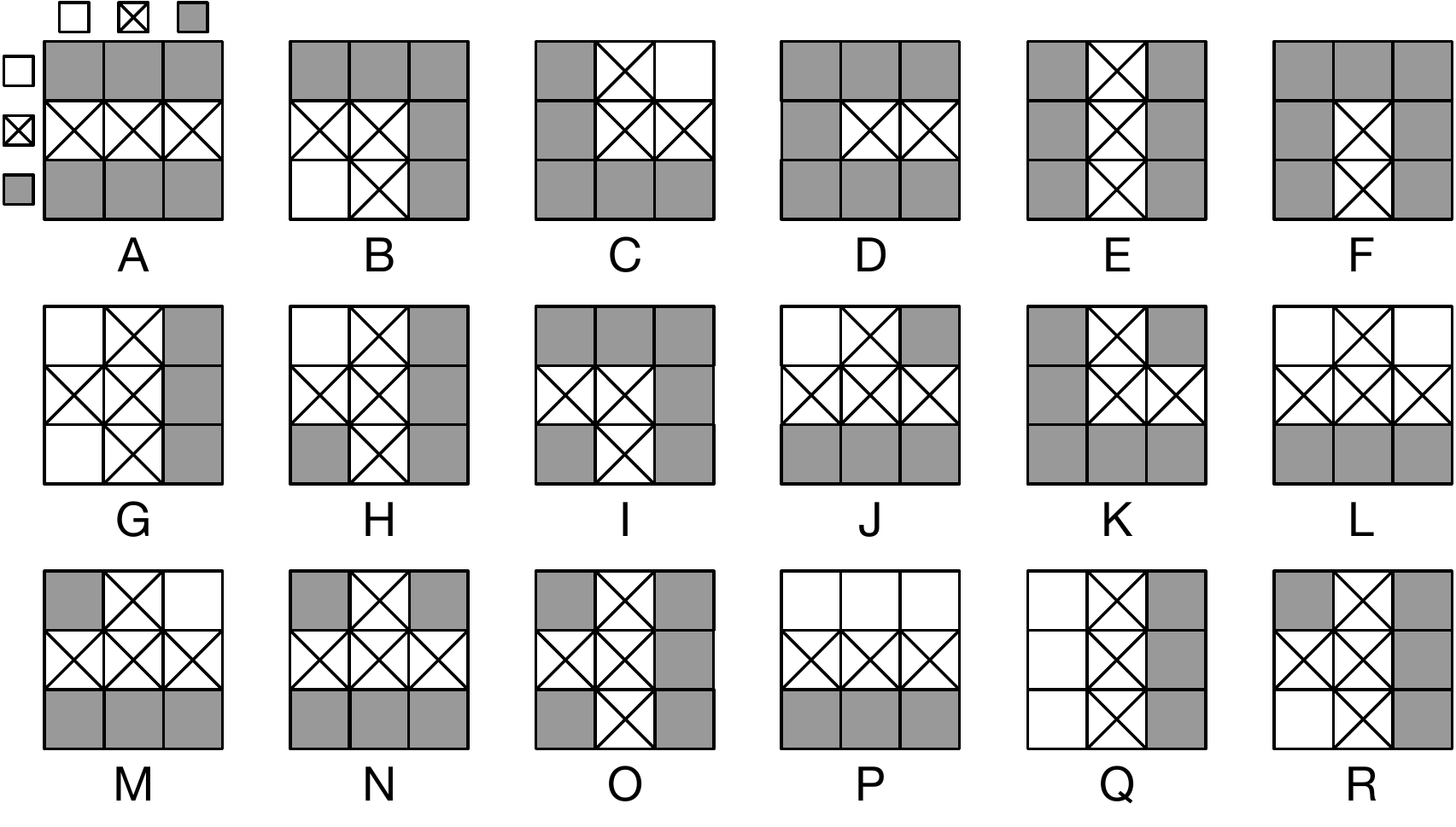}}
%\vspace{-1mm}
%\footnotesize
\caption{Towards the proof of Theorem~\ref{3-valued via kimp theorem}.}\label{18-diagrams}
%\vspace{-3mm}
\end{center}
%\vspace{-3mm}
\end{figure}
In the rest of this subsection, we use names $A$, \dots, $R$ to refer to the 18 fuzzy truth sets depicted in Figure~\ref{18-diagrams}. Note that $P=\[p\]$ and $Q=\[q\]$. Let $\mathcal{S}$ be the family $\{A,B,C,D,E,F,G,H,I,J,K,L,M,N,O,P,Q,R\}$ of these 18 fuzzy truth sets.

\begin{lemma}\label{25-feb-a}
For any formulae $\phi,\psi\in \Phi_3$, if $\[\phi\],\[\psi\]\in \mathcal{S}$, then $\[\phi\kimp\psi\]\in \mathcal{S}$.
\end{lemma}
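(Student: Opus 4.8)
The plan is to verify, by a finite case analysis, that the family $\mathcal{S}$ of $18$ fuzzy truth sets is closed under the operation induced by Kleene's implication $\kimp$. Since $\[\phi\kimp\psi\]$ depends only on the pair of fuzzy truth sets $\[\phi\],\[\psi\]$ — indeed, for each pair $(b_1,b_2)\in\{0,\tfrac12,1\}^2$ the value $(\phi\kimp\psi)[b_1,b_2]$ is $\phi[b_1,b_2]\kimp\psi[b_1,b_2]$, computed cellwise from the Kleene truth table in Figure~\ref{values welcome figure} — the operation $\kimp$ on fuzzy truth sets is a well-defined binary operation on the set of all $3\times 3$ ternary tables. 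So the claim is purely combinatorial: for each of the $18\times 18=324$ ordered pairs $(X,Y)$ with $X,Y\in\mathcal{S}$, one forms the cellwise Kleene implication $X\kimp Y$ and checks that the resulting table again lies in $\mathcal{S}$.

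First I would make explicit the cellwise description just mentioned, so that the reader sees the reduction from formulas to a pure operation on tables; this is the analogue of the observation in the intuitionistic section that $\[\phi\vee\psi\]=\[\phi\]\cup\[\psi\]$. Then I would present the $18\times 18$ table (as a figure, in the style of Figure~\ref{intro figure} and Figure~\ref{18-diagrams}) whose cell in row $X$, column $Y$ displays the diagram of $X\kimp Y$, and note that every entry is one of the $18$ diagrams $A,\dots,R$. Rather than writing out all $324$ verifications in prose, I would phrase the proof as: ``The statement is verified by the exhaustive case analysis displayed in Figure~\ref{...}; each entry is computed cellwise using the truth table for $\kimp$ in Figure~\ref{values welcome figure} and is seen to belong to $\mathcal{S}$.'' One can also cut the work down using symmetry: swapping $p$ and $q$ transposes the diagrams and permutes $\mathcal{S}$, so it suffices to check the pairs on and above (say) the diagonal, roughly halving the cases.

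The main obstacle is not mathematical depth but bookkeeping: one must be confident that $\mathcal{S}$ really was chosen to be closed under $\kimp$, i.e.\ that the authors' Figure~\ref{18-diagrams} lists exactly the right $18$ diagrams and that no cellwise implication escapes the family. Concretely, the delicate point is the behaviour of the crossed (value $\tfrac12$) cells: under $\kimp$, an input pair $(\tfrac12,\tfrac12)$ yields $1$ (unlike $\limp$, this is the key feature distinguishing the two implications), while $(1,\tfrac12)$ yields $\tfrac12$ and $(\tfrac12,0)$ yields $\tfrac12$, so the antecedent table's ``true'' region and ``half'' region interact with the consequent table in a slightly intricate way; I would double-check a few representative pairs (e.g.\ $P\kimp Q$, $Q\kimp P$, and a pair involving constants) by hand to confirm the pattern before relying on the figure. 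Once closure is confirmed, this lemma plays exactly the role of Lemma~\ref{1-may-a}: combined with an induction on formula structure it will show $\[\phi\]\in\mathcal{S}$ for every $\phi$ built from $\kimp$ alone, and then exhibiting $\[p\wedge q\]$, $\[p\vee q\]$, $\[\neg p\]$, $\[p\limp q\]$ outside $\mathcal{S}$ will finish Theorem~\ref{3-valued via kimp theorem}.
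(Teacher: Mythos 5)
Your proposal takes essentially the same route as the paper: the paper likewise reduces $\kimp$ to a cellwise operation on the $3\times 3$ diagrams, works one representative pair in detail ($\[\phi\]=A$, $\[\psi\]=Q$, giving $G$), and records all $18\times 18$ results in Table~\ref{3 valued big table}, observing that every entry belongs to $\mathcal{S}$. One small caution about your optional symmetry shortcut: since $\kimp$ is not commutative, swapping $p$ and $q$ pairs $(X,Y)$ with the pair of transposed diagrams, not with $(Y,X)$, so the reduction is to orbits of that transposition action rather than to ``pairs on and above the diagonal''; this does not affect the correctness of the main, exhaustive argument.
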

\begin{proof}
Consider first the case when $\[\phi\]=A$ and $\[\psi\]=Q$. To compute the fuzzy truth set $\[\phi\kimp\psi\]$, we compute the degree of membership for each pair $(b_1,b_2)$ in this fuzzy set. Consider, for example, the case $b_1=\frac{1}{2}$ and $b_2=0$ which is visualised as the middle-left cell in each diagram. Note that the middle-left cells in the diagrams of fuzzy sets $A$ and $Q$ are crossed and white, respectively, see Figure~\ref{18-diagrams}. Hence, pair $(b_1,b_2)$ half-belongs to the fuzzy truth sets $\[\phi\]=A$ and does not belong to the fuzzy truth set $\[\psi\]=Q$. Thus, by Definition~\ref{fuzzy truth set definition}, the values $\phi[\frac{1}{2},0]$ and $\psi[\frac{1}{2},0]$ are $\frac{1}{2}$ and $0$, respectively. Observe that the value of $\frac{1}{2}\kimp 0$ is $\frac{1}{2}$, see the last diagram in Figure~\ref{values welcome figure}. Hence,  $(\phi\kimp\psi)[\frac{1}{2},0]=\frac{1}{2}$. Then, by Definition~\ref{fuzzy truth set definition}, the pair $(b_1,b_2)$ half-belongs to the fuzzy truth set $\[\phi\kimp\psi\]$. Thus, the middle-left cell in the diagram visualising the fuzzy truth set $\[\phi\kimp\psi\]$ is crossed. By repeating the same computation for each pair $(b_1,b_2)$, one can see that the fuzzy truth set $\[\phi\kimp\psi\]$ is fuzzy set $G$, see Figure~\ref{18-diagrams}. We show this result by placing the letter G in row A, column Q of Table~\ref{3 valued big table}. Therefore, $\[\phi\kimp\psi\]\in\mathcal{S}$.

\begin{table}[]
    \centering
    \begin{tabular}{c|cccccccccccccccccc}
\toprule
 &A&B&C&D&E&F&G&H&I&J&K&L&M&N&O&P&Q&R\\
\midrule
A&A&B&C&D&E&F&G&H&I&J&K&L&M&N&O&P&G&R\\
B&A&I&C&D&E&F&H&H&I&J&K&L&M&N&O&P&H&O\\
C&A&B&K&D&E&F&G&H&I&J&K&J&N&N&O&J&Q&R\\
D&A&B&C&D&E&F&G&H&I&J&K&L&M&N&O&P&Q&R\\
E&A&B&C&D&E&F&G&H&I&J&K&L&M&N&O&L&Q&R\\
F&A&B&C&D&E&F&G&H&I&J&K&L&M&N&O&P&Q&R\\
G&A&I&C&D&E&F&O&O&I&N&K&M&M&N&O&M&O&O\\
H&A&B&C&D&E&F&R&O&I&N&K&M&M&N&O&M&R&R\\
I&A&B&C&D&E&F&G&H&I&J&K&L&M&N&O&P&G&R\\
J&A&B&C&D&E&F&R&O&I&N&K&M&M&N&O&M&R&R\\
K&A&B&C&D&E&F&G&H&I&J&K&L&M&N&O&L&Q&R\\
L&A&B&K&D&E&F&R&O&I&N&K&N&N&N&O&N&R&R\\
M&A&B&K&D&E&F&G&H&I&J&K&J&N&N&O&J&G&R\\
N&A&B&C&D&E&F&G&H&I&J&K&L&M&N&O&L&G&R\\
O&A&B&C&D&E&F&G&H&I&J&K&L&M&N&O&L&G&R\\
P&A&B&D&D&F&F&B&I&I&A&D&A&A&A&I&A&B&B\\
Q&D&F&C&D&E&F&E&E&F&K&K&C&C&K&E&C&E&E\\
R&A&I&C&D&E&F&H&H&I&J&K&L&M&N&O&L&H&O\\
\bottomrule
    \end{tabular}
    \caption{The fuzzy truth set $\[\phi\to_{\text{K}}\psi\]$, where $\[\phi\]$ is the row label and $\[\psi\]$ is the column label.}
    \label{3 valued big table}
\end{table}

The other cases are similar. We show the corresponding fuzzy sets $\[\phi\kimp\psi\]$ in Table~\ref{3 valued big table}. The statement of the lemma holds because all sets in Table~\ref{3 valued big table} belong to family $\mathcal{S}$.
\end{proof}

\begin{lemma}\label{25-feb-b}
$\[\phi\]\in\mathcal{S}$ for any formula $\phi\in\Phi_3$ that uses connective $\kimp$ only.
\end{lemma}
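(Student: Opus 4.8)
The plan is to prove this by induction on the structural complexity of the formula $\phi$, exactly paralleling the structure of Lemma~\ref{1-may-b} and Lemma~\ref{9-mar-a} but using Lemma~\ref{25-feb-a} as the closure step. Since $\phi$ uses only the connective $\kimp$, the only cases are the base case (a propositional variable) and the inductive case $\phi = \psi_1 \kimp \psi_2$.

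For the base case, I would observe that $\[p\] = P$ and $\[q\] = Q$, and both $P$ and $Q$ are members of the family $\mathcal{S}$ by the definition of $\mathcal{S}$. (This is exactly the remark made just before Lemma~\ref{25-feb-a} that $P = \[p\]$ and $Q = \[q\]$.) Here I am using the earlier simplifying assumption that $\Phi_3$ contains only the two propositional variables $p$ and $q$, so these are the only base cases to check.

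For the inductive step, suppose $\phi$ has the form $\psi_1 \kimp \psi_2$. By the induction hypothesis, $\[\psi_1\], \[\psi_2\] \in \mathcal{S}$. Then Lemma~\ref{25-feb-a} applies directly and gives $\[\phi\] = \[\psi_1 \kimp \psi_2\] \in \mathcal{S}$, completing the induction.

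There is essentially no obstacle here: the entire combinatorial work — verifying that $\mathcal{S}$ is closed under $\kimp$, i.e.\ filling in and checking all $18 \times 18$ entries of Table~\ref{3 valued big table} — has already been discharged in the proof of Lemma~\ref{25-feb-a}. The only thing to be careful about is making sure the base case genuinely lands inside $\mathcal{S}$, which it does since $P, Q \in \mathcal{S}$ by construction of the family, and that the restriction to two propositional variables is invoked so that $p$ and $q$ exhaust the base cases. So the proof is a short, routine induction that simply chains together the base observation with one invocation of Lemma~\ref{25-feb-a} per application of $\kimp$.
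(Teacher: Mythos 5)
Your proof is correct and follows exactly the same route as the paper: induction on structural complexity, with the base case handled by $\[p\]=P\in\mathcal{S}$ and $\[q\]=Q\in\mathcal{S}$, and the inductive step for $\psi_1\kimp\psi_2$ discharged by Lemma~\ref{25-feb-a}. Nothing is missing.
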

\begin{proof}
We prove the statement of the lemma by induction on the structural complexity of formula $\phi$. If $\phi$ is propositional variable $p$, then $\[p\]=P\in\mathcal{S}$, see Figure~\ref{18-diagrams}. Similarly, if $\phi$ is propositional variable $q$, then $\[q\]=Q\in\mathcal{S}$. If formula $\phi$ has the form $\phi_1\kimp\phi_2$, then the statement of the lemma follows from Lemma~\ref{25-feb-a} and the induction hypothesis.
\end{proof}

\begin{theorem}[undefinability]\label{3-valued via kimp theorem}
Each of the formulae $p\wedge q$, $p \vee q$, and $p\limp q$ is not 3-value-equivalent to a formula that uses connective  
$\kimp$ only.
\end{theorem}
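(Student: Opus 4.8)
The plan is to combine Lemma~\ref{25-feb-b} with a finite inspection of diagrams. By Lemma~\ref{25-feb-b}, every formula $\phi\in\Phi_3$ that uses connective $\kimp$ only satisfies $\[\phi\]\in\mathcal{S}$. Hence, in view of Definition~\ref{25-feb-c}, it suffices to verify that none of the three fuzzy truth sets $\[p\wedge q\]$, $\[p\vee q\]$, and $\[p\limp q\]$ is a member of the family $\mathcal{S}=\{A,B,\dots,R\}$: if, say, $p\wedge q$ were 3-value-equivalent to some formula $\psi$ using $\kimp$ only, then $\[p\wedge q\]=\[\psi\]\in\mathcal{S}$, a contradiction, and similarly for $p\vee q$ and $p\limp q$.

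First I would write down the three target fuzzy truth sets explicitly; these are exactly the first three diagrams of Figure~\ref{values welcome figure}. Using $p\wedge q=\min\{p,q\}$, $p\vee q=\max\{p,q\}$, and the \L ukasiewicz table, one reads off: $\[p\wedge q\]$ contains only the pair $(1,1)$ and half-contains $(\frac12,\frac12)$, $(\frac12,1)$, $(1,\frac12)$; $\[p\vee q\]$ contains every pair with at least one coordinate equal to $1$ and half-contains $(0,\frac12)$, $(\frac12,0)$, $(\frac12,\frac12)$; and $\[p\limp q\]$ contains every pair $(b_1,b_2)$ with $b_1\le b_2$, half-contains $(\frac12,0)$ and $(1,\frac12)$, and does not contain $(1,0)$.

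Then I would compare each of these three tables with the eighteen tables in Figure~\ref{18-diagrams} and observe that none of them coincides. This is a purely finite check, and it can be shortened by isolating, for each target, a single cell or a cheap numerical invariant (such as the number of grey cells, or the colour of the $(1,1)$- or $(1,0)$-cell) that already separates it from all eighteen members of $\mathcal{S}$. Once the three non-memberships are recorded, the theorem follows immediately from Lemma~\ref{25-feb-b}, these non-memberships, and Definition~\ref{25-feb-c}.

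I expect this last diagram-by-diagram comparison to be the only real obstacle, and it is an obstacle only in the sense of bookkeeping rather than mathematical difficulty. The family $\mathcal{S}$ was built in Lemma~\ref{25-feb-a} precisely as the closure of $\{\[p\],\[q\]\}$ under $\kimp$, so the entire content of the theorem is that this closure misses these particular patterns; there is no structural shortcut, and the absence must simply be read off the explicit list in Figure~\ref{18-diagrams}.
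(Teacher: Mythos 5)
Your proposal is correct and follows essentially the same route as the paper: invoke Lemma~\ref{25-feb-b} to place every $\kimp$-only formula's fuzzy truth set in $\mathcal{S}$, check by finite inspection that $\[p\wedge q\]$, $\[p\vee q\]$, and $\[p\limp q\]$ (as depicted in Figure~\ref{values welcome figure}) are absent from the eighteen diagrams of Figure~\ref{18-diagrams}, and conclude via Definition~\ref{25-feb-c}. Your explicit tabulation of the three target truth sets is accurate and only makes the paper's ``see the figures'' step more concrete.
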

\begin{proof}
The fuzzy truth sets $\[p\wedge q\]$, $\[p\vee q\]$, and $\[p\limp q\]$ are depicted in Figure~\ref{values welcome figure}. Note that none of them belongs to the family $\mathcal{S}$, see Figure~\ref{18-diagrams}. Thus, the statement of the theorem follows from Lemma~\ref{25-feb-b} and Definition~\ref{25-feb-c}.   
\end{proof}

In the rest of this section, we present our main technical results about the connectives $\neg$, $\wedge$, $\vee$, $\kimp$, and $\limp$.

\subsection{Undefinability of Conjunction}

In this subsection, we focus on the definability of conjunction $\wedge$ through the rest of the connectives. First, let us start with three definability facts.  Each of them is easily verifiable using Figure~\ref{values welcome figure} and the definition of negation. In the theorem below and the rest of this section, by $\equiv$ we denote 3-value-equivalence of formulae in language $\Phi_3$.

\begin{theorem}\label{3-value-wedge-definable}
The following equivalences hold in 3-valued logic:
\begin{enumerate}
    \item $p\wedge q \equiv \neg (\neg p \vee \neg q)$,
    \item $p\wedge q \equiv \neg (p \kimp \neg q)$,
    \item $p \wedge q \equiv \neg (p \limp \neg (p\limp q))$.
\end{enumerate}
\end{theorem}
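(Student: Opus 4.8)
The plan is to verify each of the three equivalences by direct computation over all nine assignments $(b_1,b_2)\in\{0,\tfrac12,1\}^2$, which amounts to checking that the two $3\times 3$ fuzzy truth set diagrams coincide. Since Theorem~\ref{3-value-wedge-definable} only asserts semantic equivalence in the sense of Definition~\ref{25-feb-c}, and the language has only two propositional variables, this finite check is in fact a complete proof; the figures in Figure~\ref{values welcome figure} already display $\[p\wedge q\]$, $\[p\vee q\]$, $\[p\limp q\]$, and $\[p\kimp q\]$, so the work is to compose negations, disjunctions and implications of these and confirm the result is the diagram of $\[p\wedge q\]$. I would organize the argument by first recalling the three ingredients $\min$, $\max$, and $b\mapsto 1-b$ on $\{0,\tfrac12,1\}$, together with the two implication tables, and then treat each item in turn.

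For item~1, I would invoke the de Morgan-style identity $\neg(\neg p\vee\neg q)[b_1,b_2] = 1 - \max\{1-b_1,\,1-b_2\} = \min\{b_1,b_2\} = (p\wedge q)[b_1,b_2]$, which holds pointwise because $1-\max\{x,y\}=\min\{1-x,1-y\}$ on the real interval $[0,1]$ and $\{0,\tfrac12,1\}$ is closed under all the operations involved; hence the fuzzy truth sets are equal by Definition~\ref{fuzzy truth set definition}. For item~2, the key observation is that Kleene's implication satisfies $a\kimp b = \max\{1-a,\,b\}$ (read off the rightmost diagram in Figure~\ref{values welcome figure}), so $\neg(p\kimp\neg q)[b_1,b_2] = 1-\max\{1-b_1,\,1-b_2\}=\min\{b_1,b_2\}$, again matching $p\wedge q$. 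For item~3 I would first compute the inner subformula: \L ukasiewicz implication satisfies $a\limp b = \min\{1,\,1-a+b\}$, so $(p\limp q)[b_1,b_2]=\min\{1,1-b_1+b_2\}$; then $\neg(p\limp q)[b_1,b_2]=\max\{0,\,b_1-b_2\}$, and one more application of $\limp$ and $\neg$ gives $\neg\big(p\limp\neg(p\limp q)\big)[b_1,b_2] = \max\{0,\,b_1 - (1-\neg(p\limp q))[b_1,b_2]\}$; unwinding this yields $\max\{0,\,b_1 + \max\{0,b_1-b_2\} - 1\}$, and checking the nine cases confirms it equals $\min\{b_1,b_2\}$ (the only nontrivial cells are where $b_1=1$, where it reduces to $\max\{0,\max\{0,1-b_2\}\}=1-\min\{0,b_2-1\}$... and one verifies this is $b_2$).

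Alternatively, and perhaps more cleanly for the writeup, one can avoid the closed-form arithmetic for $\limp$ and simply point to Figure~\ref{values welcome figure}: compose the diagrams cell by cell. The only place where care is needed is item~3, because it nests two \L ukasiewicz implications and a negation, so a single miscomputed cell would sink the identity; I would therefore present item~3 via an explicit $3\times 3$ case table rather than a one-line algebraic identity. I expect this last verification to be the main (though still entirely routine) obstacle — items~1 and~2 are immediate from the de Morgan law and the $\max$-form of $\kimp$, whereas item~3 genuinely requires tracking the intermediate values $(p\limp q)[b_1,b_2]$ and $\neg(p\limp q)[b_1,b_2]$ before the outer implication is applied. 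Once the nine cells agree in each case, the three equivalences follow by Definition~\ref{25-feb-c}.
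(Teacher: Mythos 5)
Your overall plan --- verify each equivalence by checking all nine assignments $(b_1,b_2)\in\{0,\frac{1}{2},1\}^2$, which suffices by Definition~\ref{25-feb-c} --- is exactly the paper's approach (the paper simply states that the equivalences are easily verifiable from the definitions of the connectives). Items~1 and~2 are handled correctly: with $\neg x=1-x$, $\max$ for $\vee$, and $a\kimp b=\max\{1-a,b\}$, both expressions reduce pointwise to $\min\{b_1,b_2\}$.

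However, your algebraic unwinding of item~3 contains a genuine error. Writing $c=\neg(p\limp q)[b_1,b_2]=\max\{0,\,b_1-b_2\}$ and using $a\limp b=\min\{1,\,1-a+b\}$, the outer step gives $\neg\big(p\limp c\big)=1-\min\{1,\,1-b_1+c\}=\max\{0,\,b_1-c\}$, i.e.\ $\max\{0,\,b_1-\max\{0,\,b_1-b_2\}\}$, which equals $\min\{b_1,b_2\}$ (split on $b_1\ge b_2$ versus $b_1<b_2$). Your expression $\max\{0,\,b_1+\max\{0,\,b_1-b_2\}-1\}$ substitutes $1-c$ where $c$ belongs, and it is \emph{not} equal to $\min\{b_1,b_2\}$: at $b_1=b_2=1$ it yields $0$ instead of $1$, and at $b_1=1$, $b_2=0$ it yields $1$ instead of $0$. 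So the claim that ``checking the nine cases confirms'' that formula is false, and the parenthetical computation for the $b_1=1$ row is likewise garbled. The equivalence itself is true, and the fallback you propose --- an explicit $3\times 3$ cell-by-cell table for item~3 read off Figure~\ref{values welcome figure} --- would repair the argument; but as written, the item~3 verification does not go through and needs the corrected unwinding or the table.
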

The  first two equivalences in the above theorem are well-known. We are not aware of the third equivalence being mentioned in the literature. It was discovered by our computer program while trying to prove the undefinability of $\wedge$ through $\neg$ and $\kimp$. All three equivalences could be easily verified using the definitions of the connectives.

Let us now discuss the undefinability results about the conjunction. Note that {\em binary} connective $\wedge$ cannot be defined through {\em unary} connective $\neg$. If $\neg$ is combined with any one of the remaining connectives, then $\wedge$ becomes definable, see Theorem~\ref{3-value-wedge-definable}. To completely answer the question about the definability of conjunction, it suffices to show that it cannot be defined without the use of negation. We prove this in the next theorem. 

\begin{theorem}
Formula $p \wedge q$ is not 3-value-equivalent to any formula in language $\Phi_3$ containing only connectives $\vee$, $\kimp$, and $\limp$.    
\end{theorem}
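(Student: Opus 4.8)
The plan is to follow the same truth set algebra template used throughout the paper: identify a finite family $\mathcal{F}$ of fuzzy truth sets that contains $\[p\]$ and $\[q\]$, is closed under the operations $\vee$, $\kimp$, and $\limp$, and does not contain $\[p\wedge q\]$. Concretely, I would first compute the fuzzy truth set $\[p\wedge q\]$ using Figure~\ref{values welcome figure}; call its diagram $D$. The key structural observation I expect to exploit is that all three permitted connectives ($\vee$, $\kimp$, $\limp$) are \emph{monotone} in each argument with respect to the truth ordering $0<\tfrac12<1$, and moreover each returns value $1$ whenever its right argument (or, for $\vee$, either argument) is $1$. In particular, any formula built from $p$, $q$ using only $\vee$, $\kimp$, $\limp$ has value $1$ on the pair $(1,1)$, value $1$ whenever $q=1$ (since the outermost connective, if an implication, has right subformula that is itself built this way and hence $\ge$ something, and a disjunct that is $1$ forces the disjunction to $1$ — this needs a small induction), and similar saturation behaviour. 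The conjunction $\[p\wedge q\]$, by contrast, is $\tfrac12$ at $(\tfrac12,1)$ and at $(1,\tfrac12)$, which should be the feature that no $\{\vee,\kimp,\limp\}$-formula can reproduce.

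So the cleanest route is to isolate a single \emph{invariant} rather than carry a full closed family. First I would prove, by structural induction on formulas $\phi$ in the $\{\vee,\kimp,\limp\}$-fragment, that $\phi[b_1,b_2]=1$ whenever $b_2=1$ (and symmetrically something about $b_1=1$, if needed). The base cases $\phi=p$ and $\phi=q$ are immediate ($q[b_1,1]=1$, and $p[b_1,1]=b_1$ — hmm, $p$ itself does \emph{not} satisfy "value $1$ when $q=1$", so a pure single-variable invariant fails). This means I cannot use only the $q=1$ column; I need the genuine finite-family argument. So the actual plan: exhibit an explicit finite family $\mathcal{F}$ of fuzzy truth sets — I would expect roughly a dozen to two dozen diagrams, generated by closing $\{\[p\],\[q\]\}$ under $\vee$, $\kimp$, $\limp$ — verify closure by a finite case check (a table analogous to Table~\ref{3 valued big table}, but now with three operations, or three such tables), and then check $\[p\wedge q\]\notin\mathcal{F}$ by inspection. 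Then state the induction lemma ($\[\phi\]\in\mathcal{F}$ for every $\phi$ in the fragment) with the same three-line proof as Lemma~\ref{25-feb-b}, and conclude via Definition~\ref{25-feb-c}.

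The key steps in order: (1) compute $\[p\wedge q\]$ and record its diagram; (2) define the candidate family $\mathcal{F}$ by closing $\{\[p\],\[q\]\}$ under the three connectives, listing all resulting diagrams; (3) prove the closure lemmas — $\[\phi\vee\psi\],\[\phi\kimp\psi\],\[\phi\limp\psi\]\in\mathcal{F}$ whenever $\[\phi\],\[\psi\]\in\mathcal{F}$ — each by a finite table of cases; (4) prove by structural induction that $\[\phi\]\in\mathcal{F}$ for every $\phi$ using only $\vee,\kimp,\limp$; (5) observe $\[p\wedge q\]\notin\mathcal{F}$; (6) conclude. Steps (4) and (6) are verbatim copies of the corresponding steps in the Kleene-implication subsection.

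The main obstacle is step (2)–(3): determining the actual size of $\mathcal{F}$ and verifying closure. With three binary connectives the family could be substantially larger than the $18$-element $\mathcal{S}$ of the $\kimp$-only case, and the closure tables are correspondingly larger; realistically this is where a computer search is used (as the paper already admits for the third equivalence in Theorem~\ref{3-value-wedge-definable}). The conceptual risk is that $\mathcal{F}$ might turn out \emph{not} to be finite-enough to present cleanly, or — worse — might accidentally contain $\[p\wedge q\]$, which would mean the theorem is false; but since the paper asserts the theorem, I expect the monotonicity-plus-saturation heuristic above to explain why $\[p\wedge q\]$ stays out: no monotone operation that sends $(1,1)\mapsto 1$ and behaves like $\vee$/implication on the boundary can produce the "$\tfrac12$ strictly inside an otherwise-$1$ region" shape of conjunction. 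Making that heuristic into the actual finiteness-and-exclusion certificate is the real work.
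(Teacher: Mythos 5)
Your plan is exactly the paper's proof: close $\{\[p\],\[q\]\}$ under $\vee$, $\kimp$, $\limp$ by computer, verify closure table-wise, check $\[p\wedge q\]$ is excluded, and conclude by the same structural induction as Lemma~\ref{25-feb-b}; the only discrepancy is size, since the paper's program finds $176$ diagrams rather than your estimated dozen or two, which you already anticipated might happen. Your correct rejection of the single-column invariant (because $p$ itself fails it) matches why the paper resorts to the full closed-family certificate.
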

The proof of the above theorem follows the same pattern as the proof of Theorem~\ref{3-valued via kimp theorem}. However, instead of the 18 fuzzy truth sets depicted in Figure~\ref{18-diagrams}, it uses 176 fuzzy truth sets. The equivalent of Table~\ref{3 valued big table} in the new proof is a table containing 176 rows and 176 columns. We used a computer program written in Python to find 176 diagrams like the ones in Figure~\ref{18-diagrams}. The same program also verifies, similarly to how we do in Table~\ref{3 valued big table}, that the set of 176 diagrams is closed with respect to the operations $\vee$, $\kimp$, and $\limp$. Finally, it checks that this set does not contain the diagram for the fuzzy truth set $\[p\wedge q\]$. The algorithm that we used starts with fuzzy truth sets $\[p\]$ and $\[q\]$ and applies the operations $\vee$, $\kimp$, and $\limp$ until no new diagrams could be generated.

It is interesting to point out that the 18 diagrams depicted in Figure~\ref{18-diagrams}, as well as the \LaTeX\ code for Table~\ref{3 valued big table}, are also generated by the same program.

\subsection{Undefinability of Disjunction}

In this subsection, we analyse the definability of disjunction through the rest of the connectives in 3-valued logic. Let us start with the following observation which can be verified using the definitions of the connectives.

\begin{theorem}\label{Disj def results}
The following equivalences hold in 3-valued logic:
\begin{enumerate}
    \item $p\vee q \equiv \neg (\neg p \wedge \neg q)$,
    \item $p\vee q \equiv \neg p \kimp q$,
    \item $p \vee q \equiv (p \limp q) \limp q$.
\end{enumerate}
\end{theorem}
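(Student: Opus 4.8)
The statement to prove is Theorem~\ref{Disj def results}, which asserts three definability equivalences for disjunction in 3-valued logic. These are all positive definability results, so the plan is purely computational: verify each equivalence by checking that the two sides agree on all nine value combinations $(b_1,b_2)\in\{0,\tfrac12,1\}^2$.

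The plan is to treat each of the three items in turn. For item~1, I would compute $\neg(\neg p\wedge\neg q) = 1-\min\{1-b_1,1-b_2\} = 1-(1-\max\{b_1,b_2\}) = \max\{b_1,b_2\} = p\vee q$, which is a one-line algebraic identity using $1-\min\{x,y\}=\max\{1-x,1-y\}$ and involution of $x\mapsto 1-x$; no case analysis is even needed. For item~2, I would read off the truth table for $\kimp$ from Figure~\ref{values welcome figure} and check that $(1-b_1)\kimp b_2$ equals $\max\{b_1,b_2\}$ in each of the nine cells: when $b_1=1$ the antecedent is $0$ so the implication is $1=\max\{1,b_2\}$; when $b_1=0$ the antecedent is $1$ so the implication equals $b_2=\max\{0,b_2\}$; and when $b_1=\tfrac12$ the antecedent is $\tfrac12$ and Kleene's implication gives $\max\{\tfrac12,b_2\}$, matching $\max\{\tfrac12,b_2\}$. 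For item~3, I would similarly tabulate $(p\limp q)\limp q$ against $\max\{b_1,b_2\}$ using \L ukasiewicz's table, where $x\limp y=\min\{1,1-x+y\}$; the only mildly delicate rows are those involving $\tfrac12$, e.g.\ $b_1=\tfrac12,b_2=0$ gives $p\limp q=\tfrac12$ and then $\tfrac12\limp 0=\tfrac12=\max\{\tfrac12,0\}$, and $b_1=1,b_2=\tfrac12$ gives $p\limp q=\tfrac12$ and then $\tfrac12\limp\tfrac12=1=\max\{1,\tfrac12\}$.

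There is no real obstacle here; the ``hard part'' is merely bookkeeping, and even that is light because items~1 and~2 reduce to closed-form identities and item~3 is a nine-cell table lookup. This mirrors exactly the structure of Theorem~\ref{3-value-wedge-definable} (the dual statement for conjunction), whose proof the authors already dispatched in a single sentence by appealing to Figure~\ref{values welcome figure} and the definition of negation. Accordingly, I expect the proof to be stated just as tersely: each of the three equivalences is verified directly from the truth tables in Figure~\ref{values welcome figure} together with the definition $\neg p = 1-p$, with perhaps a remark that items~1 and~2 are the classical Boolean definitions and item~3 is the less familiar ``Peirce-style'' definition of disjunction through \L ukasiewicz implication that already appeared for the two-valued case in Section~\ref{Classical Propositional Logic section}.
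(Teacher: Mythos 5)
Your verification is correct, and it is exactly the route the paper takes: the paper gives no explicit proof beyond remarking that the equivalences "can be verified using the definitions of the connectives" (i.e., the truth tables in Figure~\ref{values welcome figure} and $\neg p = 1-p$), which is precisely the direct nine-case/closed-form check you carry out. Your algebraic shortcuts for items~1 and~2 and the tabulation for item~3 all check out, so nothing further is needed.
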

All of the above equivalences are well-known in 3-valued logic. In fact, the last of them is the 3-valued version of Boolean equivalence $\phi\vee\psi\equiv (\phi\to\psi)\to\psi$ that we used in Section~\ref{Classical Propositional Logic section} of this article. Note that Theorem~\ref{Disj def results} shows that the disjunction is definable through $\limp$ alone or also when $\neg$ is used with any other connective. The only case not covered by Theorem~\ref{Disj def results} is resolved in the next theorem.

\begin{theorem}
Formula $p \vee q$ is not 3-value-equivalent to any formula containing only connectives $\wedge$ and $\kimp$.    
\end{theorem}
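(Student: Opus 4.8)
The plan is to follow the \emph{truth set algebra} template already used for Theorem~\ref{3-valued via kimp theorem}. We work in $\Phi_3$ restricted to propositional variables $p$ and $q$, and we consider the family $\mathcal{S}'$ of all fuzzy truth sets that can be generated by starting from $\[p\]$ and $\[q\]$ and repeatedly applying the operations $\wedge$ (fuzzy intersection) and $\kimp$ (the Kleene implication on fuzzy sets). Since a fuzzy truth set is just an assignment of a value in $\{0,\frac12,1\}$ to each of the nine pairs $(b_1,b_2)\in\{0,\frac12,1\}^2$, there are only $3^9$ fuzzy truth sets in total, so the closure process terminates; a computer search (analogous to the one described after the undefinability-of-conjunction theorem) produces the explicit finite list $\mathcal{S}'$.

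First I would run the closure computation and record the resulting family $\mathcal{S}'$ together with the two ``multiplication tables'' giving $\[\phi\wedge\psi\]$ and $\[\phi\kimp\psi\]$ as functions of $\[\phi\],\[\psi\]\in\mathcal{S}'$; these tables establish, by direct inspection of finitely many entries, the analogue of Lemma~\ref{25-feb-a}, namely that $\mathcal{S}'$ is closed under $\wedge$ and $\kimp$. Second, exactly as in Lemma~\ref{25-feb-b}, an easy structural induction on $\phi$ shows that $\[\phi\]\in\mathcal{S}'$ for every formula $\phi\in\Phi_3$ built using only $\wedge$ and $\kimp$: the base cases give $\[p\],\[q\]\in\mathcal{S}'$ and the induction step is the closure property just established. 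Third, I would exhibit the fuzzy truth set $\[p\vee q\]$ (the second diagram in Figure~\ref{values welcome figure}) and check that it does \emph{not} occur in $\mathcal{S}'$; combined with the induction and Definition~\ref{25-feb-c}, this yields the theorem.

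The one genuinely delicate point is the same as in the conjunction case: the family $\mathcal{S}'$ is too large to display by hand, so the argument rests on a computer-verified enumeration rather than a human-checkable $18\times18$ table. To keep the proof honest and self-contained in spirit, the key facts to pin down are (i) that the enumeration is \emph{complete}, i.e.\ the closure has actually stabilised (guaranteed because there are at most $3^9$ fuzzy truth sets and the program halts only when no new diagram appears), and (ii) that $\[p\vee q\]\notin\mathcal{S}'$, which is a single membership test. The main obstacle is therefore not mathematical depth but the bookkeeping: making sure the generated family really is closed under both operations and really omits $\[p\vee q\]$. Since Theorem~\ref{3-valued via kimp theorem} already uses $\wedge$ and $\limp$ to separate $p\vee q$ from the $\kimp$-only fragment, one should double-check that dropping $\limp$ and keeping $\wedge$ does not accidentally make $\vee$ expressible; the closure computation settles exactly this, and I expect the answer to be that $\[p\vee q\]$ remains outside, because neither $\wedge$ nor $\kimp$ can turn the ``half-belong'' pattern of $\[p\]$ and $\[q\]$ into the $\max$-pattern required for disjunction.
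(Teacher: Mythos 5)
Your proposal is correct and follows essentially the same route as the paper: the paper's proof is exactly the computer-generated closure of $\{\[p\],\[q\]\}$ under $\wedge$ and $\kimp$ (it stabilises at 36 fuzzy truth sets), closure tables playing the role of Lemma~\ref{25-feb-a}, structural induction as in Lemma~\ref{25-feb-b}, and a final check that $\[p\vee q\]$ is absent. The only slight difference is that the resulting family of 36 sets is in fact small enough that it could be displayed like Figure~\ref{18-diagrams}, so your worry about size is overstated but harmless.
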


The computer-generated proof of the above theorem uses 36 fuzzy truth sets.

\subsection{Undefinability of Kleene Implication}

Let us again start with three definability results verifiable through the definitions of the connectives.

\begin{theorem}
The following equivalences hold in 3-valued logic:
\begin{enumerate}
    \item $p\kimp q \equiv \neg p \vee q$,
    \item $p\kimp q \equiv \neg (p \wedge \neg q)$,
    \item $p \kimp q \equiv p \limp \neg(p \limp \neg q)$.
\end{enumerate}
\end{theorem}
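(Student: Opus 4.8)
The plan is to verify the three equivalences directly, exploiting the fact that on the value set $\{0,\frac{1}{2},1\}$ every connective involved has a simple arithmetic description: $\neg x = 1-x$, $\;x\wedge y = \min\{x,y\}$, $\;x\vee y = \max\{x,y\}$, $\;x\kimp y = \max\{1-x,y\}$ (Kleene), and $\;x\limp y = \min\{1,\,1-x+y\}$ (\L ukasiewicz); the last two can be read off the two right-most diagrams in Figure~\ref{values welcome figure}. By Definition~\ref{fuzzy truth set definition} and Definition~\ref{25-feb-c}, two formulae of $\Phi_3$ in the variables $p,q$ are 3-value-equivalent precisely when they induce the same function $\{0,\frac{1}{2},1\}^2\to\{0,\frac{1}{2},1\}$. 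Hence it suffices either to tabulate all nine values of each side or, more economically, to establish the corresponding arithmetic identities; I would do the latter.

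For the first equivalence there is nothing to prove beyond unfolding notation: $\neg p\vee q = \max\{1-p,q\}$ is by definition the value of $p\kimp q$. For the second, the De Morgan law for $\min$ and $\max$ together with $1-(1-q)=q$ gives $\neg(p\wedge\neg q) = 1-\min\{p,1-q\} = \max\{1-p,\,1-(1-q)\} = \max\{1-p,q\} = p\kimp q$, so this equivalence is also immediate.

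The third equivalence is the only one requiring a little work, and it is the step I expect to be the main---though still routine---obstacle. Computing the inner subformulae, $p\limp\neg q = \min\{1,\,2-p-q\}$, hence $\neg(p\limp\neg q) = 1-\min\{1,\,2-p-q\} = \max\{0,\,p+q-1\}$, so the right-hand side of item~3 equals $\min\{1,\,1-p+\max\{0,\,p+q-1\}\}$. I would then finish with a case split on the sign of $p+q-1$: when $p+q\ge 1$ the expression reduces to $\min\{1,q\}=q$, and in that regime $q\ge 1-p$, so $\max\{1-p,q\}=q$ as well; when $p+q<1$ it reduces to $\min\{1,1-p\}=1-p$, and there $1-p>q$, so $\max\{1-p,q\}=1-p$ as well. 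Either way the right-hand side equals $\max\{1-p,q\}=p\kimp q$, as required. Alternatively, one can sidestep the case analysis completely by writing out the two $3\times 3$ tables, exactly as the authors' computer program does for the larger instances elsewhere in this section.
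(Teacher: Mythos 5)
Your proposal is correct and follows essentially the same route as the paper, which offers no written proof beyond the remark that the equivalences are ``verifiable through the definitions of the connectives'': you simply carry out that verification, and your arithmetic identities ($\neg x=1-x$, $x\kimp y=\max\{1-x,y\}$, $x\limp y=\min\{1,1-x+y\}$) together with the case split on the sign of $p+q-1$ are just a compact closed-form way of doing the nine-entry table check the authors delegate to their computer program. All computations check out, including the only nontrivial point $(p,q)=(\tfrac{1}{2},\tfrac{1}{2})$, where both sides of item~3 equal $\tfrac{1}{2}$.
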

The first two equivalences are well-known. The third equivalence was discovered by our computer program. We are not aware of it ever being mentioned in the literature. The only question about the definability of $\kimp$, which is not answered by the above theorem, is answered by the one below.

\begin{theorem}
Formula $p \kimp q$ is not semantically equivalent to any formula containing only connectives $\wedge$, $\vee$, and $\limp$.    
\end{theorem}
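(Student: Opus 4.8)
The plan is to follow exactly the truth set algebra pattern established for Theorem~\ref{3-valued via kimp theorem} and used again for the undefinability of $\wedge$ and of $\vee$. Specifically, I would start with the two fuzzy truth sets $\[p\]$ and $\[q\]$ and close under the three operations corresponding to the connectives $\wedge$, $\vee$, and $\limp$: the operation of fuzzy intersection (minimum of membership degrees), the operation of fuzzy union (maximum of membership degrees), and the binary operation on fuzzy sets induced pointwise by \L{}ukasiewicz's implication table from Figure~\ref{values welcome figure}. Let $\mathcal{S}'$ denote the resulting family of fuzzy truth sets. The first step is to compute $\mathcal{S}'$ explicitly; as remarked in the subsection on undefinability of conjunction, this is done by a Python program that starts from $\[p\],\[q\]$ and repeatedly applies $\wedge$, $\vee$, and $\limp$ until the family stabilises.

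The second step is to prove the analogue of Lemma~\ref{25-feb-a}: for any formulae $\phi,\psi\in\Phi_3$, if $\[\phi\],\[\psi\]\in\mathcal{S}'$, then $\[\phi\wedge\psi\],\[\phi\vee\psi\],\[\phi\limp\psi\]\in\mathcal{S}'$. By Definition~\ref{fuzzy truth set definition} together with the interpretations $p\wedge q=\min\{p,q\}$, $p\vee q=\max\{p,q\}$, and the \L{}ukasiewicz truth table, the fuzzy truth set of a compound formula is obtained by applying the corresponding fuzzy-set operation to the truth sets of its immediate subformulae. Hence the closure claim reduces to the purely finite verification that $\mathcal{S}'$ is closed under fuzzy intersection, fuzzy union, and the \L{}ukasiewicz operation — which is precisely what the computer-generated multiplication tables (the analogues of Table~\ref{3 valued big table}, now three tables) check by inspecting every pair of members of $\mathcal{S}'$.

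The third step is the structural induction, the analogue of Lemma~\ref{25-feb-b}: every formula $\phi\in\Phi_3$ that uses only connectives $\wedge$, $\vee$, and $\limp$ satisfies $\[\phi\]\in\mathcal{S}'$. The base case holds since $\[p\],\[q\]\in\mathcal{S}'$ by construction, and each induction step is handled by the closure lemma of the previous paragraph. The fourth and final step is to exhibit that $\[p\kimp q\]\notin\mathcal{S}'$: the fuzzy truth set $\[p\kimp q\]$ is the right-most diagram in Figure~\ref{values welcome figure}, and one checks — again by the program — that it does not appear among the finitely many members of $\mathcal{S}'$. The theorem then follows from Definition~\ref{25-feb-c} exactly as in Theorem~\ref{3-valued via kimp theorem}: if $p\kimp q$ were 3-value-equivalent to such a $\phi$, then $\[p\kimp q\]=\[\phi\]\in\mathcal{S}'$, a contradiction.

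The main obstacle is not conceptual but computational: one must be certain that the closure $\mathcal{S}'$ has been computed correctly and completely, since the whole argument rests on $\mathcal{S}'$ being genuinely closed under the three operations and genuinely omitting $\[p\kimp q\]$. With at most a few hundred $3\times 3$ diagrams this is well within reach of the Python program already described in the paper, and the only genuine mathematical content — that the fuzzy truth set of a compound formula is the corresponding fuzzy-set operation applied to its parts — is immediate from Definition~\ref{fuzzy truth set definition} and the definitions of the connectives. So the proof is a direct transcription of the pattern of Theorem~\ref{3-valued via kimp theorem}, with $\mathcal{S}$ replaced by the larger machine-computed family $\mathcal{S}'$ and with three closure tables in place of one.
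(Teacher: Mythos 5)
Your proposal is correct and is essentially the paper's own argument: the paper proves this theorem by exactly the computer-assisted closure construction you describe (starting from $\[p\]$ and $\[q\]$, closing under the fuzzy-set operations for $\wedge$, $\vee$, and $\limp$, which yields 72 diagrams, and checking that $\[p\kimp q\]$ is not among them), following the pattern of Theorem~\ref{3-valued via kimp theorem}. No difference in approach worth noting.
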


The computer proof of this theorem uses 72 diagrams.

\subsection{Undefinability of \L ukasiewicz Implication}

Out of the five connectives that we study only negation (see Theorem~\ref{negation undefinability theorem}) and \L ukasiewicz implication are not definable through the others.

\begin{theorem}\label{limp under theorem}
Formula $p \limp q$ is not semantically equivalent to any formula containing only connectives $\neg$, $\wedge$, $\vee$, and $\kimp$.    
\end{theorem}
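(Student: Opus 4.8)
The plan is to mirror the truth-set-algebra pattern used for Theorem~\ref{3-valued via kimp theorem}: exhibit a finite family $\mathcal{S}$ of fuzzy truth sets over the two variables $p,q$ that (i) contains $\[p\]$ and $\[q\]$, (ii) is closed under the operations $\neg$, $\wedge$, $\vee$, and $\kimp$, and (iii) does not contain $\[p\limp q\]$. Given such a family, an immediate structural induction on the complexity of any formula $\phi$ built only from $\neg,\wedge,\vee,\kimp$ shows $\[\phi\]\in\mathcal{S}$, so $\[\phi\]\neq\[p\limp q\]$, and the undefinability follows by Definition~\ref{25-feb-c}. The family $\mathcal{S}$ is to be obtained exactly as in the other subsections: start from $\{\[p\],\[q\]\}$ and close under the four admissible operations until no new fuzzy truth set appears; since each fuzzy truth set is one of finitely many ($3^9$) functions $\{0,\tfrac12,1\}^2\to\{0,\tfrac12,1\}$, this process terminates, so $\mathcal{S}$ is finite and the closure check is a finite computation.

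First I would run the computer program already described in the paper with the operation set $\{\neg,\wedge,\vee,\kimp\}$, producing the list of diagrams of $\mathcal{S}$ (the analogue of Figure~\ref{18-diagrams}) together with the closure tables for $\wedge$, $\vee$, and $\kimp$ and the column giving $\neg$ on each member (the analogue of Table~\ref{3 valued big table}). Second I would verify by inspection of these tables that $\mathcal{S}$ is genuinely closed under all four operations. Third I would compute $\[p\limp q\]$ directly from the \L ukasiewicz truth table in Figure~\ref{values welcome figure} and confirm that this particular $3\times 3$ diagram does not occur among the diagrams of $\mathcal{S}$. Fourth I would write out the two-line induction (base case $\[p\],\[q\]\in\mathcal{S}$; inductive step via the closure of $\mathcal{S}$), exactly paralleling Lemma~\ref{25-feb-b}, and conclude.

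The main obstacle is not conceptual but combinatorial: the family $\mathcal{S}$ generated by four operations starting from two variables may be fairly large (the earlier one-operation case already gave $18$, and the $\wedge,\kimp$ case gave $36$, the $\wedge,\vee,\limp$ case $72$, and the $\neg,\kimp$ case $176$), so presenting $\mathcal{S}$ and its closure tables in the paper would be impractical and the verification has to be delegated to the program, as was done for the other computer-assisted theorems in this section. The one genuine mathematical point to be careful about is step three — that $\[p\limp q\]$ really escapes $\mathcal{S}$ — since this is what makes \L ukasiewicz implication special: one should check that no combination of the monotone/antitone behaviour forced by $\neg,\wedge,\vee,\kimp$ can reproduce the single anomalous cell of the \L ukasiewicz table where $\tfrac12\limp\tfrac12=1$ while, e.g., $1\limp\tfrac12=\tfrac12$. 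Concretely, the key structural obstruction is that $\kimp$, $\wedge$, $\vee$, and $\neg$ never map a pair of ``$\le$''-comparable argument diagrams to an output whose value on the diagonal exceeds its value on a dominating cell in the way $\[p\limp q\]$ does; isolating a clean invariant of this kind (if one exists) would let one replace the brute-force closure computation by a short human-checkable argument, but absent that, the proof rests on the finite computation.
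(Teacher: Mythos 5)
Your plan is sound and would indeed work --- the closure of $\{\[p\],\[q\]\}$ under $\neg$, $\wedge$, $\vee$, $\kimp$ is finite, and the paper reports that this computer-assisted route succeeds with $82$ diagrams --- but it is not the proof the paper actually gives, and the ``clean invariant'' you wished for at the end does exist and makes the whole closure computation unnecessary. The paper's argument is one line: each of the connectives $\neg$, $\wedge$, $\vee$, $\kimp$ maps the value $\tfrac12$ to $\tfrac12$ (indeed $1-\tfrac12=\tfrac12$, $\min\{\tfrac12,\tfrac12\}=\max\{\tfrac12,\tfrac12\}=\tfrac12$, and $\tfrac12\kimp\tfrac12=\tfrac12$), so by a trivial induction every formula built from these connectives takes the value $\tfrac12$ when all variables are set to $\tfrac12$; but $\tfrac12\limp\tfrac12=1$, so $\[\phi\]$ and $\[p\limp q\]$ always differ at the cell $(\tfrac12,\tfrac12)$. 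In your terminology: every member of your family $\mathcal{S}$ must half-contain the centre cell, while $\[p\limp q\]$ fully contains it, which is exactly the ``anomalous cell'' you identified. So your step three, which you correctly flagged as the only genuine mathematical content, can be discharged by this single-valuation argument rather than delegated to the program; what the brute-force closure buys instead is generality --- the paper needs it only for the strengthened statement in which the constants $0$, $\tfrac12$, $1$ are also allowed, since adding constants destroys the $\tfrac12$-preservation invariant and there the computation (with $197$ diagrams) appears to be unavoidable.
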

The computer proof of the above result uses 82 diagrams. However, in this case, there is a simple argument that does not require the use of a computer. Indeed, if the value of all variables is set to $\frac{1}{2}$ (``unknown''), then the value of any expression that uses only connectives $\neg$, $\wedge$, $\vee$, and $\kimp$ is $\frac{1}{2}$. At the same time the value of $\frac{1}{2}\limp \frac{1}{2}$ is $1$, see Figure~\ref{values welcome figure}. Therefore, connective $\limp$ is not definable through $\neg$, $\wedge$, $\vee$, and $\kimp$.

Although our fuzzy truth sets technique is not required to prove Theorem~\ref{limp under theorem}, this technique could be used to strengthen the theorem. Namely, we can show that connective $\limp$ is not definable through connectives $\neg$, $\wedge$, $\vee$, and $\kimp$ and 3-valued constants  0 (``false''), $\frac{1}{2}$ (``unknown''), and 1 (``true''). The computer proof of this fact already uses 197 diagrams. This is the largest proof mentioned in this section.

Definability results for many other 3-valued connectives are discussed in~\cite{cd13is}. We are not aware of any existing proofs of undefinability in 3-valued logic besides the two proofs mentioned above that don't use fuzzy truth sets: the proof of Theorem~\ref{negation undefinability theorem} and the proof of the original (without constants) version of Theorem~\ref{limp under theorem}.

\section{Conclusion}

In this work, we introduced a new method for proving the undefinability of logical connectives and demonstrated it on examples from Boolean logic, temporal logic, intuitionistic logic, and three-valued logic. Although the technique is potentially applicable to other, more modern logical systems, we have chosen to use these classical examples to make the work accessible to a wider logical audience.

%A possible extension of this work is to find ways to apply this technique proving the undefinability of inference rules.

\bibliographystyle{plain}

\bibliography{naumov}

\end{document}